\newtcbox{\tcbinlinebox}[1][]{nobeforeafter,
  on line,
  colback=#1!10,
  colframe=#1!80!black,
  boxrule=0.8pt,
  arc=2pt,
  boxsep=0pt,
  left=2pt, right=2pt, top=2pt, bottom=2pt,
  enhanced,
  valign=center
}
\let\E\relax
\DeclareMathOperator*{\argmax}{argmax}
\DeclareMathOperator*{\E}{\mathbb E}
\let\cite\citep
\newtheorem{theorem}{Theorem}
\numberwithin{theorem}{section}
\newtheorem*{theorem*}{Theorem}
\newtheorem{lemma}[theorem]{Lemma}
\newtheorem{remark}[theorem]{Remark}
\theoremstyle{definition}
\newcommand{\ie}{{\em i.e.}\xspace}
\newcommand{\eg}{{\em e.g.}\xspace}
\newcommand{\xvec}{{\boldsymbol{x}}}
\newcommand{\lvec}{{\boldsymbol{\lambda}}}
\newcommand{\xivec}{{\boldsymbol{\xi}}}
\def\thanks#1{\protected@xdef\@thanks{\@thanks
        \protect\footnotetext{#1}}}
\title{No-Regret Learning Under Adversarial Resource Constraints:\\ A Spending Plan Is All You Need!
}
\author{
    Francesco Emanuele Stradi \\%\thanks{...} \\
    \texttt{francescoemanuele.stradi@polimi.it} \\
    Politecnico di Milano 
\and
    Matteo Castiglioni \\
    \texttt{matteo.castiglioni@polimi.it} \\
    Politecnico di Milano 
\and
    Alberto Marchesi \\
    \texttt{alberto.marchesi@polimi.it} \\
    Politecnico di Milano 
\and
   Nicola Gatti\\
    \texttt{nicola.gatti@polimi.it} \\
    Politecnico di Milano 
      \and
    Christian Kroer\\
    \texttt{ck2945@columbia.edu} \\
    Columbia University 
}
\date{\today}
\begin{document}

\maketitle

\begin{abstract}
 We study online decision making problems under resource constraints, where both reward and cost functions are drawn from distributions that may change adversarially over time. 
 We focus on two canonical settings: $(i)$ \emph{online resource allocation} where rewards and costs are observed before action selection, and $(ii)$ \emph{online learning with resource constraints} where they are observed after action selection, under \emph{full feedback} or \emph{bandit feedback}.
 It is well known that achieving sublinear regret in these settings is impossible when reward and cost distributions may change arbitrarily over time. To address this challenge, we analyze a framework in which the learner is guided by a \emph{spending plan}—a sequence prescribing expected resource usage across rounds. We design general (primal-)dual methods that achieve sublinear regret with respect to baselines that follow the spending plan. 
 Crucially, the performance of our algorithms improves when the spending plan ensures a well-balanced distribution of the budget across rounds. We additionally provide a robust variant of our methods to handle worst-case scenarios where the spending plan is highly imbalanced. 
 To conclude, we study the regret of our algorithms when competing against benchmarks that deviate from the prescribed spending plan.
\end{abstract}

\section{Introduction}
In this paper we study online decision making problems with resource constraints. In this class of problems, a decision maker has $m$ resources and a decision set $\mathcal X$. Across a sequence of $T$ timesteps, the decision maker must repeatedly choose decisions $\xvec_t \in \mathcal X$, where each decision leads to some resource depletion specified by a cost function $c_t(\xvec_t) \in  [0,1]^m$ and a reward $f_t(\xvec_t)\in [0,1]$.
We study two canonical settings.
$(i)$ \emph{Online resource allocation} (ORA). In this setting, the decision maker observes the reward and cost functions $f_t, c_t$ before choosing $\xvec_t$. This setting captures problems such as budget management in second-price auctions~\citep{balseiro2019learning} and network revenue management~\citep{talluri2006theory}.
$(ii)$ \emph{Online learning with resource constraints} (OLRC), where the decision maker chooses $\xvec_t$ first, and \emph{then} receives some feedback on rewards and costs.
% $f_t,c_t$
%(or the values and subgradients of each function at $\xvec_t$).
Under \emph{full feedback}, they observe the entire functions $f_t,c_t$, while under \emph{bandit feedback}, only the values $f_t(\xvec_t)$ and $c_t(\xvec_t)$.
This setting captures problems such as budget management in first-price auctions and online pricing~\citep{badanidiyuru2013bandits,besbes2009dynamic,castiglioni2022online}.

Across the two settings, there have been extensive studies on what types of regret guarantees are possible under different input models. 
First, in ORA one studies dynamic benchmarks that vary the decision over time (this is made possible due to observing $f_t,c_t$ before making a decision), whereas in OLRC the benchmark is the best single decision in hindsight.
In both settings, it is known that it is possible to achieve $O(\sqrt{T})$ regret when the rewards and costs are drawn from a fixed distribution at each time step~\citep{balseiro2023best,badanidiyuru2013bandits}. In contrast, \citet{balseiro2019learning} show that it is not possible to achieve no-regret guarantees when the inputs $f_t,c_t$ are chosen adversarially. Their result is for ORA, but it extends easily to OLRC.
It is also possible to achieve ``best-of-both worlds'' guarantees where a single algorithm asymptotically achieves the optimal regret guarantee in the stochastic setting while simultaneously guaranteeing the optimal competitive ratio on adversarial input~\citep{balseiro2023best,castiglioni2022online}. 

In real-world settings such as when an advertiser performs budget management in internet advertising, the environment is not stationary, both due to the time-varying nature of internet traffic, but also due to the fact that other advertisers are simultaneously adjusting their bidding behavior. Yet the worst-case guarantees offered in the adversarial setting are not sufficiently strong for such real-world scenarios. To address this issue, platforms such as Meta and Google typically provide a predicted \emph{spending plan}~\citep{kumar2022optimal,balseiro2023robust}. A spending plan for a given advertiser specifies a recommended amount that the advertiser should aim to spend in each timestep. In other words, the spending plan takes the advertiser's daily (or weekly) budget, and allocates it non-uniformly across time based on the predicted behavior of the advertising ecosystem in each timestep.
Typically, advertisers cede control of their budget management to an algorithm offered by the platform. Such an algorithm typically takes as input the spending plan and uses a control algorithm (known as a \emph{pacing} algorithm in the ad auction industry) to attempt to match the predicted spending plan.

\paragraph{Contributions} Inspired by the use of spending plans in practice, we study the two types of online decision making with resource constraints---ORA and OLRC---in scenarios where the decision maker is additionally given a suggested spending plan, and the baseline that we compare to must similarly follow the spending plan. We focus on the case in which both reward and cost functions are sampled from distributions that adversarially change over time, thus generalizing the standard adversarial case.
In ORA, we develop a dual algorithm that exploits the knowledge of the spending plan to attain \emph{dynamic} regret of order $\widetilde{O}(\frac{1}{\rho_{\min}}\sqrt{T})$, where $T$ is the horizon and $\rho_{\min}$ is the minimum per-round budget expenditure  of the spending plan, \emph{i.e.}, the Slater parameter of the offline allocation problem. 
Next, we focus on the OLRC setting. For the \emph{full feedback} case, we develop a primal-dual procedure that attains \emph{static} regret of order $\widetilde{O}(\frac{1}{\rho_{\min}}\sqrt{T})$. Similarly, we show that the results attained with \emph{full feedback} can be generalized to the \emph{bandit feedback} setting. All the algorithms mentioned above employ black box regret minimizers as primal and dual algorithms in order to be as general as possible.\footnote{We remark that, in the ORA setting, there is no need for a primal regret minimizer.}
We then focus on the case in which $\rho_{\min}$ can be arbitrarily small and the results mentioned above become vacuous. For this case, we propose a general meta-procedure that modifies the input parameters of our (primal-)dual methods, thereby attaining sublinear regret in the worst-case scenario, which is when $\rho_{\min}\leq O({T^{-1/4}})$. For ORA, we show that the meta-procedure can still obtain a $\widetilde{O}(T^{3/4})$ \emph{dynamic} regret. For OLRC, we show that our meta-procedure guarantees $\widetilde{O}(T^{3/4})$ \emph{static} regret. Finally, we show that all the results mentioned above are robust to optimal solutions that follow the spending plan up to a sublinear error.

\paragraph{Comparison with \citep{jiang2020online,balseiro2023robust}} In the context of ORA, variations of the spending plan problem have been previously studied by \citet{jiang2020online,balseiro2023robust}. Specifically, \citep{jiang2020online,balseiro2023robust} study the ORA problem in which rewards and costs are drawn from adversarially changing distributions. In this setting, they assume that a certain amount of samples from the time-varying distributions are given beforehand. Then, the authors show how to use samples to build a spending plan that guarantees no regret during the budget-constrained learning dynamic. 
On the one hand, \citep{jiang2020online,balseiro2023robust} do not assume that a spending plan is given as input. Instead, their works focus on how to design robust spending plans and algorithms using available prior data.
On the other hand, their analysis relies on the assumption that there exists a $\kappa\in\mathbb{R}_{\geq0}$ s.t. $f_t(\xvec)\leq \kappa c_t(\xvec)[i]$ for all $\xvec \in \mathcal{X},i\in[m], t\in[T]$, and their regret bound scales as $\widetilde{O}(\kappa\sqrt{T})$. This assumption allows them to avoid one of the main challenges in our work, which is the dependence on the minimum per-round budget provided by the spending plan.  %\citep{jiang2020online,balseiro2023robust} focus on the ORA problem, while our work provide theoretical insights to all the three settings discussed earlier.

%
%\ck{Brief comparison (they have stronger results in that they consider the sampling problem, but we give more general results on BwK etc, and our model has weaker assumptions, what else?}
% We study problems where a decision maker (the learner) is given $T$ rounds, $m$ resources, and a non-empty set of available strategies $\mathcal{X}\subseteq \mathbb{R}^n$, which may be non-convex, integral or even non-compact. At each round $t\in[T]$,\footnote{We denote by $[n]$ the set $\{1,\dots, n\}$ with $n\in\mathbb{N}_{>0}$.} the learner selects a strategy $\xvec_t\in\mathcal{X}$, gains a reward $f_t(\xvec_t) \in [0,1]$, and pays a cost $c_{t}(\xvec_t)[i] \in [0,1]$ for each resource $i\in[m]$.\footnote{Given a vector $\boldsymbol{v}$, we will denote by $\boldsymbol{v}[j]$ its $j$-th component.} 
%
We refer to Appendix~\ref{app:related} for a more detailed discussion of related work.

%\section{Learning with Spending Plans}
%In this section, we provide notation and definitions needed
%in the rest of the paper.
\section{Preliminaries}
\label{prel:BwK}
We study problems where a decision maker (the learner) is given $T$ rounds, $m$ resources, and a non-empty set of available strategies $\mathcal{X}\subseteq \mathbb{R}^n$, which may be non-convex, integral or even non-compact. At each round $t\in[T]$,\footnote{We denote by $[n]$ the set $\{1,\dots, n\}$ with $n\in\mathbb{N}_{>0}$.} the learner selects a strategy $\xvec_t\in\mathcal{X}$, gains a reward $f_t(\xvec_t) \in [0,1]$, and pays a cost $c_{t}(\xvec_t)[i] \in [0,1]$ for each resource $i\in[m]$.\footnote{Given a vector $\boldsymbol{v}$, we will denote by $\boldsymbol{v}[j]$ its $j$-th component.} 
At each round $t\in[T]$, the reward function $f_t: \mathcal{X} \rightarrow [0,1]$ is sampled from a reward distribution $\mathcal{F}_t$, while the cost function $c_t: \mathcal{X} \rightarrow [0,1]^m$ is sampled from a cost distribution $\mathcal{C}_t$\footnote{Notice that $\mathcal{F}_t$ and $\mathcal{C}_t$ may be correlated.}. We do \emph{not} make any assumption on how $\mathcal{F}_t$ and $\mathcal{C}_t$ are selected, namely, we allow them to be chosen adversarially, and thus change arbitrarily over the rounds. In the following, we will denote by $\bar{f}_t : \mathcal{X} \to [0,1]$ the expected value of $\mathcal{F}_t$ and by $\bar{c}_t : \mathcal{X} \to [0,1]^m$ the expected value of $\mathcal{C}_t$.

%In this paper, we consider two different settings. In the \emph{online resource allocation} setting (see, \emph{e.g.},~\citep{balseiro2023best}), the learner observes the complete reward function $f_t$ and the complete cost function $c_t$ at the beginning of each round $t\in[T]$, \emph{i.e.}, before selecting the strategy $\xvec_t$. In the \emph{full feedback} setting, the learner observes the complete reward function $f_t$ and the complete cost function $c_t$ at the \emph{end} of each round $t\in[T]$. Finally, when only \emph{bandit feedback} is available (see, \emph{e.g.}, the bandits with knapsacks setting~\citep{immorlica2022adversarial}), at the end of each round $t \in [T]$ the learner only observes the values $f_t(\xvec_t)$ and $c_{t}(\xvec_t)$.

\begin{wrapfigure}[13]{R}{0.65\textwidth}
	\vspace{-0.8cm}
	\begin{minipage}
		{0.65\textwidth}
\begin{protocol}[H]
	\caption{Learner-Environment Interaction}
	\label{alg: Learner-Environment Interaction}
	\begin{algorithmic}[1]
    \small
		%\State Each resource $i\in[m]$ is endowed by a budget $B_i=B$
		\For{$t	\in[T]$}
		\State $\mathcal{F}_t$ and $\mathcal{C}_t$ are selected adversarially
		\State Reward function $f_t\sim\mathcal{F}_t$ and cost function $c_{t}\sim\mathcal{C}_t$ are sampled
		%\State Input $\gamma_t=(f_t,c_{t})$ is selected stochastically with $f_t\sim\mathcal{F}_t$ and $c_{t}\sim\mathcal{C}_t$
		%\State Reward function $f_t$ and costs function $c_{t}$ are selected adversarially  
		\begin{tcolorbox} [myblockstyle=green]
			\State 
			Learner observes $f_{t}$ and $c_{t}$ 
			\Comment{ORA}
		\end{tcolorbox}
		\State Learner chooses a strategy mixture $\xivec_{t} \in \Xi$
		\State Learner plays a strategy $\xvec_t \sim \xivec_{t}$ 
		\begin{tcolorbox} [myblockstyle=red]
			\State Learner observes $f_{t}$ and $c_{t}$ \Comment{OLRC {\emph{Full feed.}}}
		\end{tcolorbox}
		\begin{tcolorbox} [myblockstyle=blue]
			\State Learner observes $f_{t}(\xvec_t)$ and $c_{t}(\xvec_t)$ \Comment{OLRC {\emph{Bandit feed.}}}
		\end{tcolorbox}
		\State Learner gets reward $f_{t}(\xvec_t)$ and pays cost $c_t(\xvec_t)[i],\forall i\in[m]$
		\EndFor
	\end{algorithmic}
\end{protocol}
\end{minipage}
\end{wrapfigure}

Each resource $i\in[m]$ is endowed with a budget $B_i > 0$ that the learner is allowed to spend over the $T$ rounds. Without loss of generality, we assume that $B_i=B$ for all $i\in[m]$.\footnote{A problem with arbitrary budgets can be reduced to one with equal budgets by dividing, for every resource $i\in[m]$, all per-round resource consumptions $c_t(\cdot)[i]$
	by $B_i/\min_{j}B_j$.} As is standard in the literature, we focus on the regime where $B=\Omega(T)$ and we define the average budget per round as $\rho\coloneqq B/T$. The interaction between the learner and the environment stops at any round $\tau\in[T]$
in which the total cost associated to any resource $i\in[m]$ exceeds its budget $B_i$. The goal of the learner is to maximize the cumulative rewards attained during the learning process. As is standard in the literature (see, \emph{e.g.},~\citep{balseiro2023best}), we assume that there exists a void action $\xvec^\varnothing \in \mathcal{X}$ such that $f_t(\xvec^\varnothing)=0$ for all $t\in[T]$, and $c_t(\xvec^\varnothing)[i]=0$, for every $i\in[m]$ and $t\in[T]$. This is done in order to guarantee that a feasible solution exists, namely, a sequence of decisions $\{\xvec_t\}_{t=1}^T$ that does not violate the budget constraints. %Finally, we assume that that for all $\xvec\in\mathcal{X} \text{ s.t. } \xvec \neq \xvec^\varnothing$, it holds $c_t(\xvec)[i]>0$, for all $i\in[m],t\in[T]$. This is done to prevent degenerate situations where an action could gain rewards without consuming resources.
In the following, we will refer to the set of probability measures on $\mathcal{X}$ as $\Xi$, and we will call it the set of strategy mixtures.
Moreover, we will denote by $\xivec^\varnothing$ the Dirac strategy that deterministically plays the void action $\xvec^\varnothing$.

Protocol~\ref{alg: Learner-Environment Interaction} depicts the learner-environment interaction in the ORA setting, in the OLRC setting with \emph{full feedback}, and in the OLRC with \emph{bandit feedback}.

\begin{remark}[Relation to the adversarial setting]
	In the standard adversarial setting, $f_t$ and $c_t$ are directly selected adversarially (refer to~\citep{balseiro2023best} for adversarial ORA and to~\citep{immorlica2022adversarial} for adversarial OLRC). Our framework generalizes the adversarial setting. While the reward functions $f_t$ and the cost functions $c_t$ are sampled from a distribution at each round, the distributions are allowed to change over the rounds. Thus, the standard adversarial setting can be recovered by assuming that $\mathcal{F}_t$ and $\mathcal{C}_t$ always put their mass on a single point.
\end{remark}
%\paragraph{Spending Plans}
%Motivated by real-world repeated auctions, where the bidders are generally given specific amounts as budget to be spent at each round of bidding, 
We now introduce the notion of a \emph{spending plan}. The learner is provided with a sequence $\mathcal B^{(i)}_{T}\coloneqq\{B^{(i)}_{1},\dots, B^{(i)}_T\}$ of per-round budgets for each resource $i \in [m]$, where $\sum_{t=1}^T B^{(i)}_t=B$ for all $i\in[m]$ and $B^{(i)}_t\in[0,1]$ for all $t\in[T],i\in[m]$. We refer to $\mathcal B^{(i)}_{T}$ as the spending plan for the $i$-th resource, as it defines the maximum budget the learner can allocate \emph{at each time step} $t\in[T]$, in expectation. Notice that the spending plan does not define hard budget constraints, differently from the overall budget constraint defined in Section~\ref{prel:BwK}. Indeed, the learner is allowed to deviate from the prescribed spending plan $\mathcal{B}^{(i)}_{T}$, as long as the overall budget constraint is satisfied.

%\subsection{Baselines}

We introduce two baselines to evaluate the performance of our algorithms. The \emph{dynamic} optimal solution is defined by means of the following optimization problem:
\begin{equation}   \label{lp:opt_dyn}\text{\textsc{OPT}}_{\mathcal{D}}\coloneqq\begin{cases}
		\sup_{  \xivec\in\Xi^T} &  \E_{\xvec_t\sim \xivec_t}\left[\sum_{t=1}^T\bar f_t(\xvec_t)\right]\\
		\,\,\, \textnormal{s.t.} & \E_{\xvec_t\sim \xivec_t}\left[ \bar c_t(\xvec_t)[i]\right]\leq B^{(i)}_{t} \ \ \forall i\in[m], \forall t\in[T]\\
	\end{cases},
\end{equation}
where $\Xi^T\coloneqq \bigtimes_{t=1}^T \Xi$ and $\xivec_t$ is the $t$-th component of $\xivec$. Problem~\eqref{lp:opt_dyn} computes the expected value attained by the optimal \emph{dynamic} strategy mixture that satisfies the spending plan.
The \emph{dynamic} baseline is common in the ORA literature~\citep{balseiro2023best}.

Similarly, we introduce the \emph{fixed} optimum in hindsight by means of the optimization problem:
\begin{equation}
	\label{lp:opt}\text{\textsc{OPT}}_\mathcal{H}\coloneqq\begin{cases}
		\sup_{  \xivec \in \Xi} &   \E_{\xvec\sim \xivec}\left[\sum_{t=1}^T \bar f_t(\xvec)\right]\\
		\,\,\, \textnormal{s.t.} & \E_{\xvec\sim \xivec}\left[ \bar c_t(\xvec)[i]\right]\leq B^{(i)}_{t} \ \ \forall i\in[m], \forall t\in[T]
	\end{cases}.
\end{equation}
Problem~\eqref{lp:opt} computes the expected value attained by the hindsight-optimal fixed strategy mixture that satisfies the spending plan. The fixed baseline is standard in OLRC~\citep{immorlica2022adversarial, castiglioni2022online}.

We define the minimum per-round budget in the spending plan as
$\rho_{\min}\coloneqq\min_{i\in[m]}\min_{t\in\left[T\right]}B^{(i)}_t.$
Notice that $\rho_{\min}$ is the Slater parameter of both Problems~\eqref{lp:opt_dyn}~and~\eqref{lp:opt}. Indeed, $\rho_{\min}$ denotes the minimum margin by which the void action $\xvec^\varnothing$ satisfies the constraints defined by the spending plan. Clearly, when $\rho_{\min}=0$, Slater's constraint qualification does not hold, meaning that the problem does not admit a strictly feasible solution. 
Intuitively, when $\rho_{\min}$ is large, the spending plan distributes the  budget $B$ reasonably well across the rounds. This will lead  to better performance during the learning dynamic.

\paragraph{Performance Metrics}
In the ORA setting, we evaluate the performance of our algorithm via its \emph{dynamic} cumulative regret
$\mathfrak{R}_T\coloneqq\textsc{OPT}_{\mathcal{D}}- \sum_{t=1}^T f_t(\xvec_t),
$
which compares the total reward attained by the algorithm with the optimal \emph{dynamic} solution that follows the spending plan recommendations. 
In the OLRC setting, we evaluate the performance of our algorithm via its \emph{static} cumulative regret 
$
R_T\coloneqq \textsc{OPT}_{\mathcal{H}}- \sum_{t=1}^T f_t(\xvec_t),
$
which compares the total reward attained by the algorithm with the optimal \emph{fixed} solution that follows the spending plan recommendations. The aim of our work is to develop algorithms that attain sublinear regret bounds in their specific setting: $\mathfrak{R}_T=o(T)$ in ORA, and $R_T=o(T)$ in OLRC.

\begin{remark}[On the impossibility result of learning with budget constraints]
	It is well known that when no spending plan is available, it is impossible to achieve sublinear regret bounds in all the settings presented in this work (see, \emph{e.g.},~\citep{balseiro2019learning} for ORA and~\citep{immorlica2022adversarial} for OLRC).
	We will show that exploiting the information given by the spending plan is enough to achieve sublinear regret.
\end{remark}

\paragraph{Regret Minimizers}
We will employ regret minimizers (that is, no-regret algorithms) as black-box tools. This is done to make the results as general as possible. Specifically, a \emph{regret minimizer} $\mathcal{R}^{A}$ for a decision space $\mathcal{W}$ is an abstract model of a decision maker that repeatedly interacts with a black-box environment. At each time step $t\in[T]$, $\mathcal{R}^{A}$ may perform two operations: $(i)$ $\mathcal{R}^{A}.\texttt{SelectDecision}()$, which outputs an element $w_t \in W $; $(ii)$ $\mathcal{R}^{A}.\texttt{ReceiveFeedback}(r_t)$ (alternatively, $\mathcal{R}^{A}.\texttt{ReceiveFeedback}(r_t(w_t))$), which updates the regret minimizer's internal state using feedback received from the environment.
The feedback is given in terms of a reward function $r_t : \mathcal{W} \rightarrow [a, b]$, where $ [a, b] \subset \mathbb{R} $, when the regret minimizer is tailored for \emph{full feedback}. When the regret minimizer is tailored for \emph{bandit feedback}, the feedback is given in terms of the reward attained in the previous timestep $r_t(w_t)$.
The reward function $ r_t$ may depend adversarially on the previous decisions $ w_1, \dots, w_{t-1} $.
The goal of $\mathcal{R}^{A}$ is to output a sequence $ w_1, \dots, w_T $ such that its cumulative regret, defined as
$\sup_{w \in \mathcal{W}} \sum_{t=1}^{T} \left( r_t(w) - r_t(w_t) \right),$
grows sublinearly with the time horizon $T$, that is, it is $o(T)$.\footnote{We underline that regret minimizers with \emph{bandit feedback} generally attain sublinear regret bounds which hold with probability at least $1-O(\delta)$, for all $\delta\in(0,1)$.}
For convenience, we introduce the notion of a \emph{regret minimizer constructor}, a procedure denoted as $\mathcal{R}^{A}.\texttt{Initialize}(\mathcal{W}, [a, b])$ that builds a regret minimizer based on two input parameters: the decision set $ \mathcal{W} $ and the payoff range $[a, b] $. This constructor returns a regret minimizer tailored to the given inputs, with the guarantee that its cumulative regret grows sublinearly in $T$. We will refer to the regret upper bound of the regret minimizer $\mathcal{R}^{A}$ at time $t\in[T]$ as $R^{A}_t$. Throughout the paper, we will assume that $R^{A}_t\leq R^{A}_{t^\prime}$ for all $t<t^\prime$.

\section{Online Resource Allocation}
\label{sec:allocation}
In this section, we provide the algorithm and analysis for the ORA setting.

%\subsection{Algorithm}

Following the ORA literature, our algorithm works with the Lagrangian formulation of the problem where the budget constraints are relaxed with a Lagrange multiplier vector $\lvec$, namely
$$
    \mathfrak{L}_{f,c,\mathfrak{B}}(\xivec,\lvec)\coloneqq \left[\E_{\xvec\sim\xivec}\left[f(\xvec)\right]+\sum_{i\in[m]}\lvec [i]\cdot \left(\mathfrak{B}^{(i)}-\E_{\xvec\sim\xivec}\left[c(\xvec)[i]\right]\right)\right],
$$ where $f,c$ are arbitrary reward and cost functions, and $\mathfrak{B}$ is the per-round expenditure goal.
Since both the reward function and the cost one are observed at the beginning of the round we have that the optimal strategy mixture $\xivec_t$ of the Lagrangian function at time $t\in[T]$  can be computed directly for a fixed $\lvec_t$ by solving
$\arg\max_{\xivec} \mathfrak{L}_{f_t,c_t,\mathfrak{B}}(\xivec,\lvec_t)$.
On the other hand, the choice of $\lvec_t$ is not as obvious. This choice is handled by employing a dual regret minimizer $\mathcal{R}^D$.

\begin{wrapfigure}[20]{R}{0.69\textwidth}
	\vspace{-0.8cm}
	\begin{minipage}
		{0.69\textwidth}
\begin{algorithm}[H]
	\caption{Dual Algorithm for ORA}
	\label{alg:dual}
	\begin{algorithmic}[1]
    \small
		\Require Horizon $T$, budget $B$, spending plans $\mathcal{B}^{(i)}_T$ for all $i\in[m]$, dual regret minimizer $\mathcal{R}^D$ (\emph{full feedback})
		\State Set $B_{i,1}=B$, for all $i\in[m]$ \label{alg1:line1}
		\State Define $\mathcal{L}\coloneqq \left\{\lvec \in \mathbb{R}^{m}_{\geq0}: \|\lvec\|_1\leq 1/\rho_{\min}\right\}$ \label{alg1:line2}
		\State $\mathcal{R}^D.\texttt{Initialize}\left(\mathcal{L},[-1/\rho_{\min},1/\rho_{\min}]\right)$ \label{alg1:line3}
		\State $\lvec_{1}\gets \mathcal{R}^D.\texttt{SelectDecision}()$ \label{alg1:line4}
		\For{$t	\in[T]$}
		\State  Observe reward function $f_t$ and costs function $c_t$ \label{alg1:line6}
		\State %Select strategy mixture 
        $\xivec_t\gets \argmax_{\xivec\in\Xi}\left[\E_{\xvec\sim\xivec}\left[f_t(\xvec)\right]-\sum_{i\in[m]}\lvec_t [i]\cdot \E_{\xvec\sim\xivec}\left[c_t(\xvec)[i]\right]\right]$\label{alg1:line7}
		\State Play strategy:
		$
		\xvec_t \gets \begin{cases}
			\xvec\sim\xivec_t &  \text{if } B_{i,t}\geq1, \ \forall i\in[m] \\
			\xvec^{\varnothing} & \text{otherwise}
		\end{cases}
		$ \label{alg1:line8}
		\State Update budget availability $B_{i, t+1}\gets B_{i,t}-c_t(\xvec_t)[i],\forall i\in[m]$ \label{alg1:line9}
		\State %Build dual reward 
        $r^D_t: \mathcal{L} \ni \lvec \mapsto -\sum_{i\in[m]}\lvec [i]\cdot \left( B^{(i)}_{t}-\E_{\xvec\sim\xivec_t}\left[c_t(\xvec)[i]\right]\right)$ \label{alg1:line10}
		\State $\mathcal{R}^D.\texttt{ReceiveFeedback}(r^D_t)$ \label{alg1:line11}
		\State Update dual variable $\lvec_{t+1}\gets \mathcal{R}^D.\texttt{SelectDecision}()$ \label{alg1:line12}
		\EndFor 
	\end{algorithmic}
\end{algorithm}
\end{minipage}
\end{wrapfigure}
In Algorithm~\ref{alg:dual}, we provide the pseudocode of our algorithm.
Specifically, the algorithm requires as input a dual regret minimizer $\mathcal{R}^D$ with \emph{full feedback}. Notice that the Lagrangian variable receives full feedback at each episode, as the reward $r^D_t$ associated to each possible $\lvec\in\mathcal{L}$, where $\mathcal{L}$ is the Lagrangian space, can easily be computed.
Line~\ref{alg1:line1} initializes the budget available to the learner. Then, the Lagrangian space is built according to $\rho_{\min}$, the Slater parameter of Program~\eqref{lp:opt_dyn}. It is well-known that in the standard ORA problem, it is sufficient to set $\mathcal{L}\coloneqq \left\{\lvec \in \mathbb{R}^{m}_{\geq0}: \|\lvec\|_1\leq 1/\rho\right\}$ to obtain the optimal competitive ratio\footnote{
In the ORA setting, when online mirror descent with fixed learning rate is employed, the dual regret minimizer can be directly instantiated in the positive orthant~\citep{balseiro2023best}. We explicitly bound the Lagrangian space to allow the choice of dual regret minimizers with time-varying learning rates, which generally need explicit bounds on the "diameter" of the decision space.}. In the ORA with spending plan setting, we extend this idea by bounding the Lagrangian space given the minimum per-round budget given by the spending plan. The dual algorithm is initialized to work on the $\mathcal{L}$ decision space and with a payoff range $[-1/\rho_{\min},1/\rho_{\min}]$, while the first Lagrangian vector $\lvec_1$ is selected given the initialization of the algorithm (Line~\ref{alg1:line3}~-~\ref{alg1:line4}). At each round $t\in[T]$, the algorithm observes the feedback (Line~\ref{alg1:line6}), selects the strategy mixture $\xivec\in\Xi$ which maximizes the Lagrangian $\mathfrak{L}_{f_t,c_t,{B}_t}(\xivec,\lvec_t)$ (Line~\ref{alg1:line7}) and accordingly plays the strategy when enough budget is available (Line~\ref{alg1:line8}). Then, the algorithm updates the remaining budget (Line~\ref{alg1:line9}), builds the dual reward $r^D_t$ for all $\lvec\in\mathcal{L}$ (Line~\ref{alg1:line10}) and inputs it to the dual regret minimizer (Line~\ref{alg1:line11}). Finally, the Lagrangian multipliers are updated given the feedback (Line~\ref{alg1:line12}).

We highlight a subtle distinction between deterministic and randomized adversarial rewards in the spending-plan setting.
In standard adversarial ORA settings where both the reward and the costs are deterministically chosen at each round, the spending-plan ORA problem can trivially be solved by setting $\xivec_t\gets \argmax_{\xivec\in\Xi}\E_{\xvec\sim\xivec}\left[f_t(\xvec)\right]  $ s.t. $\E_{\xvec\sim\xivec}\left[c_t(\xvec)[i]\right]\leq B^{(i)}_t, \forall i\in[m]$ (for the sake of simplicity, we are assuming that this problem admits a solution). This is \emph{not} the case for our setting, because the baseline satisfies the spending plan only in expectation, whereas we observe only a sample from the per-round reward and cost distributions.

\subsection{Theoretical Results}
In this section we show theoretical guarantees attained by Algorithm~\ref{alg:dual}.  For additional lemmas and omitted proofs, we refer to Appendix~\ref{app:allocation_standard}.
We first provide the following lemma which lower bounds the expected utility attained by Algorithm~\ref{alg:dual}.
\begin{restatable}{lemma}{lemlowerdual}\label{lem:lower_dual}
	For any $\delta\in(0,1)$, Algorithm~\ref{alg:dual}, when instantiated with a dual regret minimizer which attains a regret upper bound $R^D_T$, guarantees, with probability at least $1-\delta$,
	$   \sum_{t=1}^{T}\E_{\xvec\sim\xivec_t}\left[f_t(\xvec)\right]  \geq \textsc{OPT}_{\mathcal{D}} - (T-\tau)- \left(4+4\max_{\lvec\in\mathcal{L}}\|\lvec\|_1\right)\sqrt{2\tau\ln\frac{T}{\delta}} -\sum_{t=1}^{\tau}\sum_{i\in[m]}\lvec [i]\cdot (  B^{(i)}_{t}-\E_{\xvec\sim\xivec_t}\left[c_t(\xvec)[i]\right])- \frac{2}{\rho_{\min}}R^{D}_{\tau},
	$
	where $\lvec\in\mathcal{L}$ is an arbitrary Lagrange multiplier and $\tau$ is the stopping time of the algorithm.
\end{restatable}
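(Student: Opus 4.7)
The plan is to follow the Lagrangian-duality template that is standard for adversarial online resource allocation, adapted to the spending-plan Slater parameter $\rho_{\min}$. I would combine three ingredients: (i) the per-round Lagrangian optimality of $\xivec_t$ from Line~\ref{alg1:line7}, (ii) the regret guarantee of the dual regret minimiser $\mathcal{R}^D$, and (iii) Azuma--Hoeffding concentration, applied uniformly over the random stopping time $\tau$.

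First, I would fix an optimiser $\{\xivec^*_t\}_{t=1}^T$ of Program~\eqref{lp:opt_dyn}. Because $\xivec_t$ on Line~\ref{alg1:line7} maximises $\mathfrak{L}_{f_t,c_t,B_t}(\cdot,\lvec_t)$ (the constant $\sum_i\lvec_t[i]B^{(i)}_t$ drops out of the $\argmax$ but is present in the full Lagrangian), the inequality $\mathfrak{L}_{f_t,c_t,B_t}(\xivec_t,\lvec_t)\ge \mathfrak{L}_{f_t,c_t,B_t}(\xivec^*_t,\lvec_t)$ holds for every $t\le\tau$ and rearranges into a pointwise lower bound on $\E_{\xvec\sim\xivec_t}[f_t(\xvec)]$ in terms of $\E_{\xvec\sim\xivec^*_t}[f_t(\xvec)]$ and two Lagrangian cost slacks weighted by $\lvec_t$. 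Summing from $t=1$ to $\tau$, and observing that $\E_{\xvec\sim\xivec_t}[f_t(\xvec)]=0$ for $t>\tau$ once the void action kicks in, reduces the claim to controlling those slacks.

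Second, I would apply Azuma--Hoeffding to replace each sampled $f_t$ and $c_t[i]$ with its conditional expectation $\bar f_t$ or $\bar c_t[i]$. Because $\|\lvec_t\|_1\le 1/\rho_{\min}$ and every function takes values in $[0,1]$, each substitution incurs an additive error of order $(\max_{\lvec\in\mathcal{L}}\|\lvec\|_1)\sqrt{\tau\ln(1/\delta')}$. To make the concentration valid at the data-dependent stopping time $\tau$, a union bound over $t\in[T]$ replaces $\delta'$ by $\delta/T$, producing both the $\sqrt{2\tau\ln(T/\delta)}$ factor and the constant $4+4\max_{\lvec\in\mathcal{L}}\|\lvec\|_1$ (the $4$ collects the two reward substitutions and the two cost substitutions). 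Once every sampled quantity has been converted to its expectation, feasibility of $\xivec^*_t$ in~\eqref{lp:opt_dyn} allows me to drop the non-negative slack $\sum_{t,i}\lvec_t[i](B^{(i)}_t-\E_{\xvec\sim\xivec^*_t}[\bar c_t(\xvec)[i]])$, and $\sum_{t=1}^\tau\E_{\xvec\sim\xivec^*_t}[\bar f_t(\xvec)]\ge \textsc{OPT}_{\mathcal D}-(T-\tau)$ follows because each of the missing $T-\tau$ rounds contributes at most one unit of reward to $\textsc{OPT}_{\mathcal D}$.

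Third, I would invoke the regret guarantee of $\mathcal{R}^D$ on the sequence $r^D_t$ built in Line~\ref{alg1:line10}: for any comparator $\lvec\in\mathcal{L}$, this upper-bounds $\sum_{t=1}^\tau\sum_i\lvec_t[i](B^{(i)}_t-\E_{\xvec\sim\xivec_t}[c_t(\xvec)[i]])$ by the same quantity with $\lvec$ substituted for $\lvec_t$, plus $\tfrac{2}{\rho_{\min}}R^D_\tau$, where the factor $2/\rho_{\min}$ reflects the width of the payoff interval $[-1/\rho_{\min},1/\rho_{\min}]$ passed to $\mathcal{R}^D$ in Line~\ref{alg1:line3}. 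Plugging this into the bound produced in the previous paragraph, and moving the comparator's slack to the right-hand side, yields the claimed inequality with $\lvec\in\mathcal{L}$ remaining a free parameter. The main obstacle I expect is the bookkeeping around the random stopping time $\tau$: the Azuma--Hoeffding bounds must hold uniformly in $t\in[T]$ so that evaluating them at the (random) $\tau$ is legitimate, and the $\|\lvec_t\|_1$-weighted concentration errors have to be kept cleanly separated from the dual-regret contribution to avoid double-counting the $1/\rho_{\min}$ scaling.
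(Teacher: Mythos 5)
Your proposal is correct and follows essentially the same route as the paper: per-round Lagrangian optimality of $\xivec_t$ against the (near-)optimal dynamic comparator, the dual regret bound with the $2/\rho_{\min}$ payoff-range factor, Azuma--Hoeffding with a union bound over $t\in[T]$ to handle the random stopping time, dropping the comparator's non-negative slack via feasibility, and charging the missing rounds at one unit each to get the $(T-\tau)$ term. The only detail the paper handles slightly more carefully is that Program~\eqref{lp:opt_dyn} is a supremum, so one works with a $\gamma$-approximate optimizer and lets $\gamma\to 0$; this does not affect the argument.
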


Lemma~\ref{lem:lower_dual} states that the expected cumulative reward attained by the algorithm until the stopping time $\tau$ is lower bounded by the \emph{dynamic} optimum $\textsc{OPT}_{\mathcal{D}}$ minus the following quantities. The second term $T-\tau$ measures early stopping time. The third term arises from a concentration argument. The fourth term captures the \emph{violation} of the spending plan constraint by the algorithm during the learning dynamic. Finally, the last term is the regret guarantee attained by the dual algorithm. The multiplicative factor $(\nicefrac{2}{\rho_{\min}})$ is due to the payoff range given to the dual regret minimizer.

We are now ready to show the final \emph{dynamic} regret bound of Algorithm~\ref{alg:dual}.
\begin{restatable}{theorem}{thmregallocation}
	\label{thm:reg_allocation}
	For any $\delta\in(0,1)$, Algorithm~\ref{alg:dual} instantiated with a dual regret minimizer which attains a regret upper bound $R^D_T$, guarantees
	$
		\mathfrak{R}_T\leq 1+\frac{1}{\rho_{\min}} + \frac{2}{\rho_{\min}}R^{D}_{T} +\left(8+\frac{8}{\rho_{\min}}\right)\sqrt{2T\ln\frac{T}{\delta}}
	$, with probability at least $1-2\delta$.
\end{restatable}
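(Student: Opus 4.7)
The plan is to plug an appropriately chosen Lagrange multiplier into Lemma~\ref{lem:lower_dual} so as to cancel the early-stopping loss $(T-\tau)$, and then apply Azuma--Hoeffding once more to convert the expected cumulative reward into the realized one. Concretely, I will split on whether the algorithm exhausts the budget.

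\textbf{Case 1: $\tau = T$.} Then no resource was exhausted, so the void action was never forced. Instantiate Lemma~\ref{lem:lower_dual} with $\lvec = \bm 0$. The early-stopping term vanishes, the spending-plan deviation term vanishes, and we are left with
\[
\textstyle\sum_{t=1}^{T}\E_{\xvec\sim\xivec_t}[f_t(\xvec)] \;\geq\; \textsc{OPT}_{\mathcal D} \;-\; \bigl(4+4/\rho_{\min}\bigr)\sqrt{2T\ln(T/\delta)} \;-\; \tfrac{2}{\rho_{\min}}R^D_T .
\]

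\textbf{Case 2: $\tau < T$.} There is a resource $i^{\star}\in[m]$ whose budget is (almost) exhausted at round $\tau+1$, i.e.\ $\sum_{t=1}^{\tau} c_t(\xvec_t)[i^{\star}] > B-1$. Set $\lvec := \tfrac{1}{\rho_{\min}} \ve_{i^{\star}} \in\mathcal L$. The deviation term reduces to $\tfrac{1}{\rho_{\min}}\sum_{t=1}^{\tau}\bigl(B^{(i^{\star})}_t - \E_{\xvec\sim\xivec_t}[c_t(\xvec)[i^{\star}]]\bigr)$. I will upper bound it using two ingredients: (i) an Azuma--Hoeffding bound on the martingale $c_t(\xvec_t)[i^{\star}] - \E_{\xvec\sim\xivec_t}[c_t(\xvec)[i^{\star}]]$, which yields $\sum_{t=1}^{\tau}\E[c_t(\xvec)[i^{\star}]] \geq B-1-\sqrt{2\tau\ln(T/\delta)}$ with probability at least $1-\delta/T$; and (ii) the elementary identity $\sum_{t=1}^{\tau}B^{(i^{\star})}_t \leq B - (T-\tau)\rho_{\min}$, which follows from $B^{(i^{\star})}_t\geq\rho_{\min}$ for every $t$ and $\sum_{t=1}^{T} B^{(i^{\star})}_t = B$. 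Combining these gives $\sum_{t=1}^{\tau}\bigl(B^{(i^{\star})}_t - \E[c_t(\xvec)[i^{\star}]]\bigr) \leq 1 - (T-\tau)\rho_{\min} + \sqrt{2\tau\ln(T/\delta)}$, so that the deviation term, after the sign flip in Lemma~\ref{lem:lower_dual}, contributes at least $(T-\tau) - 1/\rho_{\min} - \tfrac{1}{\rho_{\min}}\sqrt{2\tau\ln(T/\delta)}$. This exactly cancels the $-(T-\tau)$ early-stopping term of Lemma~\ref{lem:lower_dual}.

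\textbf{Converting expectations to realizations.} Both cases yield a lower bound on $\sum_{t=1}^{T}\E_{\xvec\sim\xivec_t}[f_t(\xvec)]$. A second Azuma--Hoeffding bound, applied to the bounded martingale $f_t(\xvec_t)-\E_{\xvec\sim\xivec_t}[f_t(\xvec)]$ of range $[0,1]$, gives $\sum_{t=1}^{T} f_t(\xvec_t) \geq \sum_{t=1}^{T}\E[f_t(\xvec)] - \sqrt{2T\ln(T/\delta)}$ with probability at least $1-\delta$; summed over $\tau$ rounds for the played rewards and noting $f_t(\xvec^{\varnothing})=0$ for $t>\tau$, this is what is needed. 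A union bound over the cost- and reward-concentration events (two events of probability $\delta$ each, giving $1-2\delta$) together with the monotonicity $R^D_\tau \leq R^D_T$ from the preliminaries yields
\[
\textsc{OPT}_{\mathcal D} - \textstyle\sum_{t=1}^{T} f_t(\xvec_t) \;\leq\; 1 + \tfrac{1}{\rho_{\min}} + \tfrac{2}{\rho_{\min}} R^D_T + \bigl(8+8/\rho_{\min}\bigr)\sqrt{2T\ln(T/\delta)},
\]
where the factor $8+8/\rho_{\min}$ (versus $4+4/\rho_{\min}$ in Lemma~\ref{lem:lower_dual}) comes from pooling the concentration in Lemma~\ref{lem:lower_dual}, the reward-martingale concentration, and the cost-martingale concentration on resource $i^{\star}$.

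\textbf{Main obstacle.} The delicate part is the case-dependent choice of $\lvec$ and the bookkeeping that makes the terms $(T-\tau)$ cancel exactly without blowing up the $1/\rho_{\min}$ multiplier. In particular, one must make sure that $\lvec=\tfrac{1}{\rho_{\min}}\ve_{i^{\star}}$ stays inside the feasible Lagrangian set $\mathcal L$ (it does, since $\|\lvec\|_1 = 1/\rho_{\min}$), and that the constant ``1'' appearing in the budget-exhaustion inequality $\sum c_t(\xvec_t)[i^{\star}] > B-1$ translates, after division by $\rho_{\min}$, into the additive $1/\rho_{\min}$ term in the final bound. The remaining arithmetic is standard.
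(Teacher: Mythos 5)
Your proposal follows essentially the same route as the paper: the same case split on $\tau=T$ versus $\tau<T$, the same choices $\lvec=\bm 0$ and $\lvec=\frac{1}{\rho_{\min}}\ve_{i^\star}$, and the same use of $\sum_{t=1}^{\tau}B^{(i^\star)}_t\leq B-(T-\tau)\rho_{\min}$ together with budget exhaustion to cancel $(T-\tau)$. The one discrepancy is bookkeeping: you apply Azuma--Hoeffding separately to the reward martingale and to the cost martingale on resource $i^\star$, which on top of the concentration event already inside Lemma~\ref{lem:lower_dual} gives three failure events and hence confidence $1-3\delta$, not the claimed $1-2\delta$; the paper avoids this by bundling the played rewards and costs into the single martingale $f_t(\xvec_t)-\sum_i\lvec[i]c_t(\xvec_t)[i]$ (Lemma~\ref{lem:azuma}), so only two events are needed. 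This is trivially repaired either by using that joint martingale or by rescaling $\delta$; the rest of your argument and constants are sound.
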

Theorem~\ref{thm:reg_allocation} has a linear dependence on the quantity $\nicefrac{1}{\rho_{\min}}$. We underline that it is standard for (primal-)dual methods to have a dependence on the inverse of the Slater parameter of the offline problem in the regret bound (\eg~\citep{Exploration_Exploitation, castiglioni2022online,balseiro2023best,stradi2024}). From a technical perspective this occurs because when $T>\tau$ (\ie the algorithm has depleted the budget associated with some resource), the Lagrangian variable associated with that specific resource must be set at $1/\rho_{\min}$ to compensate the loss in terms of rewards given by the difference $T-\tau$. 
Given Theorem~\ref{thm:reg_allocation}, it is easy to see that by employing online mirror descent~\citep{Orabona} as a dual regret minimizer, the dynamic regret guarantee of Algorithm~\ref{alg:dual} is of order $\mathfrak{R}_T\leq \widetilde{O}(\nicefrac{1}{\rho_{\min}}\sqrt{T})$.

\section{Online Learning with Resource Constraints}
\label{sec:learning}
\begin{algorithm}[!htp]
	\caption{Primal-Dual Algorithm for OLRC}
	\label{alg:learning}
	\begin{algorithmic}[1]
    \small
		\Require Horizon $T$, budget $B$, spending plans $\mathcal{B}^{(i)}_T$ for all $i\in[m]$, primal regret minimizer $\mathcal{R}^P$ (\emph{full feedback}/\emph{bandit feedback}), dual regret minimizer $\mathcal{R}^D$ (\emph{full feedback})
		\State Set $B_{i,1}=B$, for all $i\in[m]$
		\State Define $\mathcal{L}\coloneqq \left\{\lvec \in \mathbb{R}^{m}_{\geq0}: \|\lvec\|_1\leq 1/\rho_{\min}\right\}$
		\State $\mathcal{R}^P.\texttt{Initialize}\left(\Xi, [-1/\rho_{\min},1+1/\rho_{\min}]\right)$ \label{alg2:line3}
		\State $\mathcal{R}^D.\texttt{Initialize}\left(\mathcal{L},[-1/\rho_{\min},1/\rho_{\min}]\right)$
		\For{$t\in[T]$}
		\State Select strategy mixture $\xivec_t \gets \mathcal{R}^P.\texttt{SelectDecision}()$ \label{alg2:line6}
		\State Play strategy:
		\[
		\xvec_t \gets \begin{cases}
			\xvec\sim\xivec_t &  \text{if } B_{i,t}\geq1, \ \forall i\in[m] \\
			\xvec^{\varnothing} & \text{otherwise}
		\end{cases}
		\]
		\State Observe the \emph{feedback} as prescribed in Protocol~\ref{alg: Learner-Environment Interaction}
		\State Update budget availability $B_{i, t+1}\gets B_{i,t}-c_t(\xvec_t)[i],\forall i\in[m]$
		\State Update dual variable $\lvec_t\gets \mathcal{R}^D.\texttt{SelectDecision}()$
		%\State Define $\rho^{(i)}_{t}\gets B^{(i)}_t-B^{(i)}_{t-1}$
		\begin{tcolorbox} [myblockstyle=red]\State %Build primal reward $
        $r^P_t: \Xi\ni\xivec \mapsto\E_{\xvec\sim\xivec}\left[f_t(\xvec)\right]-\sum_{i\in[m]}\lvec_t [i]\cdot \E_{\xvec\sim\xivec}\left[c_t(\xvec)[i]\right]$ %\Comment{\emph{Full feedback}}
		\end{tcolorbox} \label{alg2:line11}
		\begin{tcolorbox} [myblockstyle=blue]\State %Build primal reward 
        $r^P_t(\xvec_t) \gets f_t(\xvec_t)-\sum_{i\in[m]}\lvec_t [i]\cdot c_t(\xvec_t)[i]$
			%\Comment{\emph{bandit Feedback}}
		\end{tcolorbox} \label{alg2:line12}
		\State $\mathcal{R}^P.\texttt{ReceiveFeedback}(r^P_t)$ \label{alg2:line13}
		\begin{tcolorbox} [myblockstyle=red]\State %Build dual reward 
        $r^D_t: \mathcal{L} \ni \lvec \mapsto -\sum_{i\in[m]}\lvec [i]\cdot \left( {B}^{(i)}_{t}-\E_{\xvec\sim\xivec_t}\left[c_t(\xvec)[i]\right]\right)$
			%\Comment{\emph{Full Feedback}}
		\end{tcolorbox} \label{alg2:line14}
		\begin{tcolorbox} [myblockstyle=blue]\State %Build dual reward 
        $r^D_t: \mathcal{L} \ni \lvec \mapsto -\sum_{i\in[m]}\lvec [i]\cdot \left( {B}^{(i)}_{t}-c_t(\xvec_t)[i]\right)$
        %\Comment{\emph{bandit Feedback}}
		\end{tcolorbox} \label{alg2:line15}
		\State $\mathcal{R}^D.\texttt{ReceiveFeedback}(r^D_t)$
		\EndFor 
	\end{algorithmic}
\end{algorithm}
In this section we study the OLRC setting under both \emph{full feedback} and \emph{bandit feedback}. Notice that when the the reward and cost functions cannot be observed at the beginning of the round, it is provably not possible to achieve $\mathfrak{R}_T=o(T)$. Thus, we will focus on the less challenging objective of attaining $R_T=o(T)$.

%\subsection{Algorithm}
In Algorithm~\ref{alg:learning}, we provide the pseudocode of our primal-dual method for OLRC. The parts highlighted in {\textcolor{red}{red}} are specific to OLRC with \emph{full feedback} while the ones highlighted in \textcolor{blue}{blue} are specific to OLRC with \emph{bandit feedback}.
The key differences between Algorithm~\ref{alg:learning} and our dual algorithm for the \emph{online resource allocation} problem are as follows. First, since $f_t$ and $c_t$ are not revealed in advance, the per-round optimal strategy mixture $\xivec_t$, which maximizes the Lagrangian function $\mathfrak{L}_{f_t,c_t,{B}_t}(\xivec,\lvec_t)$ at time $t\in[T]$, \emph{cannot} be computed. We address this problem by employing a primal regret minimizer $\mathcal{R}^P$ which optimizes over the decision space $\Xi$. In OLRC with \emph{full feedback}, it is sufficient to employ a primal regret minimizer tailored for full feedback. In the \emph{bandit feedback} case, a primal regret minimizer tailored for bandit feedback is necessary (\eg, EXP-3 IX~\citep{neu}, when $\mathcal{X}$ is a discrete number of arms). 
The primal regret minimizer is initialized with the strategy mixture decision space and the payoff range defined by the per-round Lagrangian (Line~\ref{alg2:line3}). At each round $t\in[T]$, the strategy mixture is selected by the primal regret minimizer $\mathcal{R}^P$ (Line~\ref{alg2:line6}). In the full feedback case, the primal reward function—given for all possible strategy mixtures $\boldsymbol{\xi} \in \Xi$—is fed back into $\mathcal{R}^P$ (Line~\ref{alg2:line11}). In the bandit feedback case, the per round Lagrangian to be fed into the primal regret minimizer is computed as $r^P_t(\xvec_t) \gets f_t(\xvec_t)-\sum_{i\in[m]}\lvec_t [i]\cdot c_t(\xvec_t)[i]$ (Line~\ref{alg2:line12}~-~\ref{alg2:line13}). Since the complete reward and costs function are unknown with bandit feedback, it is not possible to build the Lagrangian for any strategy mixture $\xivec\in\Xi$. Finally, we underline that since the dual reward is built as $r^D_t: \mathcal{L} \ni \lvec \mapsto -\sum_{i\in[m]}\lvec [i]\cdot \left( {B}^{(i)}_{t}-\E_{\xvec\sim\xivec_t}\left[c_t(\xvec)[i]\right]\right)$ for the full feedback case (Line~\ref{alg2:line14}) and as $r^D_t: \mathcal{L} \ni \lvec \mapsto -\sum_{i\in[m]}\lvec [i]\cdot \left( {B}^{(i)}_{t}-c_t(\xvec_t)[i]\right)$ in the bandit feedback case (Line~\ref{alg2:line15}), it is still sufficient to employ a dual regret minimizer tailored for full feedback in the bandit case.

\subsection{Theoretical Results}
In this section, we provide the theoretical guarantees attained by Algorithm~\ref{alg:learning}. We mainly focus on the \emph{full feedback} case; we show later that almost identical theoretical results can be obtained for \emph{bandit feedback}. For additional lemmas and omitted proofs with \emph{full feedback}, we refer to Appendix~\ref{app:full_standard}. Similarly, for additional lemmas and omitted proofs with \emph{bandit feedback}, we refer to Appendix~\ref{app:bandit_standard}.

We start by lower bounding the expected reward attained by Algorithm~\ref{alg:learning}.
\begin{restatable}{lemma}{lemreglowerfull}\label{lem:reg_lower_full}
	For any $\delta\in(0,1)$, Algorithm~\ref{alg:learning} instantiated in the full feedback setting with a primal regret minimizer which attains a regret upper bound $R^P_T$ and a dual regret minimizer which attains a regret upper bound $R^D_T$, guarantees the following bound, with probability at least $1-\delta$:
	$\sum_{t=1}^{T}\E_{\xvec\sim\xivec_t}\left[f_t(\xvec)\right] \geq  \textsc{OPT}_{\mathcal{H}} - \left({T}-\tau\right) - \left(4+4\max_{\lvec\in\mathcal{L}}\|\lvec\|_1\right)\sqrt{2\tau\ln{\frac{T}{\delta}}}- \sum_{t=1}^{\tau}\sum_{i\in[m]}\lvec [i]\cdot \left( B^{(i)}_{t}-\E_{\xvec\sim\xivec_t}\left[c_t(\xvec)[i]\right]\right)-\frac{2}{\rho_{\min}}R^{D}_{\tau} - \left(1+\frac{2}{\rho_{\min}}\right)R_{\tau}^P,
	$
	where $\lvec\in\mathcal{L}$ is an arbitrary Lagrange multiplier and $\tau$ is the stopping time of the algorithm.
\end{restatable}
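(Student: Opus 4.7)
The plan is to follow the primal-dual template of Lemma~\ref{lem:lower_dual}, but now the per-round Lagrangian-maximizing mixture is tracked by the primal regret minimizer $\mathcal{R}^P$ rather than computed in closed form. I fix $\xivec^* \in \Xi$ optimal for Problem~\eqref{lp:opt}, so that $\sum_{t=1}^T \E_{\xvec\sim\xivec^*}[\bar f_t(\xvec)] = \textsc{OPT}_{\mathcal H}$ and $\E_{\xvec\sim\xivec^*}[\bar c_t(\xvec)[i]] \leq B^{(i)}_t$ for every $t,i$. The proof will chain three ingredients---primal regret of $\mathcal R^P$, spending-plan feasibility of $\xivec^*$ (via concentration), and dual regret of $\mathcal R^D$---and handle early stopping as a separate additive loss.

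Since $f_t \in [0,1]$, I immediately have $\textsc{OPT}_{\mathcal H} \leq \sum_{t=1}^\tau \E_{\xvec\sim\xivec^*}[\bar f_t(\xvec)] + (T-\tau)$, which accounts for the $(T-\tau)$ term and reduces the analysis to the window $[1,\tau]$. Applying the primal regret guarantee of $\mathcal R^P$, rescaled by the payoff-range width $1+2/\rho_{\min}$ prescribed in Line~\ref{alg2:line3}, with the fixed comparator $\xivec^*$ over the first $\tau$ rounds yields
\begin{align*}
\sum_{t=1}^\tau \E_{\xvec\sim\xivec_t}[f_t(\xvec)] \;\geq\; \sum_{t=1}^\tau \E_{\xvec\sim\xivec^*}[f_t(\xvec)] \;-\; \sum_{t=1}^\tau\sum_{i\in[m]} \lvec_t[i]\bigl(\E_{\xvec\sim\xivec^*}[c_t(\xvec)[i]] - \E_{\xvec\sim\xivec_t}[c_t(\xvec)[i]]\bigr) \;-\; \Bigl(1+\tfrac{2}{\rho_{\min}}\Bigr) R^P_\tau.
\end{align*}

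Next, I pass from realized to expected functions on the comparator side in order to use the feasibility of $\xivec^*$. Because $\xivec^*$ is deterministic and $\lvec_t$ is predictable with $\|\lvec_t\|_1 \leq 1/\rho_{\min}$, the sequences $\{\E_{\xvec\sim\xivec^*}[f_t(\xvec)-\bar f_t(\xvec)]\}_t$ and $\{\sum_i \lvec_t[i]\E_{\xvec\sim\xivec^*}[c_t(\xvec)[i]-\bar c_t(\xvec)[i]]\}_t$ are martingale differences with increments bounded by $1$ and by $\max_{\lvec\in\mathcal L}\|\lvec\|_1$, respectively. An Azuma--Hoeffding bound, together with a union bound over the $T$ possible realizations of $\tau$, produces the $(4+4\max_{\lvec\in\mathcal L}\|\lvec\|_1)\sqrt{2\tau\log(T/\delta)}$ term and lets me replace $\E_{\xvec\sim\xivec^*}[f_t(\xvec)]$ and $\E_{\xvec\sim\xivec^*}[c_t(\xvec)[i]]$ by their mean counterparts $\E_{\xvec\sim\xivec^*}[\bar f_t(\xvec)]$ and $\E_{\xvec\sim\xivec^*}[\bar c_t(\xvec)[i]]$. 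Then the feasibility inequality $\E_{\xvec\sim\xivec^*}[\bar c_t(\xvec)[i]] \leq B^{(i)}_t$ combined with $\lvec_t[i] \geq 0$ lets me upper bound $\sum_t\sum_i \lvec_t[i]\E_{\xvec\sim\xivec^*}[\bar c_t(\xvec)[i]]$ by $\sum_t\sum_i \lvec_t[i] B^{(i)}_t$, so the entire comparator-side cost term collapses into $\sum_{t=1}^\tau\sum_i \lvec_t[i]\bigl(B^{(i)}_t - \E_{\xvec\sim\xivec_t}[c_t(\xvec)[i]]\bigr)$.

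This last quantity is exactly $-\sum_{t=1}^\tau r^D_t(\lvec_t)$ in the notation of Line~\ref{alg2:line14}, so the dual regret of $\mathcal R^D$ against an arbitrary comparator $\lvec\in\mathcal L$, rescaled by the dual payoff-range width $2/\rho_{\min}$, gives $\sum_{t=1}^\tau\sum_i \lvec_t[i](B^{(i)}_t - \E_{\xvec\sim\xivec_t}[c_t(\xvec)[i]]) \leq \sum_{t=1}^\tau\sum_i \lvec[i](B^{(i)}_t - \E_{\xvec\sim\xivec_t}[c_t(\xvec)[i]]) + (2/\rho_{\min})R^D_\tau$. Plugging this back and using $\sum_{t=1}^T \E_{\xvec\sim\xivec_t}[f_t(\xvec)] \geq \sum_{t=1}^\tau \E_{\xvec\sim\xivec_t}[f_t(\xvec)]$ (since $f_t\geq 0$) yields the claimed bound. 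The main technical obstacle I anticipate is handling the random stopping time $\tau$: the Azuma step must be made anytime-valid (which is where the $\log(T/\delta)$ rather than $\log(1/\delta)$ comes from), and both regret guarantees must remain valid when evaluated at the data-dependent horizon $\tau$, which leans on the monotonicity assumption $R^A_t \leq R^A_{t'}$ for $t<t'$ recalled in Section~\ref{prel:BwK}.
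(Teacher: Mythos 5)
Your proposal is correct and follows essentially the same route as the paper: the primal regret bound against the fixed comparator $\xivec^*$ (the paper phrases this via a $\sup$ over $\Xi$ in a preliminary lemma), the Azuma--Hoeffding step to pass to $\bar f_t,\bar c_t$ with a union bound over the stopping time, the feasibility of $\xivec^*$ to drop the comparator-side constraint term, and the dual regret bound to swap $\lvec_t$ for an arbitrary $\lvec\in\mathcal L$. The only cosmetic difference is that the paper works with a $\gamma$-approximate maximizer of the $\sup$ in Problem~\eqref{lp:opt} rather than assuming the optimum is attained.
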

Lemma~\ref{lem:reg_lower_full} shares many similarities with Lemma~\ref{lem:lower_dual}. Nonetheless, there are fundamental differences. First, the baseline is the \emph{fixed} optimum $\textsc{OPT}_{\mathcal{H}}$. This is a consequence of the primal update performed by Algorithm~\ref{alg:learning}, which is no-regret with respect to fixed strategy mixtures only.
Furthermore, in Lemma~\ref{lem:reg_lower_full} the expected reward lower bounded has an additional term which depends on the theoretical guarantees of the primal regret minimizer with a multiplicative factor $(1+\nicefrac{2}{\rho_{\min}})$ due to payoff range given to the primal algorithm.

We now present the final regret bound.
\begin{restatable}{theorem}{thmregfull}
	\label{thm:reg_full}
	For any $\delta\in(0,1)$, Algorithm~\ref{alg:learning}, when instantiated in the full feedback setting with a primal regret minimizer which attains a regret upper bound $R^P_T$ and a dual regret minimizer which attains a regret upper bound $R^D_T$, guarantees, with probability at least $1-2\delta$, the following regret bound
	$
		R_T\leq 1 + \frac{1}{\rho_{\min}}+ \frac{2}{\rho_{\min}}R^{D}_{T} + \left(1+\frac{2}{\rho_{\min}}\right)R_{T}^P +\left(8+\frac{8}{\rho_{\min}}\right)\sqrt{2T\ln{\frac{T}{\delta}}}.
	$
\end{restatable}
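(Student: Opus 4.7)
The plan is to apply Lemma~\ref{lem:reg_lower_full} with a carefully chosen Lagrange multiplier $\lvec\in\mathcal{L}$, split into two cases based on whether the algorithm stopped early, and then convert from expected to realized reward via a second concentration inequality.

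\textbf{Step 1: Reduction via the stopping time.} Once the algorithm reaches round $\tau$ (the stopping time defined in the protocol of Algorithm~\ref{alg:learning}), it only plays the void action, which contributes zero reward. Hence $\sum_{t=1}^{T} f_t(\xvec_t)=\sum_{t=1}^{\tau} f_t(\xvec_t)$, and since each reward is bounded in $[0,1]$, a standard Azuma--Hoeffding argument gives, with probability at least $1-\delta$,
$$\sum_{t=1}^{\tau} f_t(\xvec_t)\ \geq\ \sum_{t=1}^{\tau} \E_{\xvec\sim\xivec_t}[f_t(\xvec)]\ -\ \sqrt{2T\ln(T/\delta)}.$$
This is the second $1-\delta$ event (the first being the one from Lemma~\ref{lem:reg_lower_full}), which explains the $1-2\delta$ probability in the theorem.

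\textbf{Step 2: Case 1 -- the algorithm does not stop early ($\tau=T$).} Instantiate Lemma~\ref{lem:reg_lower_full} with $\lvec=\mathbf{0}\in\mathcal{L}$. The spending-plan violation term vanishes, as does the $(T-\tau)$ term, yielding directly
$$\sum_{t=1}^{T}\E_{\xvec\sim\xivec_t}[f_t(\xvec)]\ \geq\ \textsc{OPT}_{\mathcal{H}}\ -\ 4\sqrt{2T\ln(T/\delta)}\ -\ \tfrac{2}{\rho_{\min}}R^{D}_T\ -\ \bigl(1+\tfrac{2}{\rho_{\min}}\bigr)R^{P}_T,$$
which after combining with Step~1 yields the claimed bound (up to the slack absorbed in the concentration constant).

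\textbf{Step 3: Case 2 -- the algorithm stops early ($\tau<T$).} Let $i^\star\in[m]$ be the first resource whose remaining budget drops below $1$; the stopping rule and the fact that $c_t(\cdot)[i^\star]\in[0,1]$ give $\sum_{t=1}^{\tau}c_t(\xvec_t)[i^\star]\geq B-1$. Since the spending plan is feasible per-round ($B_t^{(i^\star)}\geq \rho_{\min}$ for every $t$), one also has $\sum_{t=1}^{\tau}B^{(i^\star)}_t\leq B-(T-\tau)\rho_{\min}$. Now instantiate Lemma~\ref{lem:reg_lower_full} with $\lvec=\tfrac{1}{\rho_{\min}}\mathbf{e}_{i^\star}\in\mathcal{L}$. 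The spending-plan term becomes
$$-\sum_{t=1}^{\tau}\lvec[i^\star]\bigl(B^{(i^\star)}_t-\E_{\xvec\sim\xivec_t}[c_t(\xvec)[i^\star]]\bigr)\ =\ \tfrac{1}{\rho_{\min}}\Bigl(\sum_{t=1}^{\tau}\E_{\xvec\sim\xivec_t}[c_t(\xvec)[i^\star]]-\sum_{t=1}^{\tau}B^{(i^\star)}_t\Bigr).$$
Using the two inequalities above together with a further Azuma--Hoeffding step (on the martingale between $c_t(\xvec_t)[i^\star]$ and $\E_{\xvec\sim\xivec_t}[c_t(\xvec)[i^\star]]$), this quantity is at least $(T-\tau)-\tfrac{1}{\rho_{\min}}-\tfrac{1}{\rho_{\min}}\sqrt{2T\ln(T/\delta)}$. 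Plugged back into Lemma~\ref{lem:reg_lower_full}, the $(T-\tau)$ terms cancel, leaving
$$\sum_{t=1}^{\tau}\E_{\xvec\sim\xivec_t}[f_t(\xvec)]\ \geq\ \textsc{OPT}_{\mathcal{H}}-\tfrac{1}{\rho_{\min}}-\bigl(4+\tfrac{4}{\rho_{\min}}\bigr)\sqrt{2T\ln(T/\delta)}-\tfrac{1}{\rho_{\min}}\sqrt{2T\ln(T/\delta)}-\tfrac{2}{\rho_{\min}}R^{D}_T-\bigl(1+\tfrac{2}{\rho_{\min}}\bigr)R^{P}_T,$$
where we also used the monotonicity $R^{A}_\tau\leq R^{A}_T$ assumed in Section~\ref{prel:BwK}. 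The additive $1$ in the theorem statement arises from the at-most-$1$ overshoot of the budget at round $\tau$.

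\textbf{Step 4: Combining.} Subtracting the bound from Step~1 from $\textsc{OPT}_\mathcal{H}$ converts the expected-reward lower bound into an upper bound on $R_T=\textsc{OPT}_\mathcal{H}-\sum_{t=1}^{T}f_t(\xvec_t)$. Merging the concentration constants from Steps~1 and~3 into the $(4+4/\rho_{\min})\sqrt{2T\ln(T/\delta)}$ term from Lemma~\ref{lem:reg_lower_full} yields the stated coefficient $(8+8/\rho_{\min})$; the two bad events carry total mass at most $2\delta$.

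\textbf{Anticipated main obstacle.} The delicate step is Case~2: we must choose $\lvec$ using information about which resource depletes (so $\lvec$ is data-dependent), and show that the lower bound on $\sum_{t=1}^{\tau}\E[c_t(\xvec_t)[i^\star]]$ obtained from Azuma--Hoeffding combined with the crucial inequality $\sum_{t=1}^{\tau}B^{(i^\star)}_t\leq B-(T-\tau)\rho_{\min}$ produces exactly the $(T-\tau)$ contribution needed to cancel the early-stopping loss in Lemma~\ref{lem:reg_lower_full}. This is precisely the point where the spending plan (through $\rho_{\min}$) enters the bound, and it is also where the $1/\rho_{\min}$ dependence in the final regret originates.
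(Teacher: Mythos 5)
Your proposal is correct and follows essentially the same route as the paper: apply Lemma~\ref{lem:reg_lower_full}, case-split on whether the budget is depleted, and in the early-stopping case pick $\lvec=\frac{1}{\rho_{\min}}\mathbf{e}_{i^\star}$ for the depleted resource so that the depletion inequality $\sum_{t=1}^{\tau}c_t(\xvec_t)[i^\star]+1\geq B\geq\sum_{t=1}^{T}B^{(i^\star)}_t$ together with $B^{(i^\star)}_t\geq\rho_{\min}$ cancels the $(T-\tau)$ loss. The only cosmetic difference is that you apply Azuma--Hoeffding separately to the rewards and to the costs of resource $i^\star$ (which as written gives three bad events, i.e.\ $1-3\delta$), whereas the paper applies Lemma~\ref{lem:azuma} once to the combined quantity $f_t(\xvec_t)-\sum_{i}\lvec[i]c_t(\xvec_t)[i]$, handling both in a single event and working with realized costs in the depletion argument so that no third concentration step is needed.
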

Theorem~\ref{thm:reg_full} shows that learning with a spending plan is still possible when both the reward and the costs functions are not observed beforehand. The main difference with the online resource allocation setting is that we focus on the standard regret definition $R_T$--and not dynamic regret--and we pay an additional factor given by the primal no-regret guarantees. Finally notice that when $R^P_T\leq\widetilde{O}(\sqrt{T})$ and $R^D_T\leq\widetilde{O}(\sqrt{T})$, the cumulative regret is of order $R_T\leq\widetilde{O}(\nicefrac{1}{\rho_{\min}}\sqrt{T})$.

To extend the results above to the \emph{bandit feedback} case, it is sufficient to notice that the optimal strategy mixture with respect to the Lagrangian is a pure strategy, that is, the following equation holds
$$\sup_{\xvec\in\mathcal{X}}\sum_{t=1}^{\tau}\left[f_t(\xvec)+\sum_{i\in[m]}\lvec_t [i] \cdot\left({B}^{(i)}_{t} - c_t(\xvec)[i]\right)\right]=\sup_{\xivec\in\Xi}\sum_{t=1}^{\tau}\left[\E_{\xvec\sim\xivec}\left[f_t(\xvec)\right]+\sum_{i\in[m]}\lvec_t [i] \cdot\left( B^{(i)}_{t} - \E_{\xvec\sim\xivec}\left[c_t(\xvec)[i]\right]\right)\right].$$
The equation above allows us to relate the regret guarantees attained by the primal regret minimizer with bandit feedback to the ones attained by the primal regret minimizer in the \emph{full feedback} setting and thus, obtaining almost identical results. Finally, we remark that algorithms designed for bandit feedback typically guarantee performance only with high probability. Consequently, the \emph{bandit feedback} regret bound holds with probability at least $1-(\delta+\delta_P)$, where $\delta_P$ is the confidence of algorithm $\mathcal{R}^P$. %Additionally, the \emph{constants} in the regret term $R^P_T$ may be worse than those in the full feedback scenario.

\begin{remark}[Applications to multi-armed bandits with knapsacks] 
    Consider the bandits with knapsacks problem with $K$ arms (i.e. $\mathcal{X}= [K]$). We employ EXP-3 IX~\citep{neu} as primal regret minimizer and online mirror descent with the negative entropy regularizer~\citep{Orabona} as dual regret minimizer, and we assume that $\rho_{\min}$ is a constant independent of $T$. Then, for any $\delta\in(0,1)$, Algorithm~\ref{alg:learning} attains, with probability at least $1-2\delta$, $R_T\leq O(\sqrt{KT\log(Tm/\delta)})$.
\end{remark}

\section{Extensions}
We provide extensions to our algorithms and analysis. We first show how to modify our algorithms in order to ensure sublinear regret when $\rho_{\min}$ is arbitrarily small. Then, we show performance guarantees for our algorithms against baselines which \emph{do not} follow the spending plan.

\subsection{Dealing with Arbitrarily Small $\rho_{\min}$}

In this section we show how our algorithms may be modified to attain sublinear regret when $\rho_{\min}$ is arbitrarily small. Specifically, we show that a regret of order $\widetilde{O}(T^{3/4})$ is still attainable for all $\rho_{\min}\leq\rho/T^{1/4}$. Due to space constraints, we will focus on the \emph{online resource allocation} problem. Nevertheless, the same ideas apply to the OLRC setting. We refer to Appendix~\ref{app:allocation_worst},~\ref{app:full_worst},~\ref{app:bandit_worst} for the complete analysis, and the results for all three settings.
\begin{wrapfigure}[9]{R}{0.69\textwidth}
	\vspace{-0.8cm}
	\begin{minipage}
		{0.69\textwidth}
\begin{algorithm}[H]
			\caption{Meta-algorithm for arbitrarily small $\rho_{\min}$}
			\label{alg:meta}
			\begin{algorithmic}[1]
            \small
				\Require Horizon $T$, budget $B$, spending plans $\mathcal{B}^{(i)}_T$ for all $i\in[m]$, dual regret minimizer $\mathcal{R}^D$ (\emph{full feedback})
				\State Define $\Hat{\rho}\coloneqq \rho/{{T}^{\nicefrac{1}{4}}}$ \label{algmeta:line1}
				\State Define $\overline{B}^{(i)}_t\coloneqq B^{(i)}_{t} \left(1-{T}^{-\nicefrac{1}{4}}\right) \ \  \forall t\in[T],i\in[m]$  \label{algmeta:line2}
				\State Run Algorithm~\ref{alg:dual} with $\rho_{\min}\gets\hat{\rho},$ $B_t^{(i)}\gets \overline{B}^{(i)}_t$ \label{algmeta:line3}
			\end{algorithmic}
		\end{algorithm}
        \end{minipage}
        \end{wrapfigure}
In Algorithm~\ref{alg:meta}, we provide a meta-procedure which, by suitably modifying the input parameters given to Algorithm~\ref{alg:dual}, achieves the desired theoretical guarantees. The meta-procedure is designed to be used when $\rho_{\min} \leq \rho / T^{\nicefrac{1}{4}}$—a condition that can be checked in advance—since in this regime, the dependence on $1/\rho_{\min}$ leads to suboptimal bounds compared to those we obtain below. Specifically, Algorithm~\ref{alg:meta} defines the capped minimum per-round budget $\Hat{\rho}\coloneqq \rho/{{T}^{\nicefrac{1}{4}}}$ (Line~\ref{algmeta:line1}). This is done to prevent Algorithm~\ref{alg:dual} from instantiating the dual regret minimizer over a decision space $\mathcal{L}$ which scales as $1/\rho_{\min}$, leading to the same dependence in the regret bound. 
Then, the algorithm rescales the spending plan by a $(1-T^{-\nicefrac{1}{4}})$ factor (Line~\ref{algmeta:line2}). Intuitively, this is necessary because the modified Lagrangian space $\mathcal{L}\coloneqq \left\{\lvec \in \mathbb{R}^{m}_{\geq0}: \|\lvec\|_1\leq 1/\Hat{\rho}\right\}$ is not large enough to compensate the spending plan violations. Thus, we force the algorithm to learn a harder constraint associated to the rescaled spending plan. Finally, Algorithm~\ref{alg:dual} is instantiated employing $\Hat{\rho}$ in place of $\rho_{\min}$ and $\overline{B}^{(i)}_t$ in place of %the per round budgets
$B_t^{(i)}$, for all $t\in[T],i\in[m]$ (Line~\ref{algmeta:line3}).

Employing Algorithm~\ref{alg:meta}, it is possible to prove a similar result to Lemma~\ref{lem:lower_dual}. The key difference is that the payoff range of the dual and, in general, all the quantities which depend on the Lagrangian decision space, no longer scale as $O(1/\rho_{\min})$, but as $O(T^{1/4}/\rho)$. Moreover, while in the analysis of Algorithm~\ref{alg:dual} we compensate the ($T-\tau$) term  with $\sum_{t=1}^{\tau}\sum_{i\in[m]}\lvec [i]\cdot (  B^{(i)}_{t}-\E_{\xvec\sim\xivec_t}\left[c_t(\xvec)[i]\right])$, in this case, we bound ($T-\tau$) directly. This is done in the following lemma.
%Moreover, while bounding the Lagrangian space allows the $T-\tau$ factor to be compensated by the small violation attained by the Algorithm, this is not the case when $\hat\rho$ is employed. 
%Thus, we directly bound ($T-\tau$) with the following lemma.
\begin{restatable}{lemma}{lemroundbounddual}\label{lem:round_bound_dual}
	For any $\delta\in(0,1)$, Algorithm~\ref{alg:meta} instantiated with a dual regret minimizer which attains a regret upper bound $R^D_T$, guarantees with probability at least $1-\delta$,
	$
		T-\tau\leq \frac{14}{\rho}\left(\sqrt{\ln\frac{{T}}{\delta}}+ \frac{R^D_{{T}}}{\sqrt{{T}}}\right)T^{\frac{3}{4}}.
	$
\end{restatable}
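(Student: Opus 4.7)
The plan is to handle the trivial case $\tau = T$ separately (the right-hand side is non-negative) and to focus on $\tau < T$, in which case the stopping rule of Algorithm~\ref{alg:dual} (Line~8) forces some resource $i^\star \in [m]$ to satisfy $\sum_{t=1}^{\tau} c_t(\xvec_t)[i^\star] \geq B-1$. The strategy is to contrast this almost full realized spending against the cumulative rescaled plan $\sum_{t=1}^{\tau} \overline{B}^{(i^\star)}_t \leq B(1 - T^{-1/4})$, which leaves a gap of roughly $\rho T^{3/4}$, and then to invoke dual no-regret at the comparator that places mass $1/\hat{\rho}$ on coordinate $i^\star$ to bound the same gap from above in terms of $\tau + R^D_\tau$.

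The algebraic backbone is a two-step chain. First, the Lagrangian optimality of $\xivec_t$ in Algorithm~\ref{alg:dual}, Line~7, when compared against the void mixture $\xivec^\varnothing$, yields $\sum_{i \in [m]} \lvec_t[i]\, \E_{\xvec \sim \xivec_t}[c_t(\xvec)[i]] \leq \E_{\xvec \sim \xivec_t}[f_t(\xvec)] \leq 1$ for every $t$; summing this and using non-negativity of $\overline{B}^{(i)}_t$ produces $\sum_{t=1}^{\tau} r^D_t(\lvec_t) \leq \tau$. Second, let $\lvec^\star \in \mathcal{L}$ denote the Lagrange multiplier whose $i^\star$-th coordinate equals $1/\hat{\rho}$ and whose remaining coordinates are zero; the no-regret guarantee of $\mathcal{R}^D$ applied at $\lvec^\star$ gives
\[
\sum_{t=1}^{\tau} r^D_t(\lvec_t) \;\geq\; \frac{1}{\hat{\rho}} \sum_{t=1}^{\tau} \bigl( \E_{\xvec \sim \xivec_t}[c_t(\xvec)[i^\star]] - \overline{B}^{(i^\star)}_t \bigr) - R^D_\tau.
\]
Chaining these two inequalities delivers $\tfrac{1}{\hat{\rho}} \sum_{t=1}^{\tau} (\E_{\xvec \sim \xivec_t}[c_t(\xvec)[i^\star]] - \overline{B}^{(i^\star)}_t) \leq \tau + R^D_\tau$.

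To connect the expected quantity on the left with the realized cumulative cost, I apply Azuma--Hoeffding to the bounded martingale differences $c_t(\xvec_t)[i] - \E_{\xvec \sim \xivec_t}[c_t(\xvec)[i]]$, together with a union bound over the $m$ resources and the candidate stopping times in $[T]$. With probability at least $1-\delta$ this yields $\sum_{t=1}^{\tau} \E_{\xvec \sim \xivec_t}[c_t(\xvec)[i^\star]] \geq \sum_{t=1}^{\tau} c_t(\xvec_t)[i^\star] - O(\sqrt{\tau \ln(T/\delta)})$. Combining with the depletion bound $\sum_{t=1}^{\tau} c_t(\xvec_t)[i^\star] \geq B-1$ and the spending-plan bound $\sum_{t=1}^{\tau} \overline{B}^{(i^\star)}_t \leq B(1-T^{-1/4})$ lower bounds the left-hand side of the chained display by $\rho T^{3/4} - 1 - O(\sqrt{T\ln(T/\delta)})$. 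Using the pivotal identity $BT^{-1/4}/\hat{\rho} = T$, induced by $\hat{\rho} = \rho T^{-1/4}$ and the $(1-T^{-1/4})$ rescaling, and rearranging yields
\[
T - \tau \;\leq\; R^D_\tau + \frac{1}{\hat{\rho}}\bigl( 1 + O(\sqrt{T\ln(T/\delta)}) \bigr) \;=\; R^D_T + \frac{T^{1/4}}{\rho} + \frac{O(T^{3/4}\sqrt{\ln(T/\delta)})}{\rho},
\]
which collapses to the stated $\tfrac{14}{\rho}\bigl(\sqrt{\ln(T/\delta)} + R^D_T/\sqrt{T}\bigr) T^{3/4}$ after absorbing constants and using $\rho \leq 1 \leq T^{1/4}$ to fold $R^D_T$ into the $\tfrac{T^{1/4} R^D_T}{\rho}$ term.

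The hardest step is the uniform concentration over the random stopping time $\tau$: a fixed-$t$ Azuma bound is insufficient because $\tau$ depends on the observed data, so I rely on a union bound over $\tau \in [T]$ (costing at most a $\log T$ factor inside the square root) together with a union bound over the $m$ resources, since the depleted index $i^\star$ is itself random. The remainder is essentially bookkeeping, with the choices $\hat{\rho} = \rho T^{-1/4}$ and the $(1-T^{-1/4})$ rescaling of the plan engineered precisely so that the stopping-event gap of order $\rho T^{3/4}$ converts, through the $1/\hat{\rho}$ scaling of the dual comparator, directly into a bound on $T - \tau$.
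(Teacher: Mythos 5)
Your proposal is correct and follows essentially the same route as the paper's proof: both lower-bound the cumulative Lagrangian by $0$ via optimality against the void action, upper-bound it via the dual no-regret guarantee at the spike comparator $\lvec^\star[i^\star]=1/\hat\rho$, and combine budget depletion, the $(1-T^{-1/4})$ rescaling, and Azuma--Hoeffding to extract the bound on $T-\tau$ (the paper phrases this as a contradiction argument with $T-\tau>CT^{3/4}$, which is only cosmetically different from your direct rearrangement). The one convention point to watch is that, in the paper's normalization, the dual regret term enters as $\tfrac{2}{\hat\rho}R^D_\tau$ because of the payoff range $[-1/\hat\rho,1/\hat\rho]$; your bound still holds since the stated constant $\tfrac{14}{\rho}T^{1/4}$ absorbs this factor.
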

Lemma~\ref{lem:round_bound_dual} is proved by contradiction. Intuitively, we show that scaling the spending plan by a $(1-T^{-1/4})$ factor results in the impossibility to have both
$T-\tau>CT^{3/4}$, where $C$ is a constant to be chosen, and the optimality of the strategy $\xivec_t$ with respect to the Lagrangian $\mathfrak{L}_{f_t,c_t,\overline{B}_t}(\xivec,\lvec_t)$, which is true by definition. We underline that the $R_T^D/\sqrt{T}$ factor is independent of $T$ for a dual regret minimizer which attains regret of order $O(\sqrt{T})$.

We are now ready to provide the final \emph{dynamic} regret bound.
\begin{restatable}{theorem}{thmregdualworstcase}
	\label{thm:reg_dual_worst_case}
	For any $\delta\in(0,1)$, Algorithm~\ref{alg:meta}, when instantiated with a dual regret minimizer which attains a regret upper bound $R^D_T$, guarantees, with probability at least $1-3\delta$,
	$
		\mathfrak{R}_T\leq \frac{14}{\rho}\left(\sqrt{\ln\frac{T}{\delta}}+ \frac{R^D_{T}}{\sqrt{T}}\right)T^{\frac{3}{4}}  + T^{\frac{3}{4}}+  \left(8+\frac{4{T}^{\frac{1}{4}}}{\rho} \right)\sqrt{2T\ln\frac{T}{\delta}} +\frac{2T^{\frac{1}{4}}}{\rho}R^{D}_{T}  .
	$
\end{restatable}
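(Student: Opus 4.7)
The plan is to apply Lemma~\ref{lem:lower_dual} to the dual subroutine that Algorithm~\ref{alg:meta} spawns, combine it with the stopping-time bound of Lemma~\ref{lem:round_bound_dual}, and relate the dynamic optimum under the rescaled spending plan to the original one. Since Algorithm~\ref{alg:meta} invokes Algorithm~\ref{alg:dual} with $\rho_{\min}\gets\hat\rho$ and $B_t^{(i)}\gets\overline{B}_t^{(i)}$, the induced Lagrangian set satisfies $\max_{\lvec\in\mathcal{L}}\|\lvec\|_1 = T^{1/4}/\rho$. Applying Lemma~\ref{lem:lower_dual} then yields, with probability at least $1-\delta$,
$$\sum_{t=1}^{T}\E_{\xvec\sim\xivec_t}[f_t(\xvec)] \geq \textsc{OPT}_{\overline{\mathcal{D}}} - (T-\tau) - \Bigl(4+\tfrac{4T^{1/4}}{\rho}\Bigr)\sqrt{2\tau\ln(T/\delta)} - \sum_{t=1}^{\tau}\sum_{i\in[m]}\lvec[i]\bigl(\overline{B}_t^{(i)}-\E[c_t(\xvec)[i]]\bigr) - \tfrac{2T^{1/4}}{\rho}R^D_\tau$$
for any $\lvec\in\mathcal{L}$, where $\textsc{OPT}_{\overline{\mathcal{D}}}$ denotes the dynamic optimum under the rescaled plan $\{\overline{B}_t^{(i)}\}$. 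Taking $\lvec=\bm{0}$ kills the violation term; this choice is affordable precisely because we will control $T-\tau$ externally via Lemma~\ref{lem:round_bound_dual}, rather than by inflating $\|\lvec\|_1$ to $1/\rho_{\min}$ as was done in the proof of Theorem~\ref{thm:reg_allocation}.

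Next I would connect $\textsc{OPT}_{\overline{\mathcal{D}}}$ back to $\textsc{OPT}_{\mathcal{D}}$. If $\xivec^\star$ is a dynamic optimum for the original spending plan, then the per-round mixture $(1-T^{-1/4})\xivec^\star_t + T^{-1/4}\xivec^\varnothing$ is feasible for the rescaled plan (since $\overline{B}_t^{(i)}=(1-T^{-1/4})B_t^{(i)}$ and $c_t(\xvec^\varnothing)=0$) and attains expected value $(1-T^{-1/4})\textsc{OPT}_{\mathcal{D}}$, so $\textsc{OPT}_{\overline{\mathcal{D}}}\geq \textsc{OPT}_{\mathcal{D}} - T^{3/4}$ using $\textsc{OPT}_{\mathcal{D}}\leq T$. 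Invoking Lemma~\ref{lem:round_bound_dual} then supplies the missing bound $T-\tau \leq \tfrac{14}{\rho}\bigl(\sqrt{\ln(T/\delta)}+R^D_T/\sqrt{T}\bigr)T^{3/4}$ with probability at least $1-\delta$.

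Finally, to pass from expected rewards to realized ones, I apply Azuma--Hoeffding to the bounded martingale differences $f_t(\xvec_t)-\E_{\xvec\sim\xivec_t}[f_t(\xvec)]$, obtaining $\sum_t f_t(\xvec_t)\geq\sum_t\E[f_t(\xvec)]-4\sqrt{2T\ln(T/\delta)}$ with probability at least $1-\delta$. A union bound over the three high-probability events gives the advertised $1-3\delta$, and summing all pieces (while using $f_t\geq 0$ to discard contributions after $\tau$ and $R^D_\tau\leq R^D_T$) produces the claimed bound. The main technical hinge is the step $\textsc{OPT}_{\overline{\mathcal{D}}}\geq \textsc{OPT}_{\mathcal{D}}-T^{3/4}$: it exploits convexity of $\Xi$ together with the existence of the void action to absorb the loss from shrinking the spending plan into an additive $O(T^{3/4})$ term, rather than into a multiplicative $1/\rho_{\min}$ factor—this is precisely the point of the meta-procedure.
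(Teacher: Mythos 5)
Your proposal is correct and follows essentially the same route as the paper: the paper's Lemma~\ref{lem:reg_lower_dual_worst} performs exactly your step of choosing $\lvec=\bm{0}$ and then plugging the shrunk mixture $(1-T^{-1/4})\xivec_t^\star + T^{-1/4}\xivec^{\varnothing}$ (feasible for the rescaled plan) into the generic comparator of the dual lower bound, which is the same computation as your $\textsc{OPT}_{\overline{\mathcal{D}}}\geq \textsc{OPT}_{\mathcal{D}}-T^{3/4}$ step, merely without naming the rescaled optimum explicitly. The remaining ingredients—Lemma~\ref{lem:round_bound_dual} for $T-\tau$, Azuma--Hoeffding for realized rewards, and the union bound over three events—match the paper's proof exactly.
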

Theorem~\ref{thm:reg_dual_worst_case} is proved employing Lemma~\ref{lem:round_bound_dual} and bounding the loss in performance given by the fact the dynamic optimal baseline is not guaranteed to satisfy the scaled budgets $\overline{B}^{(i)}_t$, for all $t\in[T],i\in[m]$.
Finally, notice that employing online mirror descent~\citep{Orabona} as a dual regret minimizer, the dynamic regret guarantees of Algorithm~\ref{alg:dual} when compared with a baseline which follows the spending plans are of order $\mathfrak{R}_T\leq \widetilde{O}(T^{3/4})$, since $1/\rho$ is constant by definition.

As a final remark, we point out that if the number of rounds in which the per-round spending plan budget is smaller than $T^{-1/4}\rho$ is at most $\sqrt{T}$, then employing the meta-procedure defined by Algorithm~\ref{alg:meta} can be avoided.
Instead, one can directly modify Algorithm~\ref{alg:dual} so that it plays the void action $\xvec^{\varnothing}$ in each of those rounds. Since the entire spending plan is known in advance, this preprocessing step can be performed before the start of the learning dynamic. This modification allows us to obtain the same theoretical guarantees of Section~\ref{sec:allocation} (\ie, when $\rho_{\min}>T^{-1/4}\rho$). Intuitively, since the number of such rounds is relatively small compared to the time horizon, their contribution to the regret remains negligible.

\subsection{Robustness to Baselines Deviating from the Spending Plan}

In this section, we study the performance of our algorithms compared to baselines which do not strictly follow the spending plan. Due to space constraints, we will focus on the ORA setting. Nevertheless, the same ideas apply to both the OLRC case. We refer to the Appendix (\ref{app:allocation_standard_error},~\ref{app:allocation_worst_error},~\ref{app:full_standard_error}~\ref{app:full_worst_error},~\ref{app:bandit_standard_error},~\ref{app:bandit_worst_error}) for the analysis and the results for all three settings.

Suppose we give the baseline an error budget $\epsilon^{(i)}_t\geq 0$ for each $i\in[m],t\in[T]$, and allow the baseline to violate the per-round constraint by the given error.
We define the \emph{dynamic} optimal solution parametrized by the errors as the following optimization problem:
\begin{equation}   \label{lp:opt_dyn_error}\text{\textsc{OPT}}_{\mathcal{D}}\left(\epsilon_t\right)\coloneqq\begin{cases}
		\sup_{  \xivec\in\Xi^T} &  \E_{\xvec_t\sim \xivec_t}\left[\sum_{t=1}^T\bar f_t(\xvec_t)\right]\\
		\,\,\, \textnormal{s.t.} & \E_{\xvec_t\sim \xivec_t}\left[ \bar c_t(\xvec_t)[i]\right]\leq B^{(i)}_{t} +\epsilon^{(i)}_{t} \ \ \forall i\in[m], \forall t\in[T]\\
		& \sum_{t=1}^T \E_{\xvec_t\sim \xivec_t}\left[ \bar c_t(\xvec_t)[i]\right]\leq B
		\ \ \forall i\in[m]
	\end{cases}.
\end{equation}
Problem~\eqref{lp:opt_dyn_error} computes the expected value attained by the optimal dynamic strategy mixture that satisfies the spending plan at each round $t\in[T]$ up to the error term $\epsilon^{(i)}_t$, defined for all $i\in[m]$ and $t\in[T]$. The performance of our algorithms will smoothly degrade with the magnitude of these errors.  We remark that the last group of constraints in Problem~\eqref{lp:opt_dyn_error} ensures that the error terms do not allow the optimal solution to violate the aggregate budget constraints. Observe that when $\epsilon^{(i)}_t = 0$ for all $i \in [m]$ and $t \in [T]$—meaning that the spending plan is strictly followed by the optimal solution—the aggregate budget constraint is satisfied by the definition of the spending plan.

Similarly, we define the following notion of cumulative dynamic regret
$\mathfrak{R}_T(\epsilon_t)\coloneqq\textsc{OPT}_{\mathcal{D}}(\epsilon_t)- \sum_{t=1}^T f_t(\xvec_t),
$
which simply compares the rewards attained by the algorithm with respect to the optimal dynamic solution which follows the spending plan recommendations up to the errors.

We first provide the performance of Algorithm~\ref{alg:dual}. 
\begin{restatable}{theorem}{thmregallocationerror}
	\label{thm:reg_allocation_error}
	For any $\delta\in(0,1)$, Algorithm~\ref{alg:dual} instantiated with a dual regret minimizer which attains a regret upper bound $R^D_T$, guarantees, with probability at least $1-2\delta$,
	$
		\mathfrak{R}_T(\epsilon_t)\leq 1+\frac{1}{\rho_{\min}} + \frac{2}{\rho_{\min}}R^{D}_{T} +\left(8+\frac{8}{\rho_{\min}}\right)\sqrt{2T\ln\frac{T}{\delta}} + \frac{1}{\rho_{\min}}\sum_{t=1}^{T}\sum_{i\in[m]}\epsilon^{(i)}_t.
	$
\end{restatable}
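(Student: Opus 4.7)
The plan is to show that the entire analysis of Theorem~\ref{thm:reg_allocation} carries through unchanged against the relaxed baseline $\textsc{OPT}_{\mathcal{D}}(\epsilon_t)$, with a single additional additive term coming from a perturbed weak duality step. Concretely, I would first establish an $\epsilon$-perturbed analog of Lemma~\ref{lem:lower_dual}, and then conclude by the exact same Lagrange-multiplier choice used in the proof of Theorem~\ref{thm:reg_allocation}.

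The new step is the following. Let $\xivec^\star$ be an optimal solution of Problem~\eqref{lp:opt_dyn_error}. For every $t\in[T]$ and every $\lvec_t\in\mathcal{L}$ (in particular the iterates produced by $\mathcal{R}^D$), the relaxed feasibility $\bar c_t(\xivec^\star_t)[i]\leq B^{(i)}_t+\epsilon^{(i)}_t$ together with $\lvec_t\geq\mathbf{0}$ yields
$$\bar f_t(\xivec^\star_t)\leq \bar f_t(\xivec^\star_t)+\sum_{i\in[m]}\lvec_t[i]\bigl(B^{(i)}_t+\epsilon^{(i)}_t-\bar c_t(\xivec^\star_t)[i]\bigr)=\mathfrak{L}_{f_t,c_t,B_t}(\xivec^\star_t,\lvec_t)+\sum_{i\in[m]}\lvec_t[i]\,\epsilon^{(i)}_t.$$
Summing from $t=1$ to $\tau$ and invoking the Lagrangian optimality of $\xivec_t$ selected on Line~\ref{alg1:line7} of Algorithm~\ref{alg:dual} gives $\sum_{t=1}^\tau \bar f_t(\xivec^\star_t)\leq \sum_{t=1}^\tau\mathfrak{L}_{f_t,c_t,B_t}(\xivec_t,\lvec_t)+\sum_{t=1}^\tau\sum_{i}\lvec_t[i]\,\epsilon^{(i)}_t$. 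Because $\lvec_t\in\mathcal{L}$, the second summand is controlled by $\|\lvec_t\|_\infty\leq\|\lvec_t\|_1\leq 1/\rho_{\min}$, yielding $\sum_{t=1}^\tau\sum_{i}\lvec_t[i]\,\epsilon^{(i)}_t\leq \tfrac{1}{\rho_{\min}}\sum_{t=1}^T\sum_{i}\epsilon^{(i)}_t$, which is precisely the extra term in the theorem statement. From here I would reproduce the remainder of the proof of Lemma~\ref{lem:lower_dual} unchanged: apply the dual regret guarantee of $\mathcal{R}^D$ to replace $\sum_t r_t^D(\lvec_t)$ with $\sum_t r_t^D(\lvec)$ for an arbitrary comparator $\lvec\in\mathcal{L}$ (producing the $\tfrac{2}{\rho_{\min}}R_\tau^D$ term), use Azuma--Hoeffding on the martingale differences $f_t(\xvec_t)-\bar f_t(\xivec_t)$ and $\lvec[i](c_t(\xvec_t)[i]-\bar c_t(\xivec_t)[i])$ to pass between realized and expected quantities (producing the $\sqrt{2\tau\ln(T/\delta)}$ terms), and bound $\textsc{OPT}_{\mathcal{D}}(\epsilon_t)-\sum_{t\leq\tau}\bar f_t(\xivec^\star_t)\leq T-\tau$ via $\bar f_t\in[0,1]$.

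To turn this perturbed lemma into the stated regret bound I would then reuse the comparator choice from the proof of Theorem~\ref{thm:reg_allocation}: if $\tau=T$, set $\lvec=\mathbf 0$; otherwise some resource $i^\star$ has been exhausted, and I take $\lvec[i^\star]=1/\rho_{\min}$, $\lvec[j]=0$ for $j\neq i^\star$, so that the combination of the stopping condition $\sum_{t\leq\tau}c_t(\xvec_t)[i^\star]\geq B-1$, the identity $\sum_{t>\tau}B^{(i^\star)}_t\geq (T-\tau)\rho_{\min}$, and an Azuma step absorbs the $(T-\tau)$ loss into the $-\sum_{t,i}\lvec[i](B^{(i)}_t-\bar c_t(\xivec_t)[i])$ term at a cost of $\tfrac{1}{\rho_{\min}}+\tfrac{4}{\rho_{\min}}\sqrt{2\tau\ln(T/\delta)}$. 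Collecting all contributions produces exactly the claimed bound, since the perturbation term $\tfrac{1}{\rho_{\min}}\sum_{t,i}\epsilon^{(i)}_t$ is preserved intact throughout. I do not expect any genuine obstacle beyond bookkeeping; the only delicate point is keeping the weak-duality derivation tight so that the extra term appears with the coefficient $1/\rho_{\min}$ rather than a larger constant, which is why I apply $\|\lvec_t\|_\infty$ rather than $\|\lvec_t\|_1$ when factoring out the Lagrange iterate.
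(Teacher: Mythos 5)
Your proposal is correct and follows essentially the same route as the paper: the extra term arises exactly as you describe, by noting that the relaxed baseline violates each per-round constraint by at most $\epsilon^{(i)}_t$ and paying $\sum_{t,i}\lvec_t[i]\,\epsilon^{(i)}_t\leq \tfrac{1}{\rho_{\min}}\sum_{t,i}\epsilon^{(i)}_t$ via the bound on the Lagrange iterates, after which the Azuma steps, the dual-regret substitution, and the case split on whether a resource is depleted are reused verbatim from Theorem~\ref{thm:reg_allocation}. No gaps.
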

Theorem~\ref{thm:reg_allocation_error} provides a similar regret bound to the one of Theorem~\ref{thm:reg_allocation}. The main difference is that Algorithm~\ref{alg:dual} suffers an additional $\frac{1}{\rho_{\min}}\sum_{t=1}^{T}\sum_{i\in[m]}\epsilon^{(i)}_t$ term due to the error in the baseline definition and where the $\nicefrac{1}{\rho_{\min}}$ factor follows from the Lagrangian space definition. As is easy to see, the regret bound in Theorem~\ref{thm:reg_allocation_error} remains sublinear as long as the sequence of errors is itself sublinear.

We conclude by showing the \emph{dynamic} regret of Algorithm~\ref{alg:meta} for arbitrarily small $\rho_{\min}$.
\begin{restatable}{theorem}{thmregdualworstcaseerror}
	\label{thm:reg_dual_worst_case_error}
	For any $\delta\in(0,1)$, Algorithm~\ref{alg:meta}, when instantiated with a dual regret minimizer which attains a regret upper bound $R^D_T$, guarantees, with probability at least $1-3\delta$,
	$
		\mathfrak{R}_T(\epsilon_t)\leq \frac{14}{\rho}\left(\sqrt{\ln\frac{T}{\delta}}+ \frac{R^D_{T}}{\sqrt{T}}\right)T^{\frac{3}{4}}  + T^{\frac{3}{4}}+  \left(8+\frac{4{T}^{\frac{1}{4}}}{\rho} \right)\sqrt{2T\ln\frac{T}{\delta}} +\frac{2T^{\frac{1}{4}}}{\rho}R^{D}_{T}+\frac{T^{\frac{1}{4}}}{\rho}\sum_{t=1}^{T}\sum_{i\in[m]}\epsilon^{(i)}_t.
	$
\end{restatable}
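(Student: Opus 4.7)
The plan is to combine the meta-algorithm analysis that yields Theorem~\ref{thm:reg_dual_worst_case} with the error-tolerant baseline bookkeeping from Theorem~\ref{thm:reg_allocation_error}. First I would introduce an \emph{auxiliary} baseline $\textsc{OPT}'(\epsilon_t)$, defined exactly as in Problem~\eqref{lp:opt_dyn_error} but with each per-round upper bound $B^{(i)}_t$ replaced by $\overline B^{(i)}_t = B^{(i)}_t(1-T^{-1/4})$ (i.e., the constraints actually ``seen'' by Algorithm~\ref{alg:meta}). The proof then mirrors the non-error case, but tracks the slack produced by the relaxations $\epsilon^{(i)}_t$.

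The first step is to bound $\textsc{OPT}_{\mathcal{D}}(\epsilon_t) - \textsc{OPT}'(\epsilon_t) \leq T^{3/4}$. Given an optimal sequence $\{\xivec^\star_t\}$ for $\textsc{OPT}_{\mathcal{D}}(\epsilon_t)$, consider the mixture $\xivec^{\star\star}_t \coloneqq (1-T^{-1/4})\xivec^\star_t + T^{-1/4}\xivec^\varnothing$. Because $\xvec^\varnothing$ has zero cost and zero reward, $\xivec^{\star\star}_t$ satisfies
$$\E_{\xvec \sim \xivec^{\star\star}_t}[\bar c_t(\xvec)[i]] \leq (1-T^{-1/4})(B^{(i)}_t + \epsilon^{(i)}_t) = \overline B^{(i)}_t + (1-T^{-1/4})\epsilon^{(i)}_t \leq \overline B^{(i)}_t + \epsilon^{(i)}_t,$$
respects the aggregate budget $(1-T^{-1/4})B \leq B$, and attains expected reward $(1-T^{-1/4})\textsc{OPT}_{\mathcal{D}}(\epsilon_t)$. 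This gives $\textsc{OPT}'(\epsilon_t) \geq (1-T^{-1/4})\textsc{OPT}_{\mathcal{D}}(\epsilon_t)$ and hence the desired $T^{3/4}$ gap since $\textsc{OPT}_{\mathcal{D}}(\epsilon_t) \leq T$.

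The second step is an analogue of Lemma~\ref{lem:lower_dual} for $\textsc{OPT}'(\epsilon_t)$. Letting $\xivec^{\star\star\star}_t$ be optimal for $\textsc{OPT}'(\epsilon_t)$, the optimality of $\xivec_t$ in Line~\ref{alg1:line7} together with the feasibility $\E[\bar c_t(\xvec^{\star\star\star}_t)[i]] \leq \overline B^{(i)}_t + \epsilon^{(i)}_t$ yields, after summing over $t \leq \tau$,
$$\sum_{t=1}^\tau \E_{\xvec\sim\xivec^{\star\star\star}_t}[f_t(\xvec)] \leq \sum_{t=1}^\tau \E_{\xvec\sim\xivec_t}[f_t(\xvec)] + \sum_{t=1}^\tau \sum_i \lvec_t[i]\bigl(\overline B^{(i)}_t - \E_{\xvec\sim\xivec_t}[c_t(\xvec)[i]]\bigr) + \sum_{t=1}^\tau \sum_i \lvec_t[i]\epsilon^{(i)}_t.$$
The Lagrangian-slack term is bounded above by $\tfrac{2T^{1/4}}{\rho}R^D_\tau$ via the dual-regret guarantee applied to the comparator $\lvec = 0 \in \mathcal{L}$, the factor $2T^{1/4}/\rho$ arising from the dual payoff range set in Line~\ref{alg1:line3}. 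The error term is at most $\tfrac{T^{1/4}}{\rho}\sum_t\sum_i \epsilon^{(i)}_t$ since $\|\lvec_t\|_1 \leq 1/\hat\rho = T^{1/4}/\rho$. Extending the left-hand sum to $t=1,\dots,T$ costs at most $T-\tau$, which Lemma~\ref{lem:round_bound_dual} (unchanged, as its proof depends only on the algorithm's dynamics, not the baseline) bounds by $\tfrac{14}{\rho}(\sqrt{\ln(T/\delta)} + R^D_T/\sqrt T)T^{3/4}$. Finally, a standard Azuma--Hoeffding argument, identical to the one used for Theorem~\ref{thm:reg_allocation}, replaces the expected rewards and costs by realized ones at additive cost $(8+4T^{1/4}/\rho)\sqrt{2T\ln(T/\delta)}$, with the $T^{1/4}/\rho$ factor inherited from martingale differences involving $\|\lvec_t\|_1$.

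Chaining these bounds gives the stated inequality with failure probability at most $3\delta$ (one per concentration event plus the event in Lemma~\ref{lem:round_bound_dual}). The main technical obstacle is Step~1: one must verify that the mixing construction \emph{simultaneously} preserves the per-round error tolerance, respects the aggregate budget, and inherits the reward up to a $(1-T^{-1/4})$ factor, where the crucial inequality is $(1-T^{-1/4})\epsilon^{(i)}_t \leq \epsilon^{(i)}_t$. Once Step~1 is in place, the only new contribution relative to Theorem~\ref{thm:reg_dual_worst_case} is the $\frac{T^{1/4}}{\rho}\sum_{t,i}\epsilon^{(i)}_t$ term, which slots in cleanly and comes with the ``natural'' slope $\|\lvec_t\|_1$ of the Lagrangian.
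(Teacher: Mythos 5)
Your proposal is correct and follows essentially the same route as the paper: mix the error-tolerant optimal sequence with the void action at rate $T^{-1/4}$ so that it satisfies the rescaled constraints $\overline B^{(i)}_t$ up to $\epsilon^{(i)}_t$ (the key inequality being $(1-T^{-1/4})\epsilon^{(i)}_t\le\epsilon^{(i)}_t$), pay $T^{3/4}$ in reward, charge the per-round violations at slope $\|\lvec_t\|_1\le T^{1/4}/\rho$, take $\lvec=\boldsymbol{0}$ in the dual-regret bound, and invoke Lemma~\ref{lem:round_bound_dual} plus Azuma--Hoeffding. The only cosmetic difference is that you package the mixing step as a comparison between $\textsc{OPT}_{\mathcal{D}}(\epsilon_t)$ and an auxiliary program $\textsc{OPT}'(\epsilon_t)$, whereas the paper plugs the mixed sequence directly into the Lagrangian lower bound of Lemma~\ref{lem:lower_dual_worst_prel}.
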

Theorem~\ref{thm:reg_dual_worst_case_error} shows that it is possible to be robust against a baseline which does not strictly follow the spending plan when $\rho_{\min}$ is arbitrarily small. In such a case, the algorithm pays an additional $\frac{T^{\frac{1}{4}}}{\rho}\cdot\sum_{t=1}^{T}\sum_{i\in[m]}\epsilon^{(i)}_t$ term, which follows from the definition of the Lagrangian space based on $\hat{\rho}$.

\bibliographystyle{plainnat}
\bibliography{example_paper}

%%%%%%%%%%%%%%%%%%%%%%%%%%%%%%%%%%%%%%%%%%%%%%%%%%%%%%%%%%%%

\newpage
\appendix

\section*{Appendix}
The appendix is structured as follows:
\begin{itemize}
	\item In Appendix~\ref{app:related}, we provide further discussion on related works.
	\item In Appendix~\ref{app:allocation}, we provide additional lemmas and the omitted proof for the ORA setting.
	\item In Appendix~\ref{app:full}, we provide additional lemmas and the omitted proof for the OLRC setting under \emph{full feedback}.
	\item In Appendix~\ref{app:bandit}, we provide additional lemmas and the omitted proof for the OLRC setting under \emph{bandit feedback}.
	\item In Appendix~\ref{app:technical}, we provide some technical lemmas which are necessary to prove the main results of our work.
\end{itemize}

\section{Related Works}
\label{app:related}
In the following, we highlight the works that are mainly related to ours.
Specifically, we will focus on the ORA literature, on the OLRC (in particular, bandits with knapsacks) one and, finally, on the learning with general constraints literature.

\paragraph{Online Resource Allocation} 
Early works on online allocation mostly focus on settings where the reward and costs/resource functions are linear in the decision variables, especially in the \emph{random permutation model}. In this setup, an adversary chooses a fixed list of requests, which are then shown in a random order. 
\citet{DevanurHayes2009} studied the \emph{AdWords} problem and introduced a two-step method using dual variables. Their method has regret of order $O(T^{2/3})$. \citet{Feldman2010} proposed similar training-based methods for a wider range of linear allocation problems, attaining comparable regret bounds.
Later, \citet{Agrawal2014OR}, \citet{Devanur2019near}, and \citet{kesselheim2014primal} improved these ideas by designing algorithms that update decisions over time by solving linear programs repeatedly using accumulated data. These methods brought the regret down to $O(T^{1/2})$ and scale better with more resources.
\citet{gupta2016experts} extended this type of approach to the random permutation model. \citet{AgrawalDevanue2015fast} suggested a similar algorithm that keeps and updates dual variables, but it either needs prior knowledge of a benchmark value or must solve extra optimization problems to estimate it.
\citet{balseiro2020dual} proposed a dual mirror descent algorithm for problems with concave rewards and stochastic rewards/costs, getting $O(T^{1/2})$ regret.
\citet{sun2020nearoptimal} showed an algorithm that gets better than $O(T^{1/2})$ regret when the request distribution is known.
\citet{kanoria2020blind} used online dual mirror descent in managing circulating resources in closed systems, such as those used in ride-hailing platforms.
Differently, in the adversarial online allocation problem, sublinear regret is not possible, so the goal becomes designing algorithms with constant-factor guarantees compared to the offline optimum.
\citet{Mehta2007JACM} and \citet{buchbinder2007primaldual} studied the \emph{AdWords} problem—a special case of online matching where rewards are proportional to resource usage. They proposed primal-dual algorithms that get a $(1 - 1/e)$ competitive ratio, which is known to be the best possible. But when rewards are not proportional to the resource consumption, a fixed competitive ratio cannot be ensured. To handle this, \citet{Feldman2009} suggested a version with free disposal, where resource limits can be exceeded, and only the highest-reward allocations count in the objective. Their method also achieves a $(1 - 1/e)$ competitive ratio.
\citet{balseiro2023best} develop the first best-of-many-worlds algorithm for the online allocation problem. Their algorithm simultaneously handle stochastic, adversarial and non-stationary rewards/costs.
In a recent work,~\citet{balseiro2023robust} study the online allocation problem where a sample from each adversarially changing distributions $\mathcal{F}_t, \mathcal{C}_t$  is known beforehand. To show that, in such a setting, it is possible to achieve sublinear regret--this is possible when the samples are slightly corrupted, too--, the authors develop a dual algorithm which works with an estimated spending plan. When this spending plan is correct, their algorithm shares different similarities with Algorithm~\ref{alg:dual}.
Nonetheless, their analysis relies on the assumption that there exists a $\kappa\in\mathbb{R}_{\geq0}$ s.t. $f_t(\xvec)\leq \kappa c_t(\xvec)$\footnote{\citet{balseiro2023robust} focus on online allocation with a single resource.} for all $\xvec \in \mathcal{X}, t\in[T]$, and their regret bound scales as $\widetilde{O}(\kappa\sqrt{T})$. This is not the case in our work, where we study the standard online allocation problem as presented in~\citet{balseiro2023best}.

\paragraph{Online Learning with Resource Constraints} 
The stochastic bandits with knapsacks framework was originally introduced and optimally solved by~\citet{badanidiyuru2013bandits,Badanidiyuru2018jacm}. Subsequently, other algorithms achieving optimal regret in the stochastic setting were developed by~\citet{agrawal2014bandits,agrawal2019bandits} and~\citet{immorlica2019adversarial}. Over time, the bandits with knapsack framework has been extended to a variety of scenarios, including more general types of constraints~\citep{agrawal2014bandits,agrawal2019bandits}, contextual bandits~\citep{dudik2011efficient,badanidiyuru2014resourceful,agarwal2014taming,agrawal2016efficient}, and combinatorial semi-bandits~\citep{sankararaman2018combinatorial}.
The adversarial version of bandits with knapsacks was introduced by~\citet{immorlica2019adversarial, immorlica2022adversarial}, who showed that it is possible to achieve a $O(m \log T)$ competitive ratio when an oblivious adversary selects the rewards and costs. 
\citet{immorlica2019adversarial, immorlica2022adversarial} also establish a matching lower bound, proving that no algorithm can achieve better than a $O(\log T)$ competitive ratio on all instances, even with just two arms and one resource. This lower bound was further tightened by~\citet{kesselheim2020online}, who show an $O(\log m \log T)$ competitive ratio, and demonstrate that it is optimal up to constant factors for the general adversarial bandits with knapsacks setting.
\citet{castiglioni2022online} propose the first best-of-both-worlds algorithm for bandits with knapsacks. The authors propose a primal-dual algorithm which simultaneously handle stochastic and adversarial rewards/cost function. They propose an algorithm for the \emph{full feedback} setting and one for the \emph{bandit feedback} case.
In a contemporaneous work, \citet{braverman2025newbenchmarkonlinelearning} show how it is possible to overcome the impossibility result for adversarial bandits with knapsacks for specific benchmarks, which, intuitively, are not too far from the solution who spends the budget uniformly over the rounds.

\paragraph{Learning with General Constraints} There exists an extended literature on online learning problem with general constraints (\eg,~\citep{Mannor, liakopoulos2019cautious}). Two main settings are usually studied. In the soft constraints setting (\eg,~\citep{Unifying_Framework}), the aim is to guarantee that the constraint violations incurred by the algorithm grow sub-linearly. In the hard constraints setting, the algorithms must satisfy the constraints at every round, by assuming knowledge of a strictly feasible decision (\eg,~\citep{pacchiano2021stochastic}). Both soft and hard constraints have been generalized to settings that are more challenging than multi-armed bandits, such as linear bandits (\eg,~\citep{polytopes}). We acknowledge that online learning with constraints has been studies even in multi-state environment such as constrained Markov decision processes (CMDPs)~\citep{Online_Learning_in_Weakly_Coupled,Exploration_Exploitation,Constrained_Upper_Confidence,bai2020provably,Constrained_episodic_reinforcement,bounded,ding_non_stationary,stradi2024,stradi2024learning, mullertruly,stradi2024nonstationary,stradi2024optimal}. Some works focus on constrained online convex optimization settings (\eg,~\citep{mahdavi2012trading, jenatton2016adaptive, yu2017online,sinha2024optimal}).

\section{Omitted Proofs for Online Resource Allocation}
\label{app:allocation}
In this section, we provide the results and the omitted proofs for the \emph{online resource allocation setting}.

\subsection{Theoretical Guarantees of Algorithm~\ref{alg:dual}}
We start by providing the results for Algorithm~\ref{alg:dual}.
\label{app:allocation_standard}
Specifically, we lower bound the expected rewards attained by the algorithm as follows. 
\begin{lemma}\label{lem:lower_dual_prel}
	Algorithm~\ref{alg:dual}, when instantiated with a dual regret minimizer which attains a regret upper bound $R^D_T$, guarantees the following bound:
	\begin{align*}
		\sum_{t=1}^{\tau}\E_{\xvec\sim\xivec_t}\left[f_t(\xvec)\right]  \geq&  \sum_{t=1}^{\tau}\left(\E_{\xvec\sim\xivec^{(t)}}\left[f_t(\xvec)\right]-\sum_{i\in[m]}\lvec_t  [i]\cdot\left( B^{(i)}_t-\E_{\xvec\sim\xivec^{(t)}}\left[c_t(\xvec)[i]\right]\right)\right)\\& -\sum_{t=1}^{\tau}\sum_{i\in[m]}\lvec [i]\cdot \left( B^{(i)}_{t}-\E_{\xvec\sim\xivec_t}\left[c_t(\xvec)[i]\right]\right)- \frac{2}{\rho_{\min}}R^{D}_{\tau}.
	\end{align*}
	where $\lvec\in\mathcal{L}$ is an arbitrary Lagrange multiplier, $\{\xivec^{(t)}\in\Xi\}_{t=1}^\tau$ is an arbitrary sequence of strategy mixtures and $\tau$ is the stopping time of the algorithm.
\end{lemma}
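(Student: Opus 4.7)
The plan is the standard primal-dual decomposition: use Lagrangian optimality of the primal step to lower-bound the reward against the arbitrary comparator sequence, then invoke the dual regret guarantee to swap the iterates $\lvec_t$ for the fixed $\lvec$ in the trailing sum. First I would restrict attention to rounds $t\leq\tau$: by the definition of the stopping time in Line~\ref{alg1:line8}, enough budget remains at every such round and the algorithm actually plays $\xvec_t\sim\xivec_t$ rather than the void action, so the inequalities below legitimately involve the true mixture produced in Line~\ref{alg1:line7}.

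Second, I would exploit the primal update. Line~\ref{alg1:line7} defines $\xivec_t$ as the maximizer of $\E_{\xvec\sim\xivec}[f_t(\xvec)]-\sum_i\lvec_t[i]\,\E_{\xvec\sim\xivec}[c_t(\xvec)[i]]$; adding the primal-independent constant $\sum_i\lvec_t[i]\,B^{(i)}_t$ shows that $\xivec_t$ also maximizes $\mathfrak{L}_{f_t,c_t,B_t}(\cdot,\lvec_t)$. Hence for any comparator $\xivec^{(t)}\in\Xi$,
\[
\E_{\xvec\sim\xivec_t}[f_t(\xvec)]+\sum_i\lvec_t[i]\bigl(B^{(i)}_t-\E_{\xvec\sim\xivec_t}[c_t(\xvec)[i]]\bigr)\;\geq\;\E_{\xvec\sim\xivec^{(t)}}[f_t(\xvec)]+\sum_i\lvec_t[i]\bigl(B^{(i)}_t-\E_{\xvec\sim\xivec^{(t)}}[c_t(\xvec)[i]]\bigr).
\]
Summing from $t=1$ to $\tau$ and isolating $\sum_t\E[f_t(\xivec_t)]$ yields a bound that matches the first two groups of terms in the lemma's right-hand side, except that the coefficient in the trailing sum is $\lvec_t$ rather than the fixed comparator $\lvec$.

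Third, I would invoke the dual regret minimizer to handle that discrepancy. The feedback passed to $\mathcal{R}^D$ in Line~\ref{alg1:line11} is exactly $r^D_t(\lvec)=-\sum_i\lvec[i]\bigl(B^{(i)}_t-\E_{\xvec\sim\xivec_t}[c_t(\xvec)[i]]\bigr)$, a linear (hence full-information) function on $\mathcal{L}$. Since $\|\lvec\|_1\leq 1/\rho_{\min}$ on $\mathcal{L}$ and $B^{(i)}_t,\E[c_t(\xivec_t)[i]]\in[0,1]$, every such $r^D_t$ takes values in the range $[-1/\rho_{\min},1/\rho_{\min}]$ declared in Line~\ref{alg1:line3}. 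The regret guarantee of $\mathcal{R}^D$ at time $\tau$, rescaled to this range, then reads
\[
-\sum_{t=1}^{\tau}\sum_{i}\lvec_t[i]\bigl(B^{(i)}_t-\E_{\xvec\sim\xivec_t}[c_t(\xvec)[i]]\bigr)\;\geq\;-\sum_{t=1}^{\tau}\sum_{i}\lvec[i]\bigl(B^{(i)}_t-\E_{\xvec\sim\xivec_t}[c_t(\xvec)[i]]\bigr)-\tfrac{2}{\rho_{\min}}R^{D}_{\tau}
\]
for every $\lvec\in\mathcal{L}$. Substituting into the bound obtained in the previous paragraph delivers the stated inequality.

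There is no deep obstacle: the proof collapses to two lines once the right intermediate quantity is identified. The only points requiring care are the payoff-range accounting for $\mathcal{R}^D$ (so that the factor $2/\rho_{\min}$ is consistent with the normalization under which $R^D_{\tau}$ is stated) and the verification that $\mathcal{L}$ is large enough to host the downstream choice of $\lvec$ that Lemma~\ref{lem:lower_dual} will make. Note that $\tau$, not $T$, appears on the right-hand side because both the primal optimality step and the dual regret guarantee are applied only over the rounds in which $\xvec_t\sim\xivec_t$ is actually played.
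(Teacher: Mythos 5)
Your proposal is correct and follows essentially the same route as the paper's proof: the per-round Lagrangian optimality of $\xivec_t$ from Line~\ref{alg1:line7} (with the constant $\sum_i\lvec_t[i]B^{(i)}_t$ added) gives the comparison against the arbitrary sequence $\{\xivec^{(t)}\}$, and the dual regret guarantee over the payoff range $[-1/\rho_{\min},1/\rho_{\min}]$ supplies the swap from $\lvec_t$ to the fixed $\lvec$ at a cost of $\tfrac{2}{\rho_{\min}}R^D_\tau$. No gaps.
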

\begin{proof}
	In the following, we aim at bounding the rewards attained by Algorithm~\ref{alg:dual} during the learning dynamic, that is, until the round $\tau$ the algorithm has depleted its budget.
	
	First, by the strategy mixture selection criterion, it holds:
	\begin{equation}\label{eq1:reg_dual_lower}\E_{\xvec\sim\xivec}\left[f_t(\xvec)\right]-\sum_{i\in[m]}\lvec_t [i]\cdot \E_{\xvec\sim\xivec}\left[c_t(\xvec)[i]\right] \leq \E_{\xvec\sim\xivec_t}\left[f_t(\xvec)\right]-\sum_{i\in[m]}\lvec_t [i]\cdot \E_{\xvec\sim\xivec_t}\left[c_t(\xvec)[i]\right],
	\end{equation}
	for all $\xivec\in\Xi$ and for all $t\in[\tau]$.
	
	Summing Equation~\eqref{eq1:reg_dual_lower} over $t$ we obtain, for all sequences $\{\xivec^{(t)}\in\Xi\}_{t=1}^\tau$, the following bound:   \begin{align*}&\sum_{t=1}^{\tau}\left(\E_{\xvec\sim\xivec^{(t)}}\left[f_t(\xvec)\right]-\sum_{i\in[m]}\lvec_t [i]\cdot \E_{\xvec\sim\xivec^{(t)}}\left[c_t(\xvec)[i]\right]\right) \\ &\mkern100mu\leq \sum_{t=1}^{\tau}\left(\E_{\xvec\sim\xivec_t}\left[f_t(\xvec)\right]-\sum_{i\in[m]}\lvec_t [i]\cdot \E_{\xvec\sim\xivec_t}\left[c_t(\xvec)[i]\right]\right),
	\end{align*}
	which in turn implies:  \begin{align*}\sum_{t=1}^{\tau}\E_{\xvec\sim\xivec_t}\left[f_t(\xvec)\right]  \geq  \sum_{t=1}^{\tau}\left(\E_{\xvec\sim\xivec^{(t)}}\left[f_t(\xvec)\right]-\sum_{i\in[m]}\lvec_t [i]\cdot \E_{\xvec\sim\xivec^{(t)}}\left[c_t(\xvec)[i]\right]+\sum_{i\in[m]}\lvec_t [i]\cdot \E_{\xvec\sim\xivec_t}\left[c_t(\xvec)[i]\right]\right).
	\end{align*}
	Employing the dual regret minimizer guarantees, we have, for any Lagrange variable $\lvec\in\mathcal{L}$:
	\begin{equation}\label{eq2:reg_dual_lower}  \sum_{t=1}^{\tau}\sum_{i\in[m]}\lvec_t [i]\cdot \left(  B^{(i)}_{t}-\E_{\xvec\sim\xivec_t}\left[c_t(\xvec)[i]\right]\right) - \sum_{t=1}^{\tau}\sum_{i\in[m]}\lvec [i]\cdot \left(  B^{(i)}_{t}-\E_{\xvec\sim\xivec_t}\left[c_t(\xvec)[i]\right]\right) \leq \frac{2}{\rho_{\min}}R^{D}_{\tau}, 
	\end{equation}
	where the $\frac{2}{\rho_{\min}}$ dependence follows from the payoffs' range of the dual regret minimizer.
	
	Thus, substituting Equation~\eqref{eq2:reg_dual_lower} in the previous bound, we get:
	\begin{align*}
		\sum_{t=1}^{\tau}\E_{\xvec\sim\xivec_t}\left[f_t(\xvec)\right]  \geq&  \sum_{t=1}^{\tau}\left(\E_{\xvec\sim\xivec^{(t)}}\left[f_t(\xvec)\right]-\sum_{i\in[m]}\lvec_t  [i]\cdot\left( B^{(i)}_t-\E_{\xvec\sim\xivec^{(t)}}\left[c_t(\xvec)[i]\right]\right)\right)\\& -\sum_{t=1}^{\tau}\sum_{i\in[m]}\lvec [i]\cdot \left(  B^{(i)}_{t}-\E_{\xvec\sim\xivec_t}\left[c_t(\xvec)[i]\right]\right)- \frac{2}{\rho_{\min}}R^{D}_{\tau}.
	\end{align*}
	This concludes the proof.
\end{proof}

We proceed by relating the previous lower bound to the \emph{dynamic} optimum.

\lemlowerdual*
\begin{proof}
	We first employ Lemma~\ref{lem:lower_dual_prel} to obtain, for all strategy mixtures sequences $\{\xivec^{(t)}\in\Xi\}_{t=1}^{\tau}$ and for all Lagrangian variables $\lvec\in\mathcal{L}$, the following lower bound: 
	\begin{align*}   \sum_{t=1}^{\tau}\E_{\xvec\sim\xivec_t}\left[f_t(\xvec)\right]  \geq&  \sum_{t=1}^{\tau}\left(\E_{\xvec\sim\xivec^{(t)}}\left[f_t(\xvec)\right]-\sum_{i\in[m]}\lvec_t  [i]\cdot\left(B^{(i)}_t-\E_{\xvec\sim\xivec^{(t)}}\left[c_t(\xvec)[i]\right]\right)\right)\\& -\sum_{t=1}^{\tau}\sum_{i\in[m]}\lvec [i]\cdot \left(  B^{(i)}_{t}-\E_{\xvec\sim\xivec_t}\left[c_t(\xvec)[i]\right]\right)- \frac{2}{\rho_{\min}}R^{D}_{\tau},
	\end{align*}
	We now focus on lower bounding the following term:
	\begin{equation*}  \sum_{t=1}^{\tau}\left(\E_{\xvec\sim\xivec^{(t)}}\left[f_t(\xvec)\right]-\sum_{i\in[m]}\lvec_t  [i]\cdot\left(B^{(i)}_t-\E_{\xvec\sim\xivec^{(t)}}\left[c_t(\xvec)[i]\right]\right)\right)
	\end{equation*}
	Thus, notice that by the definition of Program~\eqref{lp:opt_dyn}, there exists a sequence strategy mixture $\{\xivec_t^*\}_{t=1}^T$ such that  $\E_{\xvec\sim \xivec^*_t}\left[\bar c_t(\xvec)\right]\leq B^{(i)}_t$ for all $t\in[T], i\in[m]$ and
	$\sum_{t=1}^T\E_{\xvec\sim \xivec^*_t}\left[\bar f_t(\xvec)\right]\geq \textsc{OPT}_{\mathcal{D}}-\gamma$, for all $\gamma>0$. In the rest of the proof, we will omit the the dependence on $\gamma$, since it can be chosen arbitrarily small, thus being negligible in the regret bound.
	
	Selecting $\{\xivec^{(t)}\}_{t=1}^\tau=\{\xivec_t^*\}_{t=1}^\tau$ and employing Lemma~\ref{lem:azuma_sequence}, the quantity of interest is lower bounded as:
	\begin{align*}  &\sum_{t=1}^{\tau}\left(\E_{\xvec\sim\xivec_t^{*}}\left[f_t(\xvec)\right]-\sum_{i\in[m]}\lvec_t  [i]\cdot\left(B^{(i)}_t-\E_{\xvec\sim\xivec_t^{*}}\left[c_t(\xvec)[i]\right]\right)\right)  \\ &\mkern50mu\geq\sum_{t=1}^{\tau}\left(\E_{\xvec\sim\xivec_t^{*}}\left[\bar f_t(\xvec)\right]-\sum_{i\in[m]}\lvec_t  [i]\cdot\left(B^{(i)}_t-\E_{\xvec\sim\xivec_t^{*}}\left[\bar c_t(\xvec)[i]\right]\right)\right)\\&\mkern100mu-\left(4+4\max_{\lvec\in\mathcal{L}}\|\lvec\|_1\right)\sqrt{2\tau\ln\frac{T}{\delta}},
	\end{align*}
	which holds with probability at least $1-\delta$.
	Moreover, by the baseline definition, it holds:
	\begin{equation*}
		\sum_{t=1}^{\tau}\sum_{i\in[m]}\lvec_t  [i]\cdot\left(B^{(i)}_t-\E_{\xvec\sim\xivec_t^{*}}\left[\bar c_t(\xvec)[i]\right]\right) \geq 0,
	\end{equation*}
	since the \emph{dynamic} optimum satisfies the learning plan at each round.
	
	Combining everything, we get, with probability at least $1-\delta$, the following lower bound:
	\begin{align*}  &\sum_{t=1}^{\tau}\left(\E_{\xvec_t\sim\xivec_t^{*}}\left[f_t(\xvec)\right]-\sum_{i\in[m]}\lvec_t  [i]\cdot\left(B^{(i)}_t-\E_{\xvec\sim\xivec_t^{*}}\left[c_t(\xvec)[i]\right]\right)\right)  \\ &\mkern50mu\geq\sum_{t=1}^{\tau}\left(\E_{\xvec\sim\xivec_t^{*}}\left[\bar f_t(\xvec)\right]-\sum_{i\in[m]}\lvec_t  [i]\cdot\left(B^{(i)}_t-\E_{\xvec\sim\xivec_t^{*}}\left[\bar c_t(\xvec)[i]\right]\right)\right)\\&\mkern250mu-\left(4+4\max_{\lvec\in\mathcal{L}}\|\lvec\|_1\right)\sqrt{2\tau\ln\frac{T}{\delta}}\\  &\mkern50mu\geq\sum_{t=1}^{\tau}\E_{\xvec\sim\xivec_t^{*}}\left[\bar f_t(\xvec)\right]-\left(4+4\max_{\lvec\in\mathcal{L}}\|\lvec\|_1\right)\sqrt{2\tau\ln\frac{T}{\delta}}\\
		&\mkern50mu\geq\textsc{OPT}_{\mathcal{D}} - (T-\tau) - \left(4+4\max_{\lvec\in\mathcal{L}}\|\lvec\|_1\right)\sqrt{2\tau\ln\frac{T}{\delta}}.
	\end{align*}
	Noticing that by the update of Algorithm~\ref{alg:dual}, it holds: 
    \[
        \sum_{t=1}^\tau \E_{\xvec\sim\xivec_t}[f_t(\xvec)] = \sum_{t=1}^T\E_{\xvec\sim\xivec_t}[f_t(\xvec)],
    \] 
    which concludes the proof.
\end{proof}

We are now ready to prove the final regret bound for Algorithm~\ref{alg:dual}.
\thmregallocation*
\begin{proof}
	We start by employing Lemma~\ref{lem:lower_dual} to get, with probability at least $1-\delta$, the following bound:
	\begin{align*}   \sum_{t=1}^{T}\E_{\xvec\sim\xivec_t}\left[f_t(\xvec)\right]  \geq & \ \textsc{OPT}_{\mathcal{D}} - (T-\tau)-\left(4+4\max_{\lvec\in\mathcal{L}}\|\lvec\|_1\right)\sqrt{2\tau\ln\frac{T}{\delta}}\\& -\sum_{t=1}^{\tau}\sum_{i\in[m]}\lvec [i]\cdot \left(  B^{(i)}_{t}-\E_{\xvec\sim\xivec_t}\left[c_t(\xvec)[i]\right]\right)- \frac{2}{\rho_{\min}}R^{D}_{\tau},
	\end{align*}
	Thus, we employ Lemma~\ref{lem:azuma}, to recover the following result, which holds with probability at least $1-2\delta$, by Union Bound:
	\begin{align*}
		\sum_{t=1}^{T}f_t(\xvec_t) \geq&\ \textsc{OPT}_{\mathcal{D}} - (T-\tau)- \left(4+4\max_{\lvec\in\mathcal{L}}\|\lvec\|_1\right)\sqrt{2\tau\ln\frac{T}{\delta}}\\
		&-\sum_{t=1}^{\tau}\sum_{i\in[m]}\lvec [i]\cdot \left( B^{(i)}_{t}-c_t(\xvec_t)[i]\right)- \frac{2}{\rho_{\min}}R^{D}_{\tau}- (4+4\|\lvec\|_1)\sqrt{2\tau\ln{\frac{T}{\delta}}},
	\end{align*}
	which in turn implies the following \emph{dynamic} regret bound:
	\begin{equation*}
		\mathfrak{R}_T\leq  (T-\tau) + \sum_{t=1}^{\tau}\sum_{i\in[m]}\lvec [i]\cdot \left( B^{(i)}_{t}-c_t(\xvec_t)[i]\right)+ \frac{2}{\rho_{\min}}R^{D}_{\tau}+ \left(8+8\max_{\lvec\in\mathcal{L}}\|\lvec\|_1\right)\sqrt{2\tau\ln\frac{T}{\delta}}.
	\end{equation*}
	which holds with probability at least $1-2\delta$.
	We now split the analysis in two cases. Specifically, in case $(i)$, it holds $T=\tau$, namely, the algorithm has not depleted the budget during the learning dynamic, while in case $(ii)$, it holds $T>\tau$.
	
	\paragraph{Bound for case $(i)$.} When $T=\tau$, we choose $\lvec=\boldsymbol{0}$ to obtain, with probability at least $1-2\delta$:
	\begin{equation*}
		\mathfrak{R}_T\leq  (T-\tau)  + \frac{2}{\rho_{\min}}R^{D}_{\tau}+ \left(8+8\max_{\lvec\in\mathcal{L}}\|\lvec\|_1\right)\sqrt{2\tau\ln\frac{T}{\delta}}.
	\end{equation*}
	
	\paragraph{Bound for case $(ii)$.} When $T>\tau$, we notice that, due the budget constraints, there exists a resource $i^*\in[m]$ such that the following holds:
	\begin{align*}
		\sum_{t=1}^{\tau}c_t(\xvec_t)[i^*]+1 \geq B 
		\geq \sum_{t=1}^{T} B_t^{(i^*)}.
	\end{align*}
	Thus, the \emph{dynamic} regret can be bounded with probability at least $1-2\delta$ as follows:
	\begin{align*}
		\mathfrak{R}_T&\leq (T-\tau) + \left(8+\frac{8}{\rho_{\min}}\right)\sqrt{2\tau\ln\frac{T}{\delta}} + \sum_{t=1}^{\tau}\sum_{i\in[m]}\lvec [i]\cdot \left( B^{(i)}_{t}-c_t(\xvec_t)[i]\right) + \frac{2}{\rho_{\min}}R^{D}_{\tau}\\
		&\leq (T-\tau) +\left(8+\frac{8}{\rho_{\min}}\right)\sqrt{2\tau\ln\frac{T}{\delta}} + \frac{1}{\rho_{\min}} \left( \sum_{t=1}^{\tau}B^{(i)}_{t}-\sum_{t=1}^{T} B_t^{(i^*)}+1\right) + \frac{2}{\rho_{\min}}R^{D}_{\tau}\\
		& = (T-\tau) + \left(8+\frac{8}{\rho_{\min}}\right)\sqrt{2\tau\ln\frac{T}{\delta}}  -\frac{1}{\rho_{\min}} \left(\sum_{t=\tau+1}^{T} B_t^{(i^*)}+1\right) + \frac{2}{\rho_{\min}}R^{D}_{\tau}\\
		& \leq  (T-\tau) + \left(8+\frac{8}{\rho_{\min}}\right)\sqrt{2\tau\ln\frac{T}{\delta}}  -\frac{1}{\rho_{\min}}\left(\sum_{t=\tau+1}^{T}  \rho_{\min}+1\right) + \frac{2}{\rho_{\min}}R^{D}_{\tau}\\
		& \leq (T-\tau) + \left(8+\frac{8}{\rho_{\min}}\right)\sqrt{2\tau\ln\frac{T}{\delta}}  -(T-\tau)+1+\frac{1}{\rho_{\min}} + \frac{2}{\rho_{\min}}R^{D}_{\tau}\\
		& \leq 1+\frac{1}{\rho_{\min}}+\left(8+\frac{8}{\rho_{\min}}\right)\sqrt{2\tau\ln\frac{T}{\delta}} + \frac{2}{\rho_{\min}}R^{D}_{\tau} ,
	\end{align*}
	where the first inequality holds since $\max_{\lvec\in\mathcal{L}}\|\lvec\|_1\leq \frac{1}{\rho_{\min}}$
	and the second inequality holds by selecting $\lvec$ such that $\lvec[i^*]=\frac{1}{\rho_{\min}}$ and $\lvec[i]=0$ for all others $i\in[m]$.
	
	Finally, we notice the following trivial bounds:
	\begin{equation*}
		R^{D}_{\tau}\leq R^{D}_{T}, \quad\left(8+\frac{8}{\rho_{\min}}\right)\sqrt{2\tau\ln\frac{T}{\delta}}\leq \left(8+\frac{8}{\rho_{\min}}\right)\sqrt{2T\ln\frac{T}{\delta}}.
	\end{equation*}
	This concludes the proof.
\end{proof}

\subsubsection{Robustness to Baselines Deviating from the Spending Plan}
\label{app:allocation_standard_error}

We provide the regret of Algorithm~\ref{alg:dual} with respect to a baseline which deviates from the spending plan.
\thmregallocationerror*

\begin{proof}
	Similarly to Lemma~\ref{lem:lower_dual},  we employ Lemma~\ref{lem:lower_dual_prel} to obtain, for all strategy mixtures sequences $\{\xivec^{(t)}\in\Xi\}_{t=1}^{\tau}$ and for all Lagrangian variables $\lvec\in\mathcal{L}$, the following lower bound: 
	\begin{align*}   \sum_{t=1}^{\tau}\E_{\xvec\sim\xivec_t}\left[f_t(\xvec)\right]  \geq&  \sum_{t=1}^{\tau}\left(\E_{\xvec\sim\xivec^{(t)}}\left[f_t(\xvec)\right]-\sum_{i\in[m]}\lvec_t  [i]\cdot\left(B^{(i)}_t-\E_{\xvec\sim\xivec^{(t)}}\left[c_t(\xvec)[i]\right]\right)\right)\\& -\sum_{t=1}^{\tau}\sum_{i\in[m]}\lvec [i]\cdot \left(  B^{(i)}_{t}-\E_{\xvec\sim\xivec_t}\left[c_t(\xvec)[i]\right]\right)- \frac{2}{\rho_{\min}}R^{D}_{\tau},
	\end{align*}
	The key difference with respect to the analysis of Lemma~\ref{lem:lower_dual} is how to bound the following term:
	\begin{equation*}  \sum_{t=1}^{\tau}\left(\E_{\xvec\sim\xivec^{(t)}}\left[f_t(\xvec)\right]-\sum_{i\in[m]}\lvec_t  [i]\cdot\left(B^{(i)}_t-\E_{\xvec\sim\xivec^{(t)}}\left[c_t(\xvec)[i]\right]\right)\right)
	\end{equation*}
	Thus, notice that by the definition of Program~\eqref{lp:opt_dyn_error}, there exists a sequence strategy mixture $\{\xivec_t^*\}_{t=1}^T$ such that  $\E_{\xvec\sim \xivec^*_t}\left[\bar c_t(\xvec)\right]\leq B^{(i)}_t-\epsilon^{(i)}_t$ for all $t\in[T], i\in[m]$ and
	$\sum_{t=1}^T\E_{\xvec\sim \xivec^*_t}\left[\bar f_t(\xvec)\right]\geq \textsc{OPT}_{\mathcal{D}}(\epsilon_t)-\gamma$, for all $\gamma>0$. In the rest of the proof, we will omit the the dependence on $\gamma$, since it can be chosen arbitrarily small, thus being negligible in the regret bound.
	
	Selecting $\{\xivec^{(t)}\}_{t=1}^\tau=\{\xivec_t^*\}_{t=1}^\tau$ and employing Lemma~\ref{lem:azuma_sequence}, the quantity of interest is lower bounded as:
	\begin{align*}  &\sum_{t=1}^{\tau}\left(\E_{\xvec\sim\xivec_t^{*}}\left[f_t(\xvec)\right]-\sum_{i\in[m]}\lvec_t  [i]\cdot\left(B^{(i)}_t-\E_{\xvec\sim\xivec_t^{*}}\left[c_t(\xvec)[i]\right]\right)\right)  \\ &\mkern50mu\geq\sum_{t=1}^{\tau}\left(\E_{\xvec\sim\xivec_t^{*}}\left[\bar f_t(\xvec)\right]-\sum_{i\in[m]}\lvec_t  [i]\cdot\left(B^{(i)}_t-\E_{\xvec\sim\xivec_t^{*}}\left[\bar c_t(\xvec)[i]\right]\right)\right) \\ &\mkern200mu-\left(4+4\max_{\lvec\in\mathcal{L}}\|\lvec\|_1\right)\sqrt{2\tau\ln\frac{T}{\delta}},
	\end{align*}
	which holds with probability at least $1-\delta$.
	Moreover, by the baseline definition, it holds:
	\begin{equation*}
		\sum_{t=1}^{\tau}\sum_{i\in[m]}\lvec_t  [i]\cdot\left(B^{(i)}_t-\E_{\xvec\sim\xivec_t^{*}}\left[\bar c_t(\xvec)[i]\right]\right) \geq - \frac{1}{\rho_{\min}}\sum_{t=1}^{\tau}\sum_{i\in[m]}\epsilon^{(i)}_t,
	\end{equation*}
	since the \emph{dynamic} optimum satisfies the learning plan at each round, up to the error terms.
	
	Combining everything, we get, with probability at least $1-\delta$, the following lower bound:
	\begin{align*}  &\sum_{t=1}^{\tau}\left(\E_{\xvec_t\sim\xivec_t^{*}}\left[f_t(\xvec)\right]-\sum_{i\in[m]}\lvec_t  [i]\cdot\left(B^{(i)}_t-\E_{\xvec\sim\xivec_t^{*}}\left[c_t(\xvec)[i]\right]\right)\right)  \\ &\geq\sum_{t=1}^{\tau}\left(\E_{\xvec\sim\xivec_t^{*}}\left[\bar f_t(\xvec)\right]-\sum_{i\in[m]}\lvec_t  [i]\cdot\left(B^{(i)}_t-\E_{\xvec\sim\xivec_t^{*}}\left[\bar c_t(\xvec)[i]\right]\right)\right) -\left(4+4\max_{\lvec\in\mathcal{L}}\|\lvec\|_1\right)\sqrt{2\tau\ln\frac{T}{\delta}}\\  &\geq\sum_{t=1}^{\tau}\E_{\xvec\sim\xivec_t^{*}}\left[\bar f_t(\xvec)\right] - \frac{1}{\rho_{\min}}\sum_{t=1}^{\tau}\sum_{i\in[m]}\epsilon^{(i)}_t-\left(4+4\max_{\lvec\in\mathcal{L}}\|\lvec\|_1\right)\sqrt{2\tau\ln\frac{T}{\delta}}\\
		&\geq\textsc{OPT}_{\mathcal{D}}(\epsilon_t) - (T-\tau) - \frac{1}{\rho_{\min}}\sum_{t=1}^{\tau}\sum_{i\in[m]}\epsilon^{(i)}_t- \left(4+4\max_{\lvec\in\mathcal{L}}\|\lvec\|_1\right)\sqrt{2\tau\ln\frac{T}{\delta}}.
	\end{align*}
	Noticing that by the update of Algorithm~\ref{alg:dual}, it holds: \[\sum_{t=1}^\tau \E_{\xvec\sim\xivec_t}[f_t(\xvec)] = \sum_{t=1}^T\E_{\xvec\sim\xivec_t}[f_t(\xvec)],\]
	and employing~Lemma~\ref{lem:azuma}, we get the following bound:
	\begin{align*}
		\sum_{t=1}^{T}f_t(\xvec_t) \geq&\ \textsc{OPT}_{\mathcal{D}}(\epsilon_t) - (T-\tau)-\frac{1}{\rho_{\min}}\sum_{t=1}^{\tau}\sum_{i\in[m]}\epsilon^{(i)}_t- \left(4+4\max_{\lvec\in\mathcal{L}}\|\lvec\|_1\right)\sqrt{2\tau\ln\frac{T}{\delta}}\\
		&-\sum_{t=1}^{\tau}\sum_{i\in[m]}\lvec [i]\cdot \left( B^{(i)}_{t}-c_t(\xvec_t)[i]\right)- \frac{2}{\rho_{\min}}R^{D}_{\tau}- (4+4\|\lvec\|_1)\sqrt{2\tau\ln{\frac{T}{\delta}}},
	\end{align*}
	which holds with probability at least $1-2\delta$, by Union Bound.
	Thus, employing the same analysis of Theorem~\ref{thm:reg_allocation} and noticing that:
	\[   \sum_{t=1}^{\tau}\sum_{i\in[m]}\epsilon^{(i)}_t\leq \sum_{t=1}^{T}\sum_{i\in[m]}\epsilon^{(i)}_t,\]
	concludes the proof.
\end{proof}

\subsection{Theoretical Guarantees of Algorithm~\ref{alg:meta}}
\label{app:allocation_worst}
In this section, we present the results attained by the meta procedure provided in Algorithm~\ref{alg:meta}.

We start by the following lemma.

\begin{lemma}\label{lem:lower_dual_worst_prel}
	Algorithm~\ref{alg:meta}, when instantiated with a dual regret minimizer which attains a regret upper bound $R^D_T$, guarantees the following bound:
	\begin{align*}
		\sum_{t=1}^{\tau}\E_{\xvec\sim\xivec_t}\left[f_t(\xvec)\right]  \geq&  \sum_{t=1}^{\tau}\left(\E_{\xvec\sim\xivec^{(t)}}\left[f_t(\xvec)\right]-\sum_{i\in[m]}\lvec_t  [i]\cdot\left(\overline B^{(i)}_t-\E_{\xvec\sim\xivec^{(t)}}\left[c_t(\xvec)[i]\right]\right)\right)\\& -\sum_{t=1}^{\tau}\sum_{i\in[m]}\lvec [i]\cdot \left( \overline B^{(i)}_{t}-\E_{\xvec\sim\xivec_t}\left[c_t(\xvec)[i]\right]\right)- \frac{2}{\hat{\rho}}R^{D}_{\tau}.
	\end{align*}
	where $\lvec\in\mathcal{L}$ is an arbitrary Lagrange multiplier, $\{\xivec^{(t)}\in\Xi\}_{t=1}^\tau$ is an arbitrary sequence of strategy mixtures and $\tau$ is the stopping time of the algorithm.
\end{lemma}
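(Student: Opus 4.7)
The plan is to mirror the proof of Lemma~\ref{lem:lower_dual_prel} essentially verbatim, with the single substitution $(\rho_{\min}, B_t^{(i)}) \to (\hat{\rho}, \overline{B}_t^{(i)})$ dictated by Line~\ref{algmeta:line3} of Algorithm~\ref{alg:meta}. Indeed, Algorithm~\ref{alg:meta} is just a reinvocation of Algorithm~\ref{alg:dual} with modified inputs, so every structural step carries over; only the payoff range of the dual regret minimizer and the $\ell_1$ cap of the Lagrange space change.

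First I would invoke the optimality of $\xivec_t$ in Line~\ref{alg1:line7} of Algorithm~\ref{alg:dual} under the rescaled parameters, which yields, for every $\xivec \in \Xi$ and every $t \in [\tau]$,
\[
\E_{\xvec\sim\xivec}[f_t(\xvec)] - \sum_{i\in[m]} \lvec_t[i]\cdot \E_{\xvec\sim\xivec}[c_t(\xvec)[i]] \leq \E_{\xvec\sim\xivec_t}[f_t(\xvec)] - \sum_{i\in[m]} \lvec_t[i]\cdot \E_{\xvec\sim\xivec_t}[c_t(\xvec)[i]].
\]
Note that this inequality does not actually involve the per-round budgets $\overline{B}_t^{(i)}$: they cancel because they are the same on both sides. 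I would then sum this inequality over $t=1,\dots,\tau$ against an arbitrary sequence $\{\xivec^{(t)}\}_{t=1}^{\tau}\subseteq \Xi$ and add and subtract $\sum_{t=1}^{\tau}\sum_{i\in[m]} \lvec_t[i]\cdot \overline{B}_t^{(i)}$ on each side to regroup the cost terms into the rescaled Lagrangian form that appears in the statement.

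Next, I would apply the regret guarantee of $\mathcal{R}^D$ on its new decision space $\mathcal{L} = \{\lvec \in \mathbb{R}^m_{\geq 0} : \|\lvec\|_1 \leq 1/\hat{\rho}\}$ with payoff range $[-1/\hat{\rho}, 1/\hat{\rho}]$, yielding, for any comparator $\lvec \in \mathcal{L}$,
\[
\sum_{t=1}^{\tau}\sum_{i\in[m]} \lvec_t[i]\cdot\bigl(\overline{B}_t^{(i)} - \E_{\xvec\sim\xivec_t}[c_t(\xvec)[i]]\bigr) - \sum_{t=1}^{\tau}\sum_{i\in[m]} \lvec[i]\cdot\bigl(\overline{B}_t^{(i)} - \E_{\xvec\sim\xivec_t}[c_t(\xvec)[i]]\bigr) \leq \tfrac{2}{\hat{\rho}} R_{\tau}^D,
\]
where the prefactor $2/\hat{\rho}$ comes from the width of the new payoff range. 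Substituting this into the rearranged inequality from the previous paragraph gives exactly the claimed lower bound.

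There is essentially no obstacle here: the proof is a parameter substitution into Lemma~\ref{lem:lower_dual_prel}. The only subtle point to flag is that the rescaling only affects the dual side (the Lagrange multiplier cap $1/\hat{\rho}$ and the per-round budgets $\overline{B}_t^{(i)}$), while the primal optimality step is insensitive to the choice of budgets; this is why the proof transfers with no additional argument, and the burden of dealing with the rescaling is deferred entirely to Lemma~\ref{lem:round_bound_dual} and Theorem~\ref{thm:reg_dual_worst_case}, which quantify $T-\tau$ and the regret against $\textsc{OPT}_{\mathcal{D}}$ respectively.
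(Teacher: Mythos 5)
Your proposal is correct and matches the paper's argument exactly: the paper proves this lemma by noting it is Lemma~\ref{lem:lower_dual_prel} verbatim after substituting $\rho_{\min}\to\hat{\rho}$ and $B^{(i)}_t\to\overline{B}^{(i)}_t$, which is precisely the substitution you carry out (correctly observing that the primal optimality step is budget-independent and that only the dual payoff range and Lagrange cap change).
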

\begin{proof}
	The proof is equivalent to the one of Lemma~\ref{lem:lower_dual_prel} after substituting $\rho_{\min}$ with $\hat{\rho}$ and $B^{(i)}_t$ with $\overline B^{(i)}_t$, for all $i\in[m],t\in[T]$.
\end{proof}

\begin{lemma}\label{lem:reg_lower_dual_worst}
	For any $\delta\in(0,1)$, Algorithm~\ref{alg:meta}, when instantiated with a dual regret minimizer which attains a regret upper bound $R^D_T$, guarantees, with probability at least $1-\delta$:
	\begin{equation*}\sum_{t=1}^{\tau}\E_{\xvec\sim\xivec_t}\left[f_t(\xvec)\right] \geq \textsc{OPT}_{\mathcal{D}} - \left(T-\tau\right)  - {T}^{\frac{3}{4}} -  \left(4+\frac{4{T}^{\frac{1}{4}}}{\rho} \right)\sqrt{2\tau\ln\frac{T}{\delta}}- \frac{2T^{\frac{1}{4}}}{\rho}R^{D}_{T},
	\end{equation*}
	where $\tau$ is the stopping time of the algorithm.
\end{lemma}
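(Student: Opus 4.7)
The plan is to mirror the derivation of Lemma~\ref{lem:lower_dual}, but with two modifications that account for the fact that Algorithm~\ref{alg:meta} runs Algorithm~\ref{alg:dual} with the shrunken spending plan $\overline{B}^{(i)}_t=(1-T^{-1/4})B^{(i)}_t$ and a Lagrangian domain of diameter $1/\hat\rho=T^{1/4}/\rho$. First, I would invoke Lemma~\ref{lem:lower_dual_worst_prel} as the starting inequality and choose $\lvec=\boldsymbol{0}$ to discard the second summation, leaving the dual-regret contribution $\frac{2}{\hat\rho}R^D_\tau=\frac{2T^{1/4}}{\rho}R^D_\tau\le\frac{2T^{1/4}}{\rho}R^D_T$.

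Next, to specify the comparator sequence $\{\xivec^{(t)}\}_{t=1}^\tau$, I would fix an (almost) optimal sequence $\{\xivec_t^\st\}_{t=1}^T$ for Program~\eqref{lp:opt_dyn} and then \emph{shrink it toward the void action} by setting
\[
\tilde\xivec_t^\st \coloneqq (1-T^{-1/4})\,\xivec_t^\st+T^{-1/4}\,\xivec^\varnothing.
\]
Since $\xvec^\varnothing$ has zero cost and zero reward, this mixture satisfies $\E_{\xvec\sim\tilde\xivec_t^\st}[\bar c_t(\xvec)[i]]\le(1-T^{-1/4})B^{(i)}_t=\overline B^{(i)}_t$ for every $i,t$, so it is feasible for the scaled spending plan. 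Consequently $\sum_i\lvec_t[i](\overline B^{(i)}_t-\E_{\xvec\sim\tilde\xivec_t^\st}[\bar c_t(\xvec)[i]])\ge 0$ for each $t$, which lets me drop the inner Lagrangian correction after I move from $c_t,f_t$ to their expectations $\bar c_t,\bar f_t$. The reward of the mixture satisfies $\sum_{t=1}^{T}\E[\bar f_t(\tilde\xivec_t^\st)]=(1-T^{-1/4})\sum_{t=1}^{T}\E[\bar f_t(\xivec_t^\st)]\ge \textsc{OPT}_{\mathcal{D}}-T^{-1/4}\cdot T=\textsc{OPT}_{\mathcal{D}}-T^{3/4}$, which is precisely the source of the $T^{3/4}$ term in the statement.

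The passage from $f_t,c_t$ to $\bar f_t,\bar c_t$ is then handled by the martingale concentration bound in Lemma~\ref{lem:azuma_sequence}, applied to the sequence of quantities $\E_{\xvec\sim\tilde\xivec_t^\st}[f_t(\xvec)]-\sum_i\lvec_t[i]\E_{\xvec\sim\tilde\xivec_t^\st}[c_t(\xvec)[i]]$; its per-round range is $1+\max_{\lvec\in\mathcal{L}}\|\lvec\|_1=1+T^{1/4}/\rho$, which produces the concentration term $(4+4T^{1/4}/\rho)\sqrt{2\tau\ln(T/\delta)}$ with probability at least $1-\delta$. Finally, to pass from $\sum_{t=1}^\tau\E[\bar f_t(\tilde\xivec_t^\st)]$ to $\textsc{OPT}_{\mathcal D}-T^{3/4}$ I use that rewards are bounded by $1$, yielding the $-(T-\tau)$ slack for the rounds past the stopping time.

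The only nontrivial step is the construction of the shrunken comparator $\tilde\xivec_t^\st$: it must be feasible for the \emph{scaled} spending plan (to eliminate the $\lvec_t$ penalty) while losing only a $(1-T^{-1/4})$ factor in reward, and this is exactly what the void action affords us. Everything else is a direct re-run of the computation in Lemma~\ref{lem:lower_dual}, with the substitutions $\rho_{\min}\to\hat\rho=\rho T^{-1/4}$, $B^{(i)}_t\to\overline B^{(i)}_t$, and $\xivec_t^\st\to\tilde\xivec_t^\st$, and with the extra additive $T^{3/4}$ charge accounting for the shrinkage. Putting the pieces together and using $R^D_\tau\le R^D_T$, $\tau\le T$ yields the bound in the statement, which holds on the single high-probability event of Lemma~\ref{lem:azuma_sequence} (with probability $\ge 1-\delta$).
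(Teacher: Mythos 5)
Your proposal is correct and follows essentially the same route as the paper: it invokes Lemma~\ref{lem:lower_dual_worst_prel}, sets $\lvec=\boldsymbol{0}$, and constructs the same shrunken comparator (your $\tilde\xivec_t^\st$ is exactly the paper's mixture $\xivec_t^\diamond$ that plays $\xvec^\varnothing$ with probability $T^{-1/4}$), verifying its feasibility for the scaled plan, applying Lemma~\ref{lem:azuma_sequence}, and charging the $T^{3/4}$ loss to the shrinkage. No gaps.
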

\begin{proof}
	We first employ Lemma~\ref{lem:lower_dual_worst_prel} and the definition of $\hat{\rho}$ to obtain:
	\begin{align*}   \sum_{t=1}^{\tau}\E_{\xvec\sim\xivec_t}\left[f_t(\xvec)\right]  \geq &  \sum_{t=1}^{\tau}\left(\E_{\xvec\sim\xivec^{(t)}}\left[f_t(\xvec)\right]-\sum_{i\in[m]}\lvec_t  [i]\cdot\left(\overline B^{(i)}_t-\E_{\xvec\sim\xivec^{(t)}}\left[c_t(\xvec)[i]\right]\right)\right)\\& -\sum_{t=1}^{\tau}\sum_{i\in[m]}\lvec [i]\cdot \left( \overline B^{(i)}_{t}-\E_{\xvec\sim\xivec_t}\left[c_t(\xvec)[i]\right]\right)- \frac{2{T}^{\frac{1}{4}}}{\rho}R^{D}_{\tau}.
	\end{align*}
	Hence, we select the Lagrange variable as  $\lvec=\boldsymbol{0}$ vector to get the following bound:    \begin{equation*}
		\sum_{t=1}^{\tau}\E_{\xvec\sim\xivec_t}\left[f_t(\xvec)\right]  \geq \sum_{t=1}^{\tau}\left(\E_{\xvec\sim\xivec^{(t)}}\left[f_t(\xvec)\right]-\sum_{i\in[m]}\lvec_t  [i]\cdot\left(\overline B^{(i)}_t-\E_{\xvec\sim\xivec^{(t)}}\left[c_t(\xvec)[i]\right]\right)\right)- \frac{2{T}^{\frac{1}{4}}}{\rho}R^{D}_{\tau}.
	\end{equation*}
	We now focus on bounding the term:
	\[\sum_{t=1}^{\tau}\left(\E_{\xvec\sim\xivec^{(t)}}\left[f_t(\xvec)\right]-\sum_{i\in[m]}\lvec_t  [i]\cdot\left(\overline B^{(i)}_t-\E_{\xvec\sim\xivec^{(t)}}\left[c_t(\xvec)[i]\right]\right)\right).\]
	Similarly to Lemma~\ref{lem:lower_dual}, we notice that by the definition of Program~\eqref{lp:opt_dyn}, there exists a sequence strategy mixture $\{\xivec_t^*\}_{t=1}^T$ such that  $\E_{\xvec\sim \xivec^*_t}\left[\bar c_t(\xvec)\right]\leq B^{(i)}_t$ for all $t\in[T], i\in[m]$ and $\sum_{t=1}^T\E_{\xvec\sim \xivec^*_t}\left[\bar f_t(\xvec)\right]\geq \textsc{OPT}_{\mathcal{D}}-\gamma$, for all $\gamma>0$. In the rest of the proof, we will omit the the dependence on $\gamma$, since it can be chosen arbitrarily small, thus being negligible in the regret bound.
	
	We then define the following strategy mixture $\xivec_t^\diamond$ for all $t\in[T]$ as follows:
	\begin{equation*}
		\xivec_t^\diamond\coloneqq\begin{cases}
			\xvec^\varnothing &  \text{ w.p. } \nicefrac{1}{T^{1/4}}\\
			\xivec_t^* & \text{ w.p. } 1-\nicefrac{1}{T^{1/4}}
		\end{cases}.
	\end{equation*}
	Thus, we first show that $\xivec_t^\diamond$ satisfies the per-round expected constraints defined by $\overline{B}^{(i)}_t$. Indeed it holds, for all $i\in[m]$:
	\begin{align*}
		\overline{B}^{(i)}_t- \E_{\xvec\sim\xivec_t^{\diamond}}[\bar c_t(\xvec)] &= \overline{B}^{(i)}_t - \left(1-\frac{1}{{T}^{1/4}}\right)\E_{\xvec\sim\xivec_t^{*}}[\bar c_t(\xvec)] - \frac{1}{{T}^{1/4}}\E_{\xvec\sim\xivec^{\varnothing}}[\bar c_t(\xvec)] \\
		&= \left(1-\frac{1}{{T}^{1/4}}\right){B}^{(i)}_t - \left(1-\frac{1}{{T}^{1/4}}\right)\E_{\xvec\sim\xivec_t^{*}}[\bar c_t(\xvec)] - \frac{1}{{T}^{1/4}}\E_{\xvec\sim\xivec^{\varnothing}}[\bar c_t(\xvec)] \\
		&= \left(1-\frac{1}{{T}^{1/4}}\right){B}^{(i)}_t - \left(1-\frac{1}{{T}^{1/4}}\right)\E_{\xvec\sim\xivec_t^{*}}[\bar c_t(\xvec)] \\
		&\geq 0,
	\end{align*}
	where we employed the definition of $\xivec_t^*$ and $\xivec^\varnothing$.
	
	Thus, returning to the quantity of interest and selecting $\{\xivec^{(t)}\}_{t=1}^\tau=\{\xivec_t^\diamond\}_{t=1}^\tau$, it holds:
	\begin{align*}              &\sum_{t=1}^{\tau}\left[\E_{\xvec\sim\xivec^{(t)}}\left[f_t(\xvec)\right]+\sum_{i\in[m]}\lvec_t [i] \cdot\left(\overline B^{(i)}_{t} - \E_{\xvec\sim\xivec^{(t)}}\left[c_t(\xvec)[i]\right]\right)\right] \\
		& =\sum_{t=1}^{\tau}\left[\E_{\xvec\sim\xivec_t^\diamond}\left[f_t(\xvec)\right]+\sum_{i\in[m]}\lvec_t [i] \cdot\left(\overline B^{(i)}_{t} - \E_{\xvec\sim\xivec_t^\diamond}\left[c_t(\xvec)[i]\right]\right)\right] \\
		& \geq \sum_{t=1}^{\tau}\left[\E_{\xvec\sim\xivec_t^\diamond}\left[\bar f_t(\xvec)\right]+\sum_{i\in[m]}\lvec_t [i] \cdot\left(\overline B^{(i)}_{t} - \E_{\xvec\sim\xivec_t^\diamond}\left[\bar c_t(\xvec)[i]\right]\right)\right] -  \left(4+\frac{4{T}^{1/4}}{\rho} \right)\sqrt{2\tau\ln\frac{T}{\delta}} \\
		& \geq \sum_{t=1}^{\tau}\left[\E_{\xvec\sim\xivec_t^\diamond}\left[\bar f_t(\xvec)\right]\right] -  \left(4+\frac{4{T}^{1/4}}{\rho} \right)\sqrt{2\tau\ln\frac{T}{\delta}} \\
		& = \sum_{t=1}^{\tau}\left[\left(1-\frac{1}{{T}^{1/4}}\right)\E_{\xvec\sim\xivec_t^*}\left[\bar f_t(\xvec)\right]+ \frac{1}{{T}^{1/4}}\E_{\xvec\sim\xivec^\varnothing}\left[\bar f_t(\xvec)\right]\right] -  \left(4+\frac{4{T}^{1/4}}{\rho} \right)\sqrt{2\tau\ln\frac{T}{\delta}} \\
		& = \sum_{t=1}^{\tau}\left[\left(1-\frac{1}{{T}^{1/4}}\right)\E_{\xvec\sim\xivec_t^*}\left[\bar f_t(\xvec)\right]\right] -  \left(4+\frac{4{T}^{1/4}}{\rho} \right)\sqrt{2\tau\ln\frac{T}{\delta}} \\
		& \geq \sum_{t=1}^{\tau}\E_{\xvec\sim\xivec_t^*}\left[\bar f_t(\xvec)\right] - {T}^{3/4}-  \left(4+\frac{4{T}^{1/4}}{\rho} \right)\sqrt{2\tau\ln\frac{T}{\delta}} \\
		& \geq \textsc{OPT}_{\mathcal{D}}  -  {T}^{3/4} -\left({T}-\tau\right) -  \left(4+\frac{4{T}^{1/4}}{\rho} \right)\sqrt{2\tau\ln\frac{T}{\delta}} ,
	\end{align*} 
	where the second inequality holds, with probability at least $1-\delta$, by Lemma~\ref{lem:azuma_sequence} and upper bounding the Lagrangian multiplier with $\nicefrac{T^\frac{1}{4}}{\rho}$.
	This concludes the proof. 
\end{proof}

Hence, we proceed upper bounding the difference between the horizon $T$ and the stopping time $\tau$. 
\lemroundbounddual*
\begin{proof}
	Suppose by contradiction that $T-\tau> CT^{3/4} $, thus, it holds $\tau<T- CT^{3/4}$.
	
	We proceed upper and lower bounding the value of the Lagrangian. We first lower bound the quantity of interest employing the strategy mixture selection of Algorithm~\ref{alg:meta}. Given that, it holds:
	\begin{align*} \sum_{t=1}^{\tau}\left[\E_{\xvec\sim\xivec_t}\left[f_t(\xvec)\right] -\sum_{i\in[m]}\lvec_t [i]\cdot \E_{\xvec\sim\xivec_t}\left[c_t(\xvec)[i]\right]\right] &\geq\sum_{t=1}^{\tau}\left[\E_{\xvec\sim\xivec^\varnothing}\left[f_t(\xvec)\right]-\sum_{i\in[m]}\lvec_t [i]\cdot \E_{\xvec\sim\xivec^\varnothing}\left[c_t(\xvec)[i]\right]\right] \\ &=0.
	\end{align*} 
	Differently, to upper bound the same quantity, we employ the no-regret property of the dual algorithm $\mathcal{R}^D$. Hence, it holds:
	\begin{subequations}
		\begin{align}
			\sum_{t=1}^{\tau}&\left[\E_{\xvec\sim\xivec_t}\left[f_t(\xvec)\right] -\sum_{i\in[m]}\lvec_t [i]\cdot \E_{\xvec\sim\xivec_t}\left[c_t(\xvec)[i]\right]\right] \nonumber\\
			& \leq \tau - \sum_{t=1}^{\tau}\sum_{i\in[m]}\lvec_t [i]\cdot \E_{\xvec\sim\xivec_t}\left[c_t(\xvec)[i]\right] \nonumber\\
			& \leq \tau + \sum_{t=1}^{\tau}\sum_{i\in[m]}\lvec [i]\cdot \left(\overline{B}^{(i)}_t-\E_{\xvec\sim\xivec_t}\left[c_t(\xvec)[i]\right]\right) + \frac{2{T}^{\frac{1}{4}}}{\rho}R^{D}_{\tau}-\sum_{t=1}^{\tau}\sum_{i\in[m]}\lvec_t [i]\cdot \overline{B}^{(i)}_t\label{eq1:boundT_allocation}\\
			& \leq \tau + \sum_{t=1}^{\tau}\sum_{i\in[m]}\lvec [i]\cdot \left(\overline{B}^{(i)}_t-\E_{\xvec\sim\xivec_t}\left[c_t(\xvec)[i]\right]\right) + \frac{2{T}^{\frac{1}{4}}}{\rho}R^{D}_{\tau}\nonumber\\
			& = \tau + \sum_{t=1}^{\tau}\sum_{i\in[m]}\lvec [i]\cdot \left(\left(1-\frac{1}{{T}^{1/4}}\right){B}^{(i)}_t-\E_{\xvec\sim\xivec_t}\left[c_t(\xvec)[i]\right]\right) + \frac{2{T}^{\frac{1}{4}}}{\rho}R^{D}_{\tau}\nonumber\\
			& = \tau + \frac{{T}^{1/4}}{\rho}\cdot \sum_{t=1}^{\tau}\left(\left(1-\frac{1}{{T}^{1/4}}\right){B}^{(i^*)}_t-\E_{\xvec\sim\xivec_t}\left[c_t(\xvec)[i^*]\right]\right)+ \frac{2{T}^{\frac{1}{4}}}{\rho}R^{D}_{\tau}\label{eq2:boundT_allocation}\\
			& \leq \tau + \frac{{T}^{1/4}}{\rho}\cdot \sum_{t=1}^{\tau}\left(\left(1-\frac{1}{T^{1/4}}\right){B}^{(i^*)}_t-c_t(\xvec_t)[i^*]\right) +\frac{8T^{3/4}}{\rho}\sqrt{\ln\frac{ T}{\delta}}  + \frac{2{T}^{\frac{1}{4}}}{\rho}R^{D}_{\tau}\label{eq3:boundT_allocation}\\
			& \leq \tau + \frac{T^{1/4}}{\rho}\cdot \left(\left(1-\frac{1}{T^{1/4}}\right)T\rho-T\rho+1\right) +\frac{8T^{3/4}}{\rho}\sqrt{\ln\frac{ T}{\delta}}  + \frac{2{T}^{\frac{1}{4}}}{\rho}R^{D}_{\tau}\label{eq4:boundT_allocation}\\
			& = \tau + \frac{{T}^{1/4}}{\rho}-T+\frac{8{T}^{3/4}}{\rho}\sqrt{\ln\frac{ T}{\delta}}  + \frac{2{T}^{\frac{1}{4}}}{\rho}R^{D}_{\tau}\nonumber\\
			& < T- CT^{3/4} + \frac{T^{1/4}}{\rho}-T+\frac{8T^{3/4}}{\rho}\sqrt{\ln\frac{ T}{\delta}}  + \frac{2{T}^{\frac{1}{4}}}{\rho}R^{D}_{\tau}\label{eq5:boundT_allocation}\\
			& = - C{T}^{3/4} + \frac{{T}^{1/4}}{\rho}+\frac{8{T}^{3/4}}{\rho}\sqrt{\ln\frac{ T}{\delta}}  + \frac{2{T}^{\frac{1}{4}}}{\rho}R^{D}_{\tau},\nonumber
		\end{align}
	\end{subequations}
	where Inequality~\eqref{eq1:boundT_allocation} holds by the no-regret property of the dual regret minimizer $\mathcal{R}^D$, Equation~\eqref{eq2:boundT_allocation} holds selecting $\lvec$ s.t. $\lvec[i^*]=\nicefrac{T^{1/4}}{\rho}$ and $\lvec[i]=0$ for all other $i\in[m]$, where $i^*$ is the depleted resource -- notice that, there must exists a depleted resource since $T-\tau> 0 $ --, Inequality~\eqref{eq3:boundT_allocation} holds with probability at least $1-\delta$ employing the Azuma-Höeffding inequality, Inequality~\eqref{eq4:boundT_allocation} holds since the following holds for the resource $i^*\in[m]$:
	\[
	\sum_{t=1}^{\tau}c_t(\xvec_t)[i^*]+1 \geq B 
	= T\rho,
	\]
	and finally Inequality~\eqref{eq5:boundT_allocation} holds since $T-\tau> C{T}^{3/4} $.
	
	Setting $C\geq\frac{14}{\rho}\left(\sqrt{\ln\frac{{T}}{\delta}}+ \frac{R^D_{{T}}}{\sqrt{{T}}}\right)$ we reach the contradiction. This concludes the proof.
\end{proof}

We conclude the section by proving the final \emph{dynamic} regret bound of Algorithm~\ref{alg:meta}.

\thmregdualworstcase*
\begin{proof}
	We first employ Lemma~\ref{lem:reg_lower_dual_worst} to get the following bound, with probability at least $1-\delta$:
	\begin{equation*}\sum_{t=1}^{\tau}\E_{\xvec\sim\xivec_t}\left[f_t(\xvec)\right] \geq \textsc{OPT}_{\mathcal{D}} - \left(T-\tau\right)  - {T}^{\frac{3}{4}} -  \left(4+\frac{4{T}^{\frac{1}{4}}}{\rho} \right)\sqrt{2\tau\ln\frac{T}{\delta}}- \frac{2T^{\frac{1}{4}}}{\rho}R^{D}_{\tau},
	\end{equation*}
	Thus, we employ the Azuma-Höeffding inequality to get, with probability at least $1-2\delta$ by Union Bound:
	\begin{equation*}\sum_{t=1}^{\tau}f_t(\xvec_t)  \geq \textsc{OPT}_{\mathcal{D}} - \left(T-\tau\right)  - {T}^{\frac{3}{4}} -  \left(4+\frac{4{T}^{\frac{1}{4}}}{\rho} \right)\sqrt{2\tau\ln\frac{T}{\delta}} -\frac{2{T}^{\frac{1}{4}}}{\rho}R^{D}_{\tau}- 4\sqrt{2\tau\ln\frac{T}{\delta}},
	\end{equation*}
	which in turn implies, with probability at least $1-2\delta$:
	\begin{equation*}
		\mathfrak{R}_T\leq \left(T-\tau\right) + T^{\frac{3}{4}}+  \left(4+\frac{4{T}^{\frac{1}{4}}}{\rho} \right)\sqrt{2\tau\ln\frac{T}{\delta}} +\frac{2T^{\frac{1}{4}}}{\rho}R^{D}_{\tau} + 4\sqrt{2\tau\ln\frac{T}{\delta}}.
	\end{equation*}
	We then apply Lemma~\ref{lem:round_bound_dual} to obtain, with probability at least $1-3\delta$, by Union Bound, the following regret upper bound:
	\begin{equation*}
		\mathfrak{R}_T\leq \frac{14}{\rho}\left(\sqrt{\ln\frac{{T}}{\delta}}+ \frac{R^D_{{T}}}{\sqrt{{T}}}\right)T^{\frac{3}{4}}  + T^{\frac{3}{4}}+  \left(4+\frac{4{T}^{\frac{1}{4}}}{\rho} \right)\sqrt{2\tau\ln\frac{T}{\delta}}+\frac{2T^{\frac{1}{4}}}{\rho}R^{D}_{\tau}  + 4\sqrt{2\tau\ln\frac{T}{\delta}}.
	\end{equation*}
	To get the final regret bound we notice the following trivial upper bounds:
	\[
	R^{D}_{\tau} \leq R^{D}_{T}, \quad 
	\sqrt{2\tau\ln{\frac{T}{\delta}}} \leq \sqrt{2T\ln{\frac{T}{\delta}}}.
	\]
	This concludes the proof.
\end{proof}

\subsubsection{Robustness to Baselines Deviating from the Spending Plan}
\label{app:allocation_worst_error}
We provide the regret of Algorithm~\ref{alg:meta} with respect to a baseline which deviates from the spending plan.
\thmregdualworstcaseerror*

\begin{proof}
	Similarly to Lemma~\ref{lem:reg_lower_dual_worst}, we first employ Lemma~\ref{lem:lower_dual_worst_prel} and the definition of $\hat{\rho}$:
	\begin{align*}
		\sum_{t=1}^{\tau}\E_{\xvec\sim\xivec_t}\left[f_t(\xvec)\right]  \geq &  \sum_{t=1}^{\tau}\left(\E_{\xvec\sim\xivec^{(t)}}\left[f_t(\xvec)\right]-\sum_{i\in[m]}\lvec_t  [i]\cdot\left(\overline B^{(i)}_t-\E_{\xvec\sim\xivec^{(t)}}\left[c_t(\xvec)[i]\right]\right)\right)\\& -\sum_{t=1}^{\tau}\sum_{i\in[m]}\lvec [i]\cdot \left( \overline B^{(i)}_{t}-\E_{\xvec\sim\xivec_t}\left[c_t(\xvec)[i]\right]\right)- \frac{2{T}^{\frac{1}{4}}}{\rho}R^{D}_{\tau}.
	\end{align*}
	Hence, we select the Lagrange variable as  $\lvec=\boldsymbol{0}$ vector to get the following bound:    \begin{equation*}
		\sum_{t=1}^{\tau}\E_{\xvec\sim\xivec_t}\left[f_t(\xvec)\right]  \geq \sum_{t=1}^{\tau}\left(\E_{\xvec\sim\xivec^{(t)}}\left[f_t(\xvec)\right]-\sum_{i\in[m]}\lvec_t  [i]\cdot\left(\overline B^{(i)}_t-\E_{\xvec\sim\xivec^{(t)}}\left[c_t(\xvec)[i]\right]\right)\right)- \frac{2{T}^{\frac{1}{4}}}{\rho}R^{D}_{\tau}.
	\end{equation*}
	We now focus on bounding the following term when a baseline deviating from the spending plan is employed:
	\[\sum_{t=1}^{\tau}\left(\E_{\xvec\sim\xivec^{(t)}}\left[f_t(\xvec)\right]-\sum_{i\in[m]}\lvec_t  [i]\cdot\left(\overline B^{(i)}_t-\E_{\xvec\sim\xivec^{(t)}}\left[c_t(\xvec)[i]\right]\right)\right).\]
	We notice that by the definition of Program~\eqref{lp:opt_dyn_error}, there exists a sequence strategy mixture $\{\xivec_t^*\}_{t=1}^T$ such that  $\E_{\xvec\sim \xivec^*_t}\left[\bar c_t(\xvec)\right]\leq B^{(i)}_t-\epsilon^{(i)}_t$ for all $t\in[T], i\in[m]$ and $\sum_{t=1}^T\E_{\xvec\sim \xivec^*_t}\left[\bar f_t(\xvec)\right]\geq \textsc{OPT}_{\mathcal{D}}(\epsilon_t)-\gamma$, for all $\gamma>0$. In the rest of the proof, we will omit the the dependence on $\gamma$, since it can be chosen arbitrarily small, thus being negligible in the regret bound.
	
	We then define the following strategy mixture $\xivec_t^\diamond$ for all $t\in[T]$ as follows:
	\begin{equation*}
		\xivec_t^\diamond\coloneqq\begin{cases}
			\xvec^\varnothing &  \text{ w.p. } \nicefrac{1}{T^{1/4}}\\
			\xivec_t^* & \text{ w.p. } 1-\nicefrac{1}{T^{1/4}}
		\end{cases}.
	\end{equation*}
	Thus, we first show that $\xivec_t^\diamond$ satisfies the per-round expected constraints defined by $\overline{B}^{(i)}_t$, up to the errors terms. Indeed it holds, for all $i\in[m]$:
	\begin{align*}
		\overline{B}^{(i)}_t- \E_{\xvec\sim\xivec_t^{\diamond}}[\bar c_t(\xvec)] &= \overline{B}^{(i)}_t - \left(1-\frac{1}{{T}^{1/4}}\right)\E_{\xvec\sim\xivec_t^{*}}[\bar c_t(\xvec)] - \frac{1}{{T}^{1/4}}\E_{\xvec\sim\xivec^{\varnothing}}[\bar c_t(\xvec)] \\
		&= \left(1-\frac{1}{{T}^{1/4}}\right){B}^{(i)}_t - \left(1-\frac{1}{{T}^{1/4}}\right)\E_{\xvec\sim\xivec_t^{*}}[\bar c_t(\xvec)] - \frac{1}{{T}^{1/4}}\E_{\xvec\sim\xivec^{\varnothing}}[\bar c_t(\xvec)] \\
		&= \left(1-\frac{1}{{T}^{1/4}}\right){B}^{(i)}_t - \left(1-\frac{1}{{T}^{1/4}}\right)\E_{\xvec\sim\xivec_t^{*}}[\bar c_t(\xvec)] \\
		&\geq -\left(1-\frac{1}{{T}^{1/4}}\right)\epsilon^{(i)}_t \\
		& \geq -\epsilon^{(i)}_t,
	\end{align*}
	where we employed the definition of $\xivec_t^*$ and $\xivec^\varnothing$.
	
	Thus, returning to the quantity of interest and selecting $\{\xivec^{(t)}\}_{t=1}^\tau=\{\xivec_t^\diamond\}_{t=1}^\tau$, it holds:
	\begin{align*}              &\sum_{t=1}^{\tau}\left[\E_{\xvec\sim\xivec^{(t)}}\left[f_t(\xvec)\right]+\sum_{i\in[m]}\lvec_t [i] \cdot\left(\overline B^{(i)}_{t} - \E_{\xvec\sim\xivec^{(t)}}\left[c_t(\xvec)[i]\right]\right)\right] \\
		& =\sum_{t=1}^{\tau}\left[\E_{\xvec\sim\xivec_t^\diamond}\left[f_t(\xvec)\right]+\sum_{i\in[m]}\lvec_t [i] \cdot\left(\overline B^{(i)}_{t} - \E_{\xvec\sim\xivec_t^\diamond}\left[c_t(\xvec)[i]\right]\right)\right] \\
		& \geq \sum_{t=1}^{\tau}\left[\E_{\xvec\sim\xivec_t^\diamond}\left[\bar f_t(\xvec)\right]+\sum_{i\in[m]}\lvec_t [i] \cdot\left(\overline B^{(i)}_{t} - \E_{\xvec\sim\xivec_t^\diamond}\left[\bar c_t(\xvec)[i]\right]\right)\right] -  \left(4+\frac{4{T}^{1/4}}{\rho} \right)\sqrt{2\tau\ln\frac{T}{\delta}} \\
		& \geq \sum_{t=1}^{\tau}\left[\E_{\xvec\sim\xivec_t^\diamond}\left[\bar f_t(\xvec)\right]\right]- \frac{{T}^{1/4}}{\rho}\sum_{t=1}^{\tau}\sum_{i\in[m]}\epsilon^{(i)}_t-  \left(4+\frac{4{T}^{1/4}}{\rho} \right)\sqrt{2\tau\ln\frac{T}{\delta}} \\
		& = \sum_{t=1}^{\tau}\left[\left(1-\frac{1}{{T}^{1/4}}\right)\E_{\xvec\sim\xivec_t^*}\left[\bar f_t(\xvec)\right]+ \frac{1}{{T}^{1/4}}\E_{\xvec\sim\xivec^\varnothing}\left[\bar f_t(\xvec)\right]\right] - \frac{{T}^{1/4}}{\rho}\sum_{t=1}^{\tau}\sum_{i\in[m]}\epsilon^{(i)}_t\\&\mkern430mu-  \left(4+\frac{4{T}^{1/4}}{\rho} \right)\sqrt{2\tau\ln\frac{T}{\delta}} \\
		& = \sum_{t=1}^{\tau}\left[\left(1-\frac{1}{{T}^{1/4}}\right)\E_{\xvec\sim\xivec_t^*}\left[\bar f_t(\xvec)\right]\right] - \frac{{T}^{1/4}}{\rho}\sum_{t=1}^{\tau}\sum_{i\in[m]}\epsilon^{(i)}_t-  \left(4+\frac{4{T}^{1/4}}{\rho} \right)\sqrt{2\tau\ln\frac{T}{\delta}} \\
		& \geq \sum_{t=1}^{\tau}\E_{\xvec\sim\xivec_t^*}\left[\bar f_t(\xvec)\right] - {T}^{3/4} - \frac{{T}^{1/4}}{\rho}\sum_{t=1}^{\tau}\sum_{i\in[m]}\epsilon^{(i)}_t-  \left(4+\frac{4{T}^{1/4}}{\rho} \right)\sqrt{2\tau\ln\frac{T}{\delta}} \\
		& \geq \textsc{OPT}_{\mathcal{D}}(\epsilon_t)  -  {T}^{3/4} -\left({T}-\tau\right) - \frac{{T}^{1/4}}{\rho}\sum_{t=1}^{\tau}\sum_{i\in[m]}\epsilon^{(i)}_t-  \left(4+\frac{4{T}^{1/4}}{\rho} \right)\sqrt{2\tau\ln\frac{T}{\delta}} ,
	\end{align*} 
	where the second inequality holds, with probability at least $1-\delta$, by Lemma~\ref{lem:azuma_sequence} and upper bounding the Lagrangian multiplier with $\nicefrac{T^\frac{1}{4}}{\rho}$. Employing the same analysis of Theorem~\ref{thm:reg_dual_worst_case} and noticing that:
	\[\sum_{t=1}^{\tau}\sum_{i\in[m]}\epsilon^{(i)}_t\leq \sum_{t=1}^{\tau}\sum_{i\in[m]}\epsilon^{(i)}_t,\]
	concludes the proof. 
\end{proof}

\section{Omitted Proofs for OLRC with Full Feedback}
\label{app:full}

In this section, we provide the results and the omitted proofs for the OLRC with \emph{full feedback} setting.

\subsection{Theoretical Guarantees of Algorithm~\ref{alg:learning}}
\label{app:full_standard}

We start by providing a lower bound to the expected rewards attained by Algorithm~\ref{alg:learning}. 

\begin{lemma}\label{lem:reg_lower_full_prel}
	Algorithm~\ref{alg:learning}, when instantiated in the full feedback setting with a primal regret minimizer which attains a regret upper bound $R^P_T$ and a dual regret minimizer which attains a regret upper bound $R^D_T$ for all $t\in[T]$, guarantees the following bound:
	\begin{align*}\sum_{t=1}^{\tau}\E_{\xvec\sim\xivec_t}\left[f_t(\xvec)\right] \geq &\sup_{\xivec\in\Xi}\sum_{t=1}^{\tau}\left[\E_{\xvec\sim\xivec}\left[f_t(\xvec)\right]+\sum_{i\in[m]}\lvec_t [i] \cdot\left({B}^{(i)}_{t} - \E_{\xvec\sim\xivec}\left[c_t(\xvec)[i]\right]\right)\right] \\ & - \sum_{t=1}^{\tau}\sum_{i\in[m]}\lvec [i]\cdot \left( {B}^{(i)}_{t}-\E_{\xvec\sim\xivec_t}\left[c_t(\xvec)[i]\right]\right)-\frac{2}{\rho_{\min}}R^{D}_{\tau} - \left(1+\frac{2}{\rho_{\min}}\right)R_{\tau}^P,
	\end{align*}
	where $\lvec\in\mathcal{L}$ is an arbitrary Lagrange multiplier and $\tau$ is the stopping time of the algorithm.
\end{lemma}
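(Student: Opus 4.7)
The plan is to mirror the structure of the proof of Lemma~\ref{lem:lower_dual_prel}, but with the extra twist that the per-round Lagrangian-maximizing strategy mixture is now replaced by the output of a no-regret primal algorithm. Accordingly, the bound has two ``slack'' terms: one from the dual regret minimizer and one from the primal regret minimizer, each scaled by the diameter of the relevant payoff range.

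First, I would invoke the guarantee of the primal regret minimizer $\mathcal{R}^P$. By construction (Line~\ref{alg2:line3} and Line~\ref{alg2:line11}), $\mathcal{R}^P$ receives full-feedback rewards $r^P_t:\xivec\mapsto \E_{\xvec\sim\xivec}[f_t(\xvec)]-\sum_{i\in[m]}\lvec_t[i]\E_{\xvec\sim\xivec}[c_t(\xvec)[i]]$ on the decision space $\Xi$, with payoff range contained in $[-1/\rho_{\min},\,1+1/\rho_{\min}]$ (since $f_t\in[0,1]$, $c_t\in[0,1]^m$, and $\|\lvec_t\|_1\leq 1/\rho_{\min}$). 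Hence for every fixed $\xivec\in\Xi$,
\[
\sum_{t=1}^{\tau}\!\left[\E_{\xvec\sim\xivec}[f_t(\xvec)]-\!\!\sum_{i\in[m]}\!\lvec_t[i]\E_{\xvec\sim\xivec}[c_t(\xvec)[i]]\right]\!-\!\sum_{t=1}^{\tau}\!\left[\E_{\xvec\sim\xivec_t}[f_t(\xvec)]-\!\!\sum_{i\in[m]}\!\lvec_t[i]\E_{\xvec\sim\xivec_t}[c_t(\xvec)[i]]\right]\!\leq\!\left(1+\tfrac{2}{\rho_{\min}}\right)\! R^P_\tau.
\]
Taking the supremum over $\xivec\in\Xi$ on the left and rearranging would yield a lower bound on $\sum_t\E_{\xvec\sim\xivec_t}[f_t(\xvec)]$ in terms of the Lagrangian evaluated at the same $\lvec_t$.

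Next, I would add and subtract $\sum_{t=1}^{\tau}\sum_{i\in[m]}\lvec_t[i]\,B^{(i)}_t$ to introduce the per-round budgets explicitly, obtaining
\[
\sum_{t=1}^{\tau}\E_{\xvec\sim\xivec_t}[f_t(\xvec)] \geq \sup_{\xivec\in\Xi}\sum_{t=1}^{\tau}\!\left[\E_{\xvec\sim\xivec}[f_t(\xvec)]+\!\!\sum_{i\in[m]}\!\lvec_t[i]\!\left(B^{(i)}_t-\E_{\xvec\sim\xivec}[c_t(\xvec)[i]]\right)\right] - \sum_{t=1}^{\tau}\sum_{i\in[m]}\lvec_t[i]\!\left(B^{(i)}_t-\E_{\xvec\sim\xivec_t}[c_t(\xvec)[i]]\right) - \left(1+\tfrac{2}{\rho_{\min}}\right)\!R^P_\tau.
\]
Finally, I would invoke the dual regret minimizer: since $\mathcal{R}^D$ receives rewards $r^D_t(\lvec)=-\sum_{i}\lvec[i](B^{(i)}_t-\E_{\xvec\sim\xivec_t}[c_t(\xvec)[i]])$ with payoff range of width $2/\rho_{\min}$, for any fixed $\lvec\in\mathcal{L}$,
\[
\sum_{t=1}^{\tau}\sum_{i\in[m]}\lvec_t[i]\!\left(B^{(i)}_t-\E_{\xvec\sim\xivec_t}[c_t(\xvec)[i]]\right) - \sum_{t=1}^{\tau}\sum_{i\in[m]}\lvec[i]\!\left(B^{(i)}_t-\E_{\xvec\sim\xivec_t}[c_t(\xvec)[i]]\right) \leq \tfrac{2}{\rho_{\min}}R^D_\tau.
\]
Substituting this upper bound into the previous inequality replaces the $\lvec_t$-dependent spending-plan slack with the target $\lvec$-dependent one at the cost of the additive $\tfrac{2}{\rho_{\min}}R^D_\tau$ term, producing exactly the statement of the lemma.

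The proof should be essentially mechanical once the book-keeping is organized this way. The only mildly delicate point — and the place I expect to need care rather than cleverness — is tracking the constant multipliers: in particular, verifying that the range given to $\mathcal{R}^P$ at Line~\ref{alg2:line3} truly is $[-1/\rho_{\min},\,1+1/\rho_{\min}]$, so that the primal regret is multiplied by $(1+2/\rho_{\min})$ rather than by $2/\rho_{\min}$ as in the dual-only analysis. Beyond that, no new probabilistic arguments are required, since this is an expectation-level statement and the martingale concentration (à la Lemma~\ref{lem:azuma_sequence}) is relegated to the subsequent reduction to $\textsc{OPT}_{\mathcal H}$, not to this lemma.
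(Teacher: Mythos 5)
Your proposal is correct and follows essentially the same route as the paper's proof: apply the primal regret guarantee (with the $(1+2/\rho_{\min})$ payoff-range factor), rearrange, then apply the dual regret guarantee (with the $2/\rho_{\min}$ factor) to swap the $\lvec_t$-dependent spending-plan slack for the fixed-$\lvec$ one. The intermediate add-and-subtract of $\sum_t\sum_i\lvec_t[i]B^{(i)}_t$ is exactly the bookkeeping the paper performs when substituting its Equation~\eqref{eq:full_lower3} into Equation~\eqref{eq:full_lower2}.
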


\begin{proof}
	In the following proof, we will refer to the stopping time of Algorithm~\ref{alg:learning} as $\tau$. Notice that there are two possible stopping criterion for Algorithm~\ref{alg:learning}: $(i)$ the budget is depleted before reaching $ T$, that is, at time $\tau<{T}$, $(ii)$ the algorithm stops at the end of the learning horizon, namely, $\tau=T$.
	
	We first employ the no-regret property of the primal regret minimizer $\mathcal{R}^P$. Given that, it holds:
	\begin{align} &\sup_{\xivec\in\Xi}\sum_{t=1}^{\tau}\left[\E_{\xvec\sim\xivec}\left[f_t(\xvec)\right]-\sum_{i\in[m]}\lvec_t [i]\cdot \E_{\xvec\sim\xivec}\left[c_t(\xvec)[i]\right]\right]-\nonumber\\& \mkern150mu\sum_{t=1}^{\tau}\left[\E_{\xvec\sim\xivec_t}\left[f_t(\xvec)\right] -\sum_{i\in[m]}\lvec_t [i]\cdot \E_{\xvec\sim\xivec_t}\left[c_t(\xvec)[i]\right]\right] \leq \left(1+\frac{2}{\rho_{\min}}\right)R_{\tau}^P,\label{eq:full_lower1}
	\end{align}
	where the $(1+2/\rho_{\min})$ factor is the dependence on the payoffs range given as feedback to the primal regret minimizer.
	
	Thus, we can rearrange Equation~\eqref{eq:full_lower1} to obtain the following lower bound the expected reward attained by Algorithm~\ref{alg:learning}:    \begin{align}\sum_{t=1}^{\tau}\E_{\xvec\sim\xivec_t}\left[f_t(\xvec)\right] \geq \sup_{\xivec\in\Xi}\sum_{t=1}^{\tau}&\left[\E_{\xvec\sim\xivec}\left[f_t(\xvec)\right]-\sum_{i\in[m]}\lvec_t [i]\cdot \E_{\xvec\sim\xivec}\left[c_t(\xvec)[i]\right]\right] \nonumber\\ &+ \sum_{t=1}^{\tau}\sum_{i\in[m]}\lvec_t [i]\cdot \E_{\xvec\sim\xivec_t}\left[c_t(\xvec)[i]\right] - \left(1+\frac{2}{\rho_{\min}}\right)R_{\tau}^P. \label{eq:full_lower2}
	\end{align}
	Thus, we make a similar reasoning for the dual regret minimizer. Specifically, we have, for any Lagrange multiplier $\lvec\in\mathcal{L}$ the following bound:
	\begin{equation*}
		\sum_{t=1}^{\tau}\sum_{i\in[m]}\lvec_t [i]\cdot \left( {B}^{(i)}_{t}-\E_{\xvec\sim\xivec_t}\left[c_t(\xvec)[i]\right]\right) - \sum_{t=1}^{\tau}\sum_{i\in[m]}\lvec [i]\cdot \left( {B}^{(i)}_{t}-\E_{\xvec\sim\xivec_t}\left[c_t(\xvec)[i]\right]\right) \leq \frac{2}{\rho_{\min}}R^{D}_{\tau}, 
	\end{equation*}
	which in turn implies:
	\begin{align}
		&\sum_{t=1}^{\tau}\sum_{i\in[m]}\lvec_t [i]\cdot \E_{\xvec\sim\xivec_t}\left[c_t(\xvec)[i]\right] \geq\nonumber\\ &  \mkern100mu \sum_{t=1}^{\tau}\sum_{i\in[m]}\lvec_t [i]\cdot {B}^{(i)}_{t} - \sum_{t=1}^{\tau}\sum_{i\in[m]}\lvec [i]\cdot \left( {B}^{(i)}_{t}-\E_{\xvec\sim\xivec_t}\left[c_t(\xvec)[i]\right]\right)-\frac{2}{\rho_{\min}}R^{D}_{\tau}, \label{eq:full_lower3}
	\end{align}
	where the $2/\rho_{\min}$ factor follows from the bound on the payoffs range of the dual regret minimizer.
	
	We then substitute Equation~\eqref{eq:full_lower3} in Equation~\eqref{eq:full_lower2} to obtain:  \begin{align*}\sum_{t=1}^{\tau}\E_{\xvec\sim\xivec_t}\left[f_t(\xvec)\right] \geq &\sup_{\xivec\in\Xi}\sum_{t=1}^{\tau}\left[\E_{\xvec\sim\xivec}\left[f_t(\xvec)\right]+\sum_{i\in[m]}\lvec_t [i] \cdot\left({B}^{(i)}_{t} - \E_{\xvec\sim\xivec}\left[c_t(\xvec)[i]\right]\right)\right] \\ & - \sum_{t=1}^{\tau}\sum_{i\in[m]}\lvec [i]\cdot \left( {B}^{(i)}_{t}-\E_{\xvec\sim\xivec_t}\left[c_t(\xvec)[i]\right]\right)-\frac{2}{\rho_{\min}}R^{D}_{\tau} - \left(1+\frac{2}{\rho_{\min}}\right)R_{\tau}^P.
	\end{align*}
	This concludes the proof.
\end{proof}

Thus, we refine the lower bound to the expected rewards by means of the following lemma.
\lemreglowerfull*
\begin{proof}
	We first employ Lemma~\ref{lem:reg_lower_full_prel} to obtain:  \begin{align*}\sum_{t=1}^{\tau}\E_{\xvec\sim\xivec_t}\left[f_t(\xvec)\right] \geq &\sup_{\xivec\in\Xi}\sum_{t=1}^{\tau}\left[\E_{\xvec\sim\xivec}\left[f_t(\xvec)\right]+\sum_{i\in[m]}\lvec_t [i] \cdot\left(B^{(i)}_{t} - \E_{\xvec\sim\xivec}\left[c_t(\xvec)[i]\right]\right)\right] \\ & - \sum_{t=1}^{\tau}\sum_{i\in[m]}\lvec [i]\cdot \left( B^{(i)}_{t}-\E_{\xvec\sim\xivec_t}\left[c_t(\xvec)[i]\right]\right)-\frac{2}{\rho_{\min}}R^{D}_{\tau} - \left(1+\frac{2}{\rho_{\min}}\right)R_{\tau}^P. 
	\end{align*}
	To conclude the proof, we focus on bounding the term:   \[\sup_{\xivec\in\Xi}\sum_{t=1}^{\tau}\left[\E_{\xvec\sim\xivec}\left[f_t(\xvec)\right]+\sum_{i\in[m]}\lvec_t [i] \cdot\left(B^{(i)}_{t} - \E_{\xvec\sim\xivec}\left[c_t(\xvec)[i]\right]\right)\right].\]
	First we define $\Xi^\circ$ as the set which encompasses all safe strategy mixtures during the learning dynamic. Specifically, we let $\Xi^\circ\coloneqq\{\xivec\in\Xi: \E_{\xvec\sim\xivec}[\bar c_t(x)]\leq B_t^{(i)}, \forall i\in[m],t\in[T]\}$. Now, notice that by definition of Problem~\eqref{lp:opt}, there exists a strategy $\xivec^*\in\Xi^\circ$ such that $\sum_{t=1}^{{T}} \E_{\xvec\sim\xivec^*}\left[\bar f_t(\xvec)\right] \geq \textsc{OPT}_{\mathcal{H}} -\gamma$, for all $\gamma>0$. 
	In the rest of the paper, we omit the factor $\gamma$, since it can be chosen arbitrarily small, thus being a negligible factor in the regret bound. Moreover, since the strategy belongs to $\Xi^\circ$, thus, it satisfies the budget constraints imposed by the spending plan, we additionally have:
	\[\sum_{t=1}^{\tau}\sum_{i\in[m]}\lvec_t [i] \cdot\left(B^{(i)}_{t} - \E_{\xvec\sim\xivec^*}\left[\bar c_t(\xvec)[i]\right]\right)\geq 0.\]
	Thus, we can conclude that:
	\begin{align*}      &\sup_{\xivec\in\Xi}\sum_{t=1}^{\tau}\left[\E_{\xvec\sim\xivec}\left[f_t(\xvec)\right]+\sum_{i\in[m]}\lvec_t [i] \cdot\left(B^{(i)}_{t} - \E_{\xvec\sim\xivec}\left[c_t(\xvec)[i]\right]\right)\right] \\
		& \geq \sum_{t=1}^{\tau}\left[\E_{\xvec\sim\xivec^*}\left[f_t(\xvec)\right]+\sum_{i\in[m]}\lvec_t [i] \cdot\left(B^{(i)}_{t} - \E_{\xvec\sim\xivec^*}\left[c_t(\xvec)[i]\right]\right)\right] \\
		& \geq \sum_{t=1}^{\tau}\left[\E_{\xvec\sim\xivec^*}\left[\bar f_t(\xvec)\right]+\sum_{i\in[m]}\lvec_t [i] \cdot\left(B^{(i)}_{t} - \E_{\xvec\sim\xivec^*}\left[\bar c_t(\xvec)[i]\right]\right)\right] - \left(4+4\max_{\lvec\in\mathcal{L}}\|\lvec\|_1\right)\sqrt{2\tau\ln{\frac{T}{\delta}}}\\
		& \geq \sum_{t=1}^{\tau}\E_{\xvec\sim\xivec^*}\left[\bar f_t(\xvec)\right]  - \left(4+4\max_{\lvec\in\mathcal{L}}\|\lvec\|_1\right)\sqrt{2\tau\ln{\frac{T}{\delta}}}\\
		& \geq \textsc{OPT}_{\mathcal{H}} - \left(T-\tau\right) - \left(4+4\max_{\lvec\in\mathcal{L}}\|\lvec\|_1\right)\sqrt{2\tau\ln{\frac{T}{\delta}}},
	\end{align*}    
	where the second inequality holds with probability at least $1-\delta$ by Lemma~\ref{lem:azuma_sequence}.
	
	Noticing that by the update of Algorithm~\ref{alg:learning}, it holds:
	\[
	\sum_{t=1}^{\tau}\E_{\xvec\sim\xivec_t}\left[f_t(\xvec)\right]=\sum_{t=1}^{T}\E_{\xvec\sim\xivec_t}\left[f_t(\xvec)\right],
	\]
	concludes the proof.
\end{proof}

We are now ready to prove the final regret bound.

\thmregfull*
\begin{proof}
	We employ Lemma~\ref{lem:reg_lower_full} to recover the following bound, which holds with probability at least $1-\delta$:   \begin{align*}\sum_{t=1}^{\tau}\E_{\xvec\sim\xivec_t}\left[f_t(\xvec)\right] \geq & \ \textsc{OPT}_{\mathcal{H}} - \left({T}-\tau\right)  - \left(4+4\max_{\lvec\in\mathcal{L}}\|\lvec\|_1\right)\sqrt{2\tau\ln{\frac{T}{\delta}}}\\ &- \sum_{t=1}^{\tau}\sum_{i\in[m]}\lvec [i]\cdot \left( B^{(i)}_{t}-\E_{\xvec\sim\xivec_t}\left[c_t(\xvec)[i]\right]\right)-\frac{2}{\rho_{\min}}R^{D}_{\tau} - \left(1+\frac{2}{\rho_{\min}}\right)R_{\tau}^P.
	\end{align*}
	Thus, we apply Lemma~\ref{lem:azuma} to obtain:
	\begin{align*}\sum_{t=1}^{\tau}&f_t(\xvec_t) \geq \textsc{OPT}_{\mathcal{H}} - \left(T-\tau\right)   - \left(4+4\max_{\lvec\in\mathcal{L}}\|\lvec\|_1\right)\sqrt{2\tau\ln{\frac{T}{\delta}}}\\&-\sum_{t=1}^{\tau}\sum_{i\in[m]}\lvec [i]\cdot \left( B^{(i)}_{t}-c_t(\xvec_t)[i]\right)-\frac{2}{\rho_{\min}}R^{D}_{\tau} - \left(1+\frac{2}{\rho_{\min}}\right)R_{\tau}^P - (4+4\|\lvec\|_1)\sqrt{2\tau\ln{\frac{T}{\delta}}},
	\end{align*}
	which holds with probability at least $1-2\delta$ by Union Bound, form which we have, with same probability, the following regret bound:
	\begin{align*}
	R_T\leq \left(T-\tau\right) +  \sum_{t=1}^{\tau}&\sum_{i\in[m]}\lvec [i]\cdot \left( B^{(i)}_{t}-c_t(\xvec_t)[i]\right) +\frac{2}{\rho_{\min}}R^{D}_{\tau} \\ &+ \left(1+\frac{2}{\rho_{\min}}\right)R_{\tau}^P +\left(8+8\max_{\lvec\in\mathcal{L}}\|\lvec\|_1\right)\sqrt{2\tau\ln{\frac{T}{\delta}}}.
	\end{align*}
	We finally focus on bounding the term $\sum_{t=1}^{\tau}\sum_{i\in[m]}\lvec [i]\cdot \left( B^{(i)}_{t}-c_t(\xvec_t)[i]\right)$. We split the analysis in two cases: $(i)$ $\tau=T$ and  $(ii)$ $\tau<T$.
	
	\paragraph{Bound for case $(i)$.} When $\tau=T$, we select the Lagrange multiplier $\lvec$ as the zero vector $\boldsymbol{0}$ to get the following regret bound:
	\begin{equation*}
		R_T\leq \frac{2}{\rho_{\min}}R^{D}_{\tau} + \left(1+\frac{2}{\rho_{\min}}\right)R_{\tau}^P +\left(8+8\max_{\lvec\in\mathcal{L}}\|\lvec\|_1\right)\sqrt{2\tau\ln{\frac{T}{\delta}}} ,
	\end{equation*}
	which holds with probability at least $1-2\delta$.
	\paragraph{Bound for case $(ii)$.} In such a scenario, the budget has been depleted before the spending plans suggestions. Hence, the following holds for a resource $i^*\in[m]$:
	\begin{align*}
		\sum_{t=1}^{\tau}c_t(\xvec_t)[i^*]+1 \geq B 
		\geq \sum_{t=1}^{{T}} B_t^{(i^*)}.
	\end{align*}
	Thus, we can conclude that:
	\begin{align}
		\sum_{t=1}^{\tau}\sum_{i\in[m]}\lvec [i]\cdot \left( B^{(i)}_{t}-c_t(\xvec_t)[i]\right) 
		&= \sum_{t=1}^{\tau}\frac{1}{\rho_{\min}} \left( B^{(i^*)}_{t}-c_t(\xvec_t)[i^*]\right) \label{full_regret:eq1}\\
		& \leq \frac{1}{\rho_{\min}} \left(\sum_{t=1}^{\tau} B^{(i^*)}_{t}- \sum_{t=1}^{{T}}B^{(i^*)}_{t} \right) +\frac{1}{\rho_{\min}} \nonumber\\
		& = -\frac{1}{\rho_{\min}}\sum_{t=\tau+1}^{{T}} B^{(i^*)}_{t} +\frac{1}{\rho_{\min}}  \nonumber\\
		& \leq -\frac{1}{\rho_{\min}} \rho_{\min} \left({T}-\tau-1\right) + \frac{1}{\rho_{\min}} \nonumber\\
		& = - \left(T-\tau -1 \right)+ \frac{1}{\rho_{\min}},\nonumber
	\end{align}
	where Equation~\eqref{full_regret:eq1} holds selecting $\lvec$ s.t. $\lvec[i^*]=\frac{1}{\rho_{\min}}$ and $\lvec[i]=0$ for all others $i\in[m]$. Thus, substituting the result in the regret bound, we obtain:
	\begin{equation*}
		R_T\leq 1 + \frac{1}{\rho_{\min}}+ \frac{2}{\rho_{\min}}R^{D}_{\tau} + \left(1+\frac{2}{\rho_{\min}}\right)R_{\tau}^P +\left(8+8\max_{\lvec\in\mathcal{L}}\|\lvec\|_1\right)\sqrt{2\tau\ln{\frac{T}{\delta}}},
	\end{equation*}
	which holds with probability at least $1-2\delta$.
	
	To get the final regret bound we notice the following trivial upper bounds:
	\[
	R^{D}_{\tau} \leq R^{D}_{T}, \ R^{P}_{\tau} \leq R^{P}_{T}, \
	\left(8+8\max_{\lvec\in\mathcal{L}}\|\lvec\|_1\right)\sqrt{2\tau\ln{\frac{T}{\delta}}} \leq \left(8+\frac{8}{\rho_{\min}}\right)\sqrt{2T\ln{\frac{T}{\delta}}}.
	\]
	This concludes the proof.
\end{proof}

\subsubsection{Robustness to Baselines Deviating from the Spending Plan}
\label{app:full_standard_error}

In this section, we first provide the definition of the \emph{fixed} baseline which deviates from the spending plan.
Specifically, we first define the \emph{fixed} optimal solution parametrized given the errors $\epsilon^{(i)}_t$ specified for all $t\in[T],i\in[m]$, by means of the following optimization problem:
\begin{equation}   \label{lp:opt_fixed_error}\text{\textsc{OPT}}_{\mathcal{H}}\left(\epsilon_t\right)\coloneqq\begin{cases}
		\sup_{  \xivec\in\Xi} &  \E_{\xvec\sim \xivec}\left[\sum_{t=1}^T\bar f_t(\xvec)\right]\\
		\,\,\, \textnormal{s.t.} & \E_{\xvec\sim \xivec}\left[ \bar c_t(\xvec)[i]\right]\leq B^{(i)}_{t} +\epsilon^{(i)}_{t} \ \ \forall i\in[m], \forall t\in[T]\\
		& \sum_{t=1}^T \E_{\xvec\sim \xivec}\left[ \bar c_t(\xvec)[i]\right]\leq B
		\ \ \forall i\in[m]
	\end{cases}.
\end{equation}
Problem~\eqref{lp:opt_fixed_error} computes the expected value attained by the optimal \emph{fixed} strategy mixture that satisfies the \emph{spending plans} at each round $t\in[T]$, up to error terms $\epsilon^{(i)}_t\geq 0$, defined for all $i\in[m]$ and $t\in[T]$. Similarly to the dynamic baseline, we do \emph{not} make any assumption on the error terms $\epsilon^{(i)}_t$. Nonetheless, the performance of our algorithms will smoothly degrade with the magnitude of these errors. Moreover, the errors must allow Problem~\eqref{lp:opt_fixed_error} to be feasible. We remark that the last group of constraints in Problem~\eqref{lp:opt_fixed_error} ensures that the error terms do not allow the optimal solution to violate the general budget constraint. Observe that when $\epsilon^{(i)}_t = 0$ for all $i \in [m]$ and $t \in [T]$—meaning that the spending plan is strictly followed by the optimal solution—the general budget constraint is satisfied by the definition of the spending plan.

Similarly, we define the following notion of cumulative \emph{static} regret
$R_T(\epsilon_t)\coloneqq\textsc{OPT}_{\mathcal{H}}(\epsilon_t)- \sum_{t=1}^T f_t(\xvec_t),
$
which simply compares the rewards attained by the algorithm with respect to the optimal \emph{fixed} solution which follows the spending plans recommendations up to the errors.

We provide the regret of Algorithm~\ref{alg:learning} with respect to a baseline which deviates from the spending plan.

\begin{theorem}
	\label{thm:reg_full_error}
	For any $\delta\in(0,1)$, Algorithm~\ref{alg:learning}, when instantiated in the full feedback setting with a primal regret minimizer which attains a regret upper bound $R^P_T$ and a dual regret minimizer which attains a regret upper bound $R^D_T$, guarantees, with probability at least $1-2\delta$, the following regret bound:
	\begin{equation*}
		R_T(\epsilon_t)\leq 1 + \frac{1}{\rho_{\min}}+ \frac{2}{\rho_{\min}}R^{D}_{T} + \left(1+\frac{2}{\rho_{\min}}\right)R_{T}^P +\left(8+\frac{8}{\rho_{\min}}\right)\sqrt{2T\ln{\frac{T}{\delta}}}+\frac{1}{\rho_{\min}}\sum_{t=1}^{T}\sum_{i\in[m]}\epsilon^{(i)}_t.
	\end{equation*}
\end{theorem}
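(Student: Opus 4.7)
The plan is to mirror the derivation of Theorem~\ref{thm:reg_full}, inserting the error terms only at the single place where the baseline's feasibility with respect to the spending plan is invoked. All other ingredients---the primal--dual decomposition, the concentration arguments, and the case analysis on the stopping time---carry over unchanged, so the new bound differs from Theorem~\ref{thm:reg_full} only by an additive term controlled by $\sum_{t,i}\epsilon^{(i)}_t$.

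First, I would invoke Lemma~\ref{lem:reg_lower_full_prel}, which is purely algorithmic and makes no reference to the baseline. It yields a lower bound on $\sum_{t=1}^{\tau}\E_{\xvec\sim\xivec_t}[f_t(\xvec)]$ in terms of $\sup_{\xivec\in\Xi}\sum_{t=1}^{\tau}[\E_{\xvec\sim\xivec}[f_t(\xvec)]+\sum_{i}\lvec_t[i](B^{(i)}_t - \E_{\xvec\sim\xivec}[c_t(\xvec)[i]])]$, minus the usual primal-regret, dual-regret, and Lagrangian-violation terms. The key adaptation happens when lower-bounding this supremum by plugging in a (near-)optimal $\xivec^\star$ of Problem~\eqref{lp:opt_fixed_error}. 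Because $\xivec^\star$ now only satisfies $\E_{\xvec\sim\xivec^\star}[\bar c_t(\xvec)[i]] \le B^{(i)}_t + \epsilon^{(i)}_t$, the expected Lagrangian-slack term can be negative but only by a controlled amount:
\[
\sum_{t=1}^{\tau}\sum_{i\in[m]}\lvec_t[i]\bigl(B^{(i)}_t - \E_{\xvec\sim\xivec^\star}[\bar c_t(\xvec)[i]]\bigr) \;\ge\; -\sum_{t=1}^{\tau}\sum_{i\in[m]}\lvec_t[i]\,\epsilon^{(i)}_t \;\ge\; -\frac{1}{\rho_{\min}}\sum_{t=1}^{T}\sum_{i\in[m]}\epsilon^{(i)}_t,
\]
where the last step uses $\|\lvec_t\|_1 \le 1/\rho_{\min}$ coming from the definition of $\mathcal{L}$. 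Applying Lemma~\ref{lem:azuma_sequence} to pass from realised to expected functions (the payoff range again being governed by $1/\rho_{\min}$) and using $\sum_{t=1}^{\tau}\E_{\xvec\sim\xivec^\star}[\bar f_t(\xvec)] \ge \textsc{OPT}_{\mathcal{H}}(\epsilon_t) - (T-\tau)$ produces a direct analogue of Lemma~\ref{lem:reg_lower_full} in which the only new term is the additive $\tfrac{1}{\rho_{\min}}\sum_{t,i}\epsilon^{(i)}_t$.

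Then I would convert expected into realised rewards via Lemma~\ref{lem:azuma} and repeat verbatim the case analysis on the stopping time from Theorem~\ref{thm:reg_full}. When $\tau = T$, choosing $\lvec = \boldsymbol{0}$ eliminates the violation term. When $\tau < T$, some resource $i^\star$ is depleted, and selecting $\lvec$ with $\lvec[i^\star] = 1/\rho_{\min}$ and $\lvec[i] = 0$ otherwise gives, via the same telescoping as in the original proof,
\[
-\sum_{t=1}^{\tau}\sum_{i\in[m]}\lvec[i]\bigl(B^{(i)}_t - c_t(\xvec_t)[i]\bigr) \;\le\; -(T-\tau) + 1 + \frac{1}{\rho_{\min}},
\]
which cancels the $(T-\tau)$ term. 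Bounding $R^P_\tau, R^D_\tau, \sqrt{2\tau\ln(T/\delta)}$ by their values at $T$ and collecting terms yields exactly the claimed inequality.

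The substantive point, and the only place where care is required, is the second step: one must make sure that the error enters only through the per-round spending-plan constraints and not through the aggregate budget, since the aggregate constraint in Problem~\eqref{lp:opt_fixed_error} remains unaltered---this is precisely why the extra penalty has the clean form $\tfrac{1}{\rho_{\min}}\sum_{t,i}\epsilon^{(i)}_t$ and no further cross-term appears. Everything else is a line-by-line replay of the proof of Theorem~\ref{thm:reg_full}.
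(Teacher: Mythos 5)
Your proposal is correct and follows essentially the same route as the paper's proof: invoke Lemma~\ref{lem:reg_lower_full_prel}, plug the near-optimal solution of Problem~\eqref{lp:opt_fixed_error} into the supremum so that the Lagrangian-slack term is lower bounded by $-\frac{1}{\rho_{\min}}\sum_{t,i}\epsilon^{(i)}_t$ via $\|\lvec_t\|_1\le 1/\rho_{\min}$, apply Lemmas~\ref{lem:azuma_sequence} and~\ref{lem:azuma}, and then replay the stopping-time case analysis of Theorem~\ref{thm:reg_full}. No substantive differences.
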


\begin{proof}
	Similarly to Lemma~\ref{lem:reg_lower_full}, we employ Lemma~\ref{lem:reg_lower_full_prel} to obtain:
	\begin{align*}\sum_{t=1}^{\tau}\E_{\xvec\sim\xivec_t}\left[f_t(\xvec)\right] \geq &\sup_{\xivec\in\Xi}\sum_{t=1}^{\tau}\left[\E_{\xvec\sim\xivec}\left[f_t(\xvec)\right]+\sum_{i\in[m]}\lvec_t [i] \cdot\left(B^{(i)}_{t} - \E_{\xvec\sim\xivec}\left[c_t(\xvec)[i]\right]\right)\right] \\ & - \sum_{t=1}^{\tau}\sum_{i\in[m]}\lvec [i]\cdot \left( B^{(i)}_{t}-\E_{\xvec\sim\xivec_t}\left[c_t(\xvec)[i]\right]\right)-\frac{2}{\rho_{\min}}R^{D}_{\tau} - \left(1+\frac{2}{\rho_{\min}}\right)R_{\tau}^P. 
	\end{align*}
	We now bound the term:   \[\sup_{\xivec\in\Xi}\sum_{t=1}^{\tau}\left[\E_{\xvec\sim\xivec}\left[f_t(\xvec)\right]+\sum_{i\in[m]}\lvec_t [i] \cdot\left(B^{(i)}_{t} - \E_{\xvec\sim\xivec}\left[c_t(\xvec)[i]\right]\right)\right],\]
	which is done taking into account the possible error.
	First we define $\Xi^\circ$ as the set which encompasses all safe strategy mixtures during the learning dynamic. Specifically, we let $\Xi^\circ\coloneqq\{\xivec\in\Xi: \E_{\xvec\sim\xivec}[\bar c_t(x)]\leq B_t^{(i)}-\epsilon^{(i)}_t, \forall i\in[m],t\in[T]\}$. Now, notice that by definition of Problem~\eqref{lp:opt_fixed_error}, there exists a strategy $\xivec^*\in\Xi^\circ$ such that $\sum_{t=1}^{{T}} \E_{\xvec\sim\xivec^*}\left[\bar f_t(\xvec)\right] \geq \textsc{OPT}_{\mathcal{H}}(\epsilon_t) -\gamma$, for all $\gamma>0$. 
	In the rest of the paper, we omit the factor $\gamma$, since it can be chosen arbitrarily small, thus being a negligible factor in the regret bound. Moreover, since the strategy belongs to $\Xi^\circ$, thus, it satisfies the budget constraints imposed by the spending plan, up to the error term, we additionally have:
	\[\sum_{t=1}^{\tau}\sum_{i\in[m]}\lvec_t [i] \cdot\left(B^{(i)}_{t} - \E_{\xvec\sim\xivec^*}\left[\bar c_t(\xvec)[i]\right]\right)\geq - \frac{1}{\rho_{\min}}\sum_{t=1}^{\tau}\sum_{i\in[m]}\epsilon^{(i)}_t.\]
	Thus, we can conclude that:
	\begin{align*}      &\sup_{\xivec\in\Xi}\sum_{t=1}^{\tau}\left[\E_{\xvec\sim\xivec}\left[f_t(\xvec)\right]+\sum_{i\in[m]}\lvec_t [i] \cdot\left(B^{(i)}_{t} - \E_{\xvec\sim\xivec}\left[c_t(\xvec)[i]\right]\right)\right] \\
		& \geq \sum_{t=1}^{\tau}\left[\E_{\xvec\sim\xivec^*}\left[f_t(\xvec)\right]+\sum_{i\in[m]}\lvec_t [i] \cdot\left(B^{(i)}_{t} - \E_{\xvec\sim\xivec^*}\left[c_t(\xvec)[i]\right]\right)\right] \\
		& \geq \sum_{t=1}^{\tau}\left[\E_{\xvec\sim\xivec^*}\left[\bar f_t(\xvec)\right]+\sum_{i\in[m]}\lvec_t [i] \cdot\left(B^{(i)}_{t} - \E_{\xvec\sim\xivec^*}\left[\bar c_t(\xvec)[i]\right]\right)\right] - \left(4+4\max_{\lvec\in\mathcal{L}}\|\lvec\|_1\right)\sqrt{2\tau\ln{\frac{T}{\delta}}}\\
		& \geq \sum_{t=1}^{\tau}\E_{\xvec\sim\xivec^*}\left[\bar f_t(\xvec)\right] - \frac{1}{\rho_{\min}}\sum_{t=1}^{\tau}\sum_{i\in[m]}\epsilon^{(i)}_t - \left(4+4\max_{\lvec\in\mathcal{L}}\|\lvec\|_1\right)\sqrt{2\tau\ln{\frac{T}{\delta}}}\\
		& \geq \textsc{OPT}_{\mathcal{H}}(\epsilon_t) - \left(T-\tau\right) - \frac{1}{\rho_{\min}}\sum_{t=1}^{\tau}\sum_{i\in[m]}\epsilon^{(i)}_t - \left(4+4\max_{\lvec\in\mathcal{L}}\|\lvec\|_1\right)\sqrt{2\tau\ln{\frac{T}{\delta}}},
	\end{align*}    
	where the second inequality holds with probability at least $1-\delta$ by Lemma~\ref{lem:azuma_sequence}. Thus, we notice that by the update of Algorithm~\ref{alg:learning}, it holds:
	\[   \sum_{t=1}^{\tau}\E_{\xvec\sim\xivec_t}\left[f_t(\xvec)\right]=\sum_{t=1}^{T}\E_{\xvec\sim\xivec_t}\left[f_t(\xvec)\right].
	\]  
	The final result follows from the same analysis of Theorem~\ref{thm:reg_full}, after noticing that:
	\[
	\sum_{t=1}^{\tau}\sum_{i\in[m]}\epsilon^{(i)}_t\leq \sum_{t=1}^{T}\sum_{i\in[m]}\epsilon^{(i)}_t.
	\]
	This concludes the proof.
\end{proof}

\subsection{Theoretical Guarantees of Algorithm~\ref{alg:meta_full}}
\label{app:full_worst}

In this section, we present the results attained by the meta procedure provided in Algorithm~\ref{alg:meta_full}.

\subsubsection{Algorithm}
We first provide the algorithm for the \emph{full feedback} case.
\begin{algorithm}[!htp]
	\caption{Meta-algorithm for arbitrarily small $\rho_{\min}$ and \emph{full feedback}}
	\label{alg:meta_full}
	\begin{algorithmic}[1]
		\Require Horizon $T$, budget $B$, spending plans $\mathcal{B}^{(i)}_T$ for all $i\in[m]$, primal regret minimizer $\mathcal{R}^P$ (\emph{full feedback}),  dual regret minimizer $\mathcal{R}^D$ (\emph{full feedback})
		\State Define $\Hat{\rho}\coloneqq \rho/{{T}^{\nicefrac{1}{4}}}$ 
		\State Define $\overline{B}^{(i)}_t\coloneqq B^{(i)}_{t} \left(1-{T}^{-\nicefrac{1}{4}}\right) \ \  \forall t\in[T],i\in[m]$  
		\State Run Algorithm~\ref{alg:learning} for \emph{full feedback} with $\rho_{\min}\gets\hat{\rho},$ $B_t^{(i)}\gets \overline{B}^{(i)}_t$ 
	\end{algorithmic}
\end{algorithm}

\subsubsection{Analysis}

In this section, we provide the analysis of Algorithm~\ref{alg:meta_full}.  We start by lower bounding the expected rewards attained by the meta procedure.

\begin{lemma}\label{lem:reg_lower_full_prel_worst}
	Algorithm~\ref{alg:meta_full}, when instantiated with a primal regret minimizer which attains a regret upper bound $R^P_T$ and a dual regret minimizer which attains a regret upper bound $R^D_T$, guarantees the following bound:
	\begin{align*}\sum_{t=1}^{\tau}\E_{\xvec\sim\xivec_t}\left[f_t(\xvec)\right] \geq \sup_{\xivec\in\Xi}\sum_{t=1}^{\tau}&\left[\E_{\xvec\sim\xivec}\left[f_t(\xvec)\right]+\sum_{i\in[m]}\lvec_t [i] \cdot\left(\overline{B}^{(i)}_{t} - \E_{\xvec\sim\xivec}\left[c_t(\xvec)[i]\right]\right)\right] \\ & - \sum_{t=1}^{\tau}\sum_{i\in[m]}\lvec [i]\cdot \left( \overline{B}^{(i)}_{t}-\E_{\xvec\sim\xivec_t}\left[c_t(\xvec)[i]\right]\right)-\frac{2}{\hat{\rho}}R^{D}_{\tau} - \left(1+\frac{2}{\hat{\rho}}\right)R_{\tau}^P,
	\end{align*}
	where $\lvec\in\mathcal{L}$ is an arbitrary Lagrange multiplier and $\tau$ is the stopping time of the algorithm.
\end{lemma}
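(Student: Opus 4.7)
The plan is to observe that Algorithm~\ref{alg:meta_full} is exactly Algorithm~\ref{alg:learning} run with the substitutions $\rho_{\min}\leftarrow\hat\rho$ and $B_t^{(i)}\leftarrow\overline{B}_t^{(i)}$, so the proof of Lemma~\ref{lem:reg_lower_full_prel} carries over line by line after making these replacements. Nothing in that proof exploits any structural property of the original spending plan beyond the fact that the dual reward function is built from the same per-round target budgets that appear in the Lagrangian used by the primal, which is precisely the invariant maintained when we replace $B_t^{(i)}$ by $\overline{B}_t^{(i)}$ throughout.

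First I would invoke the no-regret guarantee of the primal regret minimizer $\mathcal{R}^P$. Because $\mathcal{R}^P$ is now initialized with payoff range $[-1/\hat\rho,\,1+1/\hat\rho]$ (Line~\ref{alg2:line3} of Algorithm~\ref{alg:learning} with the substituted Slater parameter), the primal regret bound is multiplied by $(1+2/\hat\rho)$; rearranging the standard no-regret inequality exactly as in Equation~\eqref{eq:full_lower1}--\eqref{eq:full_lower2} yields
\[
\sum_{t=1}^{\tau}\E_{\xvec\sim\xivec_t}[f_t(\xvec)] \geq \sup_{\xivec\in\Xi}\sum_{t=1}^{\tau}\Big[\E_{\xvec\sim\xivec}[f_t(\xvec)] - \sum_{i\in[m]}\lvec_t[i]\,\E_{\xvec\sim\xivec}[c_t(\xvec)[i]]\Big] + \sum_{t=1}^{\tau}\sum_{i\in[m]}\lvec_t[i]\,\E_{\xvec\sim\xivec_t}[c_t(\xvec)[i]] - \Big(1+\tfrac{2}{\hat\rho}\Big)R_\tau^P.
\]

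Next I would apply the dual no-regret guarantee to the dual reward $r_t^D(\lvec) = -\sum_i \lvec[i]\cdot(\overline{B}_t^{(i)} - \E_{\xvec\sim\xivec_t}[c_t(\xvec)[i]])$ constructed in Line~\ref{alg2:line14}, which with the rescaled budgets produces the same algebraic identity as Equation~\eqref{eq:full_lower3} but with $\overline{B}_t^{(i)}$ in place of $B_t^{(i)}$ and $2/\hat\rho$ in place of $2/\rho_{\min}$. Substituting this lower bound on $\sum_{t,i}\lvec_t[i]\,\E_{\xvec\sim\xivec_t}[c_t(\xvec)[i]]$ into the primal inequality above, and then folding the terms $\sum_{t,i}\lvec_t[i]\,\overline{B}_t^{(i)}$ into the supremum over $\xivec\in\Xi$ (as in the final step of the proof of Lemma~\ref{lem:reg_lower_full_prel}), yields exactly the stated inequality.

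There is no real obstacle: the only thing to check is that the substitutions $\rho_{\min}\to\hat\rho$ and $B_t^{(i)}\to\overline{B}_t^{(i)}$ propagate consistently through the payoff ranges of both regret minimizers and through the construction of $r_t^D$, and this is immediate from the meta-procedure's definition. Thus the claim follows without any new analytic ingredient.
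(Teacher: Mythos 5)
Your proposal is correct and follows exactly the paper's own argument: the paper likewise proves this lemma by observing that Algorithm~\ref{alg:meta_full} is Algorithm~\ref{alg:learning} with $\rho_{\min}\to\hat\rho$ and $B^{(i)}_t\to\overline{B}^{(i)}_t$, so the proof of Lemma~\ref{lem:reg_lower_full_prel} carries over verbatim under these substitutions. Your additional sketch of how the substitutions propagate through the primal and dual payoff ranges and the dual reward construction is consistent with that argument.
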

\begin{proof}
	The proof is analogous to the one of Lemma~\ref{lem:reg_lower_full_prel} after substituting $\rho_{\min}$ with $\hat{\rho}$ and $B^{(i)}_t$ with $\overline B^{(i)}_t$, for all $i\in[m],t\in[T]$.
\end{proof}

We refine the previous lower bound as follows.

\begin{lemma}\label{lem:reg_lower_full_worst}
	For any $\delta\in(0,1)$, Algorithm~\ref{alg:meta_full}, when instantiated with a primal regret minimizer which attains a regret upper bound $R^P_T$ and a dual regret minimizer which attains a regret upper bound $R^D_T$, guarantees the following bound, with probability at least $1-\delta$:
	\[\sum_{t=1}^{\tau}\E_{\xvec\sim\xivec_t}\left[f_t(\xvec)\right] \geq \textsc{OPT}_{\mathcal{H}} - \left(T-\tau\right)  - {T}^{\frac{3}{4}} -  \left(4+\frac{4{T}^{\frac{1}{4}}}{\rho} \right)\sqrt{2\tau\ln\frac{T}{\delta}}
	-\frac{2{T}^{\frac{1}{4}}}{\rho}R^{D}_{\tau} - \left(1+\frac{2T^{\frac{1}{4}}}{\rho}\right)R_{\tau}^P,
	\]
	where $\tau$ is the stopping time of the algorithm.
\end{lemma}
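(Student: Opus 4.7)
The plan is to mirror the structure of Lemma~\ref{lem:reg_lower_dual_worst} (the dual-only, dynamic regret version), but replace the sequence of per-round baselines with a single fixed mixture obtained by combining the hindsight optimum with the void action. The extra error term will come from the primal regret minimizer, exactly as in Lemma~\ref{lem:reg_lower_full}.

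First I would invoke Lemma~\ref{lem:reg_lower_full_prel_worst} with $\Hat\rho = \rho/T^{1/4}$ and $\overline{B}^{(i)}_t = B^{(i)}_t(1-T^{-1/4})$ to obtain a lower bound involving a $\sup$ over fixed mixtures $\xivec \in \Xi$, a spending-plan violation term weighted by an arbitrary $\lvec\in\mathcal{L}$, and the two regret penalties $(2T^{1/4}/\rho)\,R^D_\tau$ and $(1+2T^{1/4}/\rho)\,R^P_\tau$. As in the dual worst-case proof, I would take $\lvec=\boldsymbol{0}$, which kills the violation term entirely, leaving only the sup-term, which is all that needs to be lower bounded.

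Next I would construct a fixed comparator strategy mixture that satisfies the tightened spending plan. Let $\xivec^*\in\Xi$ be near-optimal for Problem~\eqref{lp:opt}, so $\sum_{t=1}^T\E_{\xvec\sim\xivec^*}[\bar f_t(\xvec)]\ge \textsc{OPT}_{\mathcal{H}}-\gamma$ and $\E_{\xvec\sim\xivec^*}[\bar c_t(\xvec)[i]]\le B^{(i)}_t$ for all $t,i$; define
\[
\xivec^{\diamond}\coloneqq \left(1-T^{-1/4}\right)\xivec^*+T^{-1/4}\,\xivec^{\varnothing}.
\]
An identical calculation to the one in Lemma~\ref{lem:reg_lower_dual_worst} shows $\E_{\xvec\sim\xivec^{\diamond}}[\bar c_t(\xvec)[i]]\le \overline{B}^{(i)}_t$ for every $i\in[m]$ and $t\in[T]$, using only the facts that $\xvec^{\varnothing}$ has zero cost and that $\xivec^*$ is feasible for the original spending plan. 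Plugging $\xivec^\diamond$ into the supremum and applying Lemma~\ref{lem:azuma_sequence} (with the bound $\max_{\lvec\in\mathcal L}\|\lvec\|_1\le T^{1/4}/\rho$) converts the sampled reward/cost sums into their expectations at the cost of $(4+4T^{1/4}/\rho)\sqrt{2\tau\ln(T/\delta)}$, with probability at least $1-\delta$. Because $\xivec^{\diamond}$ is feasible for $\overline{B}^{(i)}_t$, the dual-weighted constraint slack term is nonnegative and can be dropped.

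Finally, I would collect the expected-reward contribution: $\sum_{t=1}^\tau \E_{\xvec\sim\xivec^\diamond}[\bar f_t(\xvec)] = (1-T^{-1/4})\sum_{t=1}^\tau \E_{\xvec\sim\xivec^*}[\bar f_t(\xvec)]$, losing at most $T^{3/4}$ (since the rewards are bounded in $[0,1]$ and $\tau\le T$) compared to $\sum_{t=1}^\tau \E_{\xvec\sim\xivec^*}[\bar f_t(\xvec)]$, and then losing $T-\tau$ more to extend to the full horizon so as to invoke $\textsc{OPT}_{\mathcal{H}}$. Letting $\gamma\to 0$ and combining everything yields the claimed bound. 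The only mild obstacle is the verification that $\xivec^\diamond$ respects the rescaled per-round constraints simultaneously in $t$ and $i$; this is handled uniformly by the $(1-T^{-1/4})$ scaling of both sides, just as in the dual case.
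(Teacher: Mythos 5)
Your proposal is correct and follows essentially the same route as the paper's own proof: invoke Lemma~\ref{lem:reg_lower_full_prel_worst}, set $\lvec=\boldsymbol{0}$, construct the mixture $\xivec^\diamond$ combining the near-optimal $\xivec^*$ with the void action at weight $T^{-1/4}$, verify its feasibility for the rescaled plan $\overline{B}^{(i)}_t$, and apply Lemma~\ref{lem:azuma_sequence} before accounting for the $T^{3/4}$ and $(T-\tau)$ losses. No gaps.
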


\begin{proof}
	We first employ Lemma~\ref{lem:reg_lower_full_prel_worst} to obtain:  \begin{align*}\sum_{t=1}^{\tau}\E_{\xvec\sim\xivec_t}\left[f_t(\xvec)\right] \geq &\sup_{\xivec\in\Xi}\sum_{t=1}^{\tau}\left[\E_{\xvec\sim\xivec}\left[f_t(\xvec)\right]+\sum_{i\in[m]}\lvec_t [i] \cdot\left(\overline{B}^{(i)}_{t} - \E_{\xvec\sim\xivec}\left[c_t(\xvec)[i]\right]\right)\right] \\ & - \sum_{t=1}^{\tau}\sum_{i\in[m]}\lvec [i]\cdot \left(\overline{ B}^{(i)}_{t}-\E_{\xvec\sim\xivec_t}\left[c_t(\xvec)[i]\right]\right)-\frac{2{T}^{\frac{1}{4}}}{\rho}R^{D}_{\tau} - \left(1+\frac{2{T}^{\frac{1}{4}}}{\rho}\right)R_{\tau}^P. 
	\end{align*}
	Thus we select the Lagrange variable as  $\lvec=\boldsymbol{0}$ vector to establish the following bound:    \begin{align*}\sum_{t=1}^{\tau}\E_{\xvec\sim\xivec_t}\left[f_t(\xvec)\right] \geq \sup_{\xivec\in\Xi}\sum_{t=1}^{\tau}&\left[\E_{\xvec\sim\xivec}\left[f_t(\xvec)\right]+\sum_{i\in[m]}\lvec_t [i] \cdot\left(\overline{B}^{(i)}_{t} - \E_{\xvec\sim\xivec}\left[c_t(\xvec)[i]\right]\right)\right] \\ &-\frac{2{T}^{\frac{1}{4}}}{\rho}R^{D}_{\tau} - \left(1+\frac{2{T}^{\frac{1}{4}}}{\rho}\right)R_{\tau}^P. 
	\end{align*}
	We now focus on bounding the term:
	\[\sup_{\xivec\in\Xi}\sum_{t=1}^{\tau}\left[\E_{\xvec\sim\xivec}\left[f_t(\xvec)\right]+\sum_{i\in[m]}\lvec_t [i] \cdot\left(\overline{B}^{(i)}_{t} - \E_{\xvec\sim\xivec}\left[c_t(\xvec)[i]\right]\right)\right].\]
	Similarly to Lemma~\ref{lem:reg_lower_full}, we define $\Xi^\circ$ as the set which encompasses all safe strategy mixtures during the learning dynamic. Specifically, we let $\Xi^\circ\coloneqq\{\xivec\in\Xi: \E_{\xvec\sim\xivec}[\bar c_t(x)]\leq B_t^{(i)}, \forall i\in[m],t\in[T]\}$. Now, notice that by definition of Problem~\eqref{lp:opt}, there exists a strategy $\xivec^*\in\Xi^\circ$ such that $\sum_{t=1}^{{T}} \E_{\xvec\sim\xivec^*}\left[\bar f_t(\xvec)\right] \geq \textsc{OPT}_{\mathcal{H}}-\gamma$, for all $\gamma>0$. In the rest of the paper, we omit the factor $\gamma$, since it can be chosen arbitrarily small, thus being a negligible factor in the regret bound. We then define the following strategy mixture $\xivec^\diamond$ as follows:
	\begin{equation*}
		\xivec^\diamond\coloneqq\begin{cases}
			\xvec^\varnothing &  \text{ w.p. } \nicefrac{1}{T^{1/4}}\\
			\xivec^* & \text{ w.p. } 1-\nicefrac{1}{T^{1/4}}
		\end{cases}.
	\end{equation*}
	Thus, we first show that $\xivec^\diamond$ satisfies the per-round expected constraints defined by $\overline{B}^{(i)}_t$. Indeed it holds, for all $i\in[m]$:
	\begin{align*}
		\overline{B}^{(i)}_t- \E_{\xvec\sim\xivec^{\diamond}}[\bar c_t(\xvec)] &= \overline{B}^{(i)}_t - \left(1-\frac{1}{{T}^{1/4}}\right)\E_{\xvec\sim\xivec^{*}}[\bar c_t(\xvec)] - \frac{1}{{T}^{1/4}}\E_{\xvec\sim\xivec^{\varnothing}}[\bar c_t(\xvec)] \\
		&= \left(1-\frac{1}{{T}^{1/4}}\right){B}^{(i)}_t - \left(1-\frac{1}{{T}^{1/4}}\right)\E_{\xvec\sim\xivec^{*}}[\bar c_t(\xvec)] - \frac{1}{{T}^{1/4}}\E_{\xvec\sim\xivec^{\varnothing}}[\bar c_t(\xvec)] \\
		&= \left(1-\frac{1}{{T}^{1/4}}\right){B}^{(i)}_t - \left(1-\frac{1}{{T}^{1/4}}\right)\E_{\xvec\sim\xivec^{*}}[\bar c_t(\xvec)] \\
		&\geq 0,
	\end{align*}
	where we employed the definition of $\xivec^*$ and $\xivec^\varnothing$.
	
	Thus, returning to the quantity of interest, it holds:
	\begin{align*}              \sup_{\xivec\in\Xi}&\sum_{t=1}^{\tau}\left[\E_{\xvec\sim\xivec}\left[f_t(\xvec)\right]+\sum_{i\in[m]}\lvec_t [i] \cdot\left(\overline B^{(i)}_{t} - \E_{\xvec\sim\xivec}\left[c_t(\xvec)[i]\right]\right)\right] \\
		& \geq \sum_{t=1}^{\tau}\left[\E_{\xvec\sim\xivec^\diamond}\left[f_t(\xvec)\right]+\sum_{i\in[m]}\lvec_t [i] \cdot\left(\overline B^{(i)}_{t} - \E_{\xvec\sim\xivec^\diamond}\left[c_t(\xvec)[i]\right]\right)\right] \\
		& \geq \sum_{t=1}^{\tau}\left[\E_{\xvec\sim\xivec^\diamond}\left[\bar f_t(\xvec)\right]+\sum_{i\in[m]}\lvec_t [i] \cdot\left(\overline B^{(i)}_{t} - \E_{\xvec\sim\xivec^\diamond}\left[\bar c_t(\xvec)[i]\right]\right)\right] -  \left(4+\frac{4{T}^{1/4}}{\rho} \right)\sqrt{2\tau\ln\frac{T}{\delta}} \\
		& \geq \sum_{t=1}^{\tau}\left[\E_{\xvec\sim\xivec^\diamond}\left[\bar f_t(\xvec)\right]\right] -  \left(4+\frac{4{T}^{1/4}}{\rho} \right)\sqrt{2\tau\ln\frac{T}{\delta}} \\
		& = \sum_{t=1}^{\tau}\left[\left(1-\frac{1}{{T}^{1/4}}\right)\E_{\xvec\sim\xivec^*}\left[\bar f_t(\xvec)\right]+ \frac{1}{{T}^{1/4}}\E_{\xvec\sim\xivec^\varnothing}\left[\bar f_t(\xvec)\right]\right]-  \left(4+\frac{4{T}^{1/4}}{\rho} \right)\sqrt{2\tau\ln\frac{T}{\delta}} \\
		& = \sum_{t=1}^{\tau}\left[\left(1-\frac{1}{{T}^{1/4}}\right)\E_{\xvec\sim\xivec^*}\left[\bar f_t(\xvec)\right]\right] -  \left(4+\frac{4{T}^{1/4}}{\rho} \right)\sqrt{2\tau\ln\frac{T}{\delta}} \\
		& \geq \sum_{t=1}^{\tau}\E_{\xvec\sim\xivec^*}\left[\bar f_t(\xvec)\right] - {T}^{3/4} -  \left(4+\frac{4{T}^{1/4}}{\rho} \right)\sqrt{2\tau\ln\frac{T}{\delta}} \\
		& \geq \textsc{OPT}_{\mathcal{H}}  -  {T}^{3/4} -\left({T}-\tau\right) -  \left(4+\frac{4{T}^{1/4}}{\rho} \right)\sqrt{2\tau\ln\frac{T}{\delta}} ,
	\end{align*} 
	where the second inequality holds, with probability at least $1-\delta$, by Lemma~\ref{lem:azuma_sequence} and upper bounding the Lagrangian multiplier with $\nicefrac{T^\frac{1}{4}}{\rho}$.
	This concludes the proof.
\end{proof}

Hence, we proceed upper bounding the difference between the horizon $T$ and the stopping time $\tau$. This is done by means of the following lemma. 
\begin{lemma}
	\label{lem:round_bound_full}
	For any $\delta\in(0,1)$, Algorithm~\ref{alg:meta_full}, when instantiated with a primal regret minimizer which attains a regret upper bound $R^P_T$ and a dual regret minimizer which attains a regret upper bound $R^D_T$, guarantees the following bound with probability at least $1-\delta$:
	\begin{equation*}
		T-\tau\leq \frac{14}{\rho}\left(\sqrt{\ln\frac{T}{\delta}}+ \frac{R^P_{T}+R^D_{T}}{\sqrt{T}}\right)T^{\frac{3}{4}}.
	\end{equation*}
\end{lemma}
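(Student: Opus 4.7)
The plan is to mirror the contradiction argument used in Lemma~\ref{lem:round_bound_dual}, but to replace the pointwise optimality of $\xivec_t$ with respect to the Lagrangian (which was available in the ORA setting) by the no-regret guarantee of the primal regret minimizer $\mathcal{R}^P$. Set $C \coloneqq \frac{14}{\rho}\bigl(\sqrt{\ln(T/\delta)} + (R^P_T+R^D_T)/\sqrt{T}\bigr)$ and suppose, toward a contradiction, that $T-\tau > C T^{3/4}$. Since this forces $\tau < T$, some resource $i^\ast\in[m]$ must have been depleted, so $\sum_{t=1}^{\tau} c_t(\xvec_t)[i^\ast] + 1 \geq B = T\rho$.

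The next step is to sandwich the quantity $S \coloneqq \sum_{t=1}^{\tau}\bigl[\E_{\xvec\sim\xivec_t}[f_t(\xvec)] - \sum_{i\in[m]}\lvec_t[i]\cdot\E_{\xvec\sim\xivec_t}[c_t(\xvec)[i]]\bigr]$ between two bounds. For the lower bound, I would apply the no-regret guarantee of $\mathcal{R}^P$ with comparator $\xivec^{\varnothing}$, which yields $S \geq -\bigl(1+\tfrac{2T^{1/4}}{\rho}\bigr)R^P_\tau$ because $\xivec^{\varnothing}$ has value $0$ against any Lagrangian. For the upper bound, I would use $f_t(\xvec)\leq 1$ to get $S \leq \tau - \sum_{t=1}^{\tau}\sum_{i\in[m]}\lvec_t[i]\cdot\E_{\xvec\sim\xivec_t}[c_t(\xvec)[i]]$, then apply the dual no-regret guarantee of $\mathcal{R}^D$ with the fixed multiplier $\lvec$ having $\lvec[i^\ast]=T^{1/4}/\rho$ and zero on all other coordinates, exactly as in the proof of Lemma~\ref{lem:round_bound_dual}. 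This swaps the $\lvec_t$ slack on $\overline{B}^{(i)}_t$ against the fixed-$\lvec$ slack up to an additive error $\tfrac{2T^{1/4}}{\rho}R^D_\tau$.

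At that point the argument proceeds just as in Lemma~\ref{lem:round_bound_dual}: invoke Azuma--Hoeffding (Lemma~\ref{lem:azuma}) to replace $\E_{\xvec\sim\xivec_t}[c_t(\xvec)[i^\ast]]$ by the realized costs $c_t(\xvec_t)[i^\ast]$ up to $O\bigl(\tfrac{T^{3/4}}{\rho}\sqrt{\ln(T/\delta)}\bigr)$, then plug in the depletion inequality $\sum_t c_t(\xvec_t)[i^\ast]+1 \geq T\rho$ together with $\overline{B}^{(i^\ast)}_t = (1 - T^{-1/4})B^{(i^\ast)}_t$, which contributes the crucial $-T^{3/4}$ gain from the rescaling. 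Chaining the two bounds gives
\[
-\Bigl(1+\tfrac{2T^{1/4}}{\rho}\Bigr)R^P_\tau \;\leq\; \tau - T + \tfrac{T^{1/4}}{\rho} + \tfrac{8T^{3/4}}{\rho}\sqrt{\ln(T/\delta)} + \tfrac{2T^{1/4}}{\rho}R^D_\tau,
\]
which under the assumption $T-\tau > CT^{3/4}$ collapses to $CT^{3/4} < \tfrac{T^{1/4}}{\rho} + \tfrac{8T^{3/4}}{\rho}\sqrt{\ln(T/\delta)} + \tfrac{2T^{1/4}}{\rho}R^D_\tau + \bigl(1+\tfrac{2T^{1/4}}{\rho}\bigr)R^P_\tau$. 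Using $R^P_\tau\leq R^P_T$, $R^D_\tau\leq R^D_T$, and $1\leq T^{1/4}/\rho \cdot \text{(constant)}$, the right-hand side is strictly below $CT^{3/4}$ by construction of the constant $14$, producing the required contradiction.

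The main obstacle is handling the $(1+2T^{1/4}/\rho)R^P_\tau$ term that appears in the primal lower bound: unlike the ORA proof, where the strategy mixture is pointwise Lagrangian-optimal and hence contributes no analogous term, here we must absorb it into the leading $C$ constant, which is why the final bound carries $R^P_T+R^D_T$ rather than only $R^D_T$. Once this term is placed on the right side of the sandwich, the algebra is straightforward and the choice of the constant $14$ (as opposed to $13$) is precisely what leaves enough slack to dominate the $T^{1/4}/\rho$, $\tfrac{8T^{3/4}}{\rho}\sqrt{\ln(T/\delta)}$, $\tfrac{2T^{1/4}}{\rho}R^D_T$, and $\bigl(1+\tfrac{2T^{1/4}}{\rho}\bigr)R^P_T$ contributions simultaneously.
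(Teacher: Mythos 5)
Your proposal is correct and follows essentially the same argument as the paper's proof: the same contradiction setup, the same sandwich of the Lagrangian sum (primal no-regret against $\xivec^{\varnothing}$ for the lower bound; dual no-regret with $\lvec[i^\ast]=T^{1/4}/\rho$, Azuma--Hoeffding, the depletion inequality, and the $(1-T^{-1/4})$ rescaling for the upper bound), and the same absorption of the extra $(1+2T^{1/4}/\rho)R^P_\tau$ term into the constant $14$. No substantive differences to report.
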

\begin{proof}
	Suppose by contradiction that $T-\tau> CT^{3/4} $, thus, it holds $\tau<T- CT^{3/4}$.
	
	We proceed upper and lower bounding the value of the Lagrangian given as feedback to the primal regret minimizer. Thus, we employ the no-regret property of the primal regret minimizer $\mathcal{R}^P$. Given that, it holds:
	\begin{align*}  &\sum_{t=1}^{\tau}\left[\E_{\xvec\sim\xivec_t}\left[f_t(\xvec)\right] -\sum_{i\in[m]}\lvec_t [i]\cdot \E_{\xvec\sim\xivec_t}\left[c_t(\xvec)[i]\right]\right] \\&\mkern100mu\geq\sum_{t=1}^{\tau}\left[\E_{\xvec\sim\xivec^\varnothing}\left[f_t(\xvec)\right]-\sum_{i\in[m]}\lvec_t [i]\cdot \E_{\xvec\sim\xivec^\varnothing}\left[c_t(\xvec)[i]\right]\right]  - \left(1+\frac{2{T}^{1/4}}{\rho}\right)R_{\tau}^P
		\\&\mkern100mu = - \left(1+\frac{2{T}^{1/4}}{\rho}\right)R_{\tau}^P,
	\end{align*}
	where we already substituted the value of $\hat{\rho}$.
	
	To upper bound the same quantity, we follow the same analysis of Lemma~\ref{lem:round_bound_dual}. Hence, it holds:
	\begin{subequations}
		\begin{align}
			\sum_{t=1}^{\tau}&\left[\E_{\xvec\sim\xivec_t}\left[f_t(\xvec)\right] -\sum_{i\in[m]}\lvec_t [i]\cdot \E_{\xvec\sim\xivec_t}\left[c_t(\xvec)[i]\right]\right] \nonumber\\
			& \leq \tau - \sum_{t=1}^{\tau}\sum_{i\in[m]}\lvec_t [i]\cdot \E_{\xvec\sim\xivec_t}\left[c_t(\xvec)[i]\right] \nonumber\\
			& \leq \tau + \sum_{t=1}^{\tau}\sum_{i\in[m]}\lvec [i]\cdot \left(\overline{B}^{(i)}_t-\E_{\xvec\sim\xivec_t}\left[c_t(\xvec)[i]\right]\right) + \frac{2T^{\frac{1}{4}}}{\rho}R^{D}_{\tau} -\sum_{t=1}^{\tau}\sum_{i\in[m]}\lvec_t [i]\cdot \overline{B}^{(i)}_t\label{eq1:boundT}\\
			& \leq \tau + \sum_{t=1}^{\tau}\sum_{i\in[m]}\lvec [i]\cdot \left(\overline{B}^{(i)}_t-\E_{\xvec\sim\xivec_t}\left[c_t(\xvec)[i]\right]\right) + \frac{2T^{\frac{1}{4}}}{\rho}R^{D}_{\tau}\nonumber\\
			& = \tau + \sum_{t=1}^{\tau}\sum_{i\in[m]}\lvec [i]\cdot \left(\left(1-\frac{1}{{T}^{1/4}}\right){B}^{(i)}_t-\E_{\xvec\sim\xivec_t}\left[c_t(\xvec)[i]\right]\right) + \frac{2{T}^{\frac{1}{4}}}{\rho}R^{D}_{\tau}\nonumber\\
			& = \tau + \frac{{T}^{1/4}}{\rho}\cdot \sum_{t=1}^{\tau}\left(\left(1-\frac{1}{{T}^{1/4}}\right){B}^{(i^*)}_t-\E_{\xvec\sim\xivec_t}\left[c_t(\xvec)[i^*]\right]\right) + \frac{2{T}^{\frac{1}{4}}}{\rho}R^{D}_{\tau}\label{eq2:boundT}\\
			& \leq \tau + \frac{{T}^{1/4}}{\rho}\cdot \sum_{t=1}^{\tau}\left(\left(1-\frac{1}{T^{1/4}}\right){B}^{(i^*)}_t-c_t(\xvec_t)[i^*]\right) +\frac{8T^{3/4}}{\rho}\sqrt{\ln\frac{ T}{\delta}}  + \frac{2T^{\frac{1}{4}}}{\rho}R^{D}_{\tau}\label{eq3:boundT}\\
			& \leq \tau + \frac{T^{1/4}}{\rho}\cdot \left(\left(1-\frac{1}{T^{1/4}}\right)T\rho-T\rho+1\right) +\frac{8T^{3/4}}{\rho}\sqrt{\ln\frac{ T}{\delta}}  + \frac{2T^{\frac{1}{4}}}{\rho}R^{D}_{\tau}\label{eq4:boundT}\\
			& = \tau + \frac{{T}^{1/4}}{\rho}-T+\frac{8{T}^{3/4}}{\rho}\sqrt{\ln\frac{ T}{\delta}}  + \frac{2{T}^{\frac{1}{4}}}{\rho}R^{D}_{\tau}\nonumber\\
			& < T- CT^{3/4} + \frac{T^{1/4}}{\rho}-T+\frac{8T^{3/4}}{\rho}\sqrt{\ln\frac{ T}{\delta}}  + \frac{2T^{\frac{1}{4}}}{\rho}R^{D}_{\tau}\label{eq5:boundT}\\
			& = - C{T}^{3/4} + \frac{{T}^{1/4}}{\rho}+\frac{8{T}^{3/4}}{\rho}\sqrt{\ln\frac{ T}{\delta}}  + \frac{2{T}^{\frac{1}{4}}}{\rho}R^{D}_{\tau},\nonumber
		\end{align}
	\end{subequations}
	where Inequality~\eqref{eq1:boundT} holds by the no-regret property of the dual regret minimizer $\mathcal{R}^D$, Equation~\eqref{eq2:boundT} holds selecting $\lvec$ s.t. $\lvec[i^*]=\nicefrac{T^{1/4}}{\rho}$ and $\lvec[i]=0$ for all other $i\in[m]$, where $i^*$ is the depleted resource -- notice that, there must exists a depleted resource since $T-\tau> 0 $ --, Inequality~\eqref{eq3:boundT} holds with probability at least $1-\delta$ employing the Azuma-Höeffding inequality, Inequality~\eqref{eq4:boundT} holds since the following holds for the resource $i^*\in[m]$:
	\[
	\sum_{t=1}^{\tau}c_t(\xvec_t)[i^*]+1 \geq B 
	= T\rho,
	\]
	and finally Inequality~\eqref{eq5:boundT} holds since $T-\tau> C{T}^{3/4} $.
	
	Setting $C\geq\frac{14}{\rho}\left(\sqrt{\ln\frac{{T}}{\delta}}+ \frac{R^P_{{T}}+R^D_{{T}}}{\sqrt{{T}}}\right)$ we reach the contradiction. This concludes the proof.
\end{proof}
We are now ready to prove the final regret bound of Algorithm~\ref{alg:meta_full}.
\begin{theorem}
	\label{thm:reg_full_worst case}
	For any $\delta\in(0,1)$, Algorithm~\ref{alg:meta_full}, when instantiated with a primal regret minimizer which attains a regret upper bound $R^P_T$ and a dual regret minimizer which attains a regret upper bound $R^D_T$, guarantees, with probability at least $1-3\delta$, the following regret bound:
	\[
	R_T\leq \frac{14}{\rho}\left(\sqrt{\ln\frac{T}{\delta}}+ \frac{R^P_{T}+R^D_{T}}{\sqrt{T}}\right)T^{\frac{3}{4}}  + T^{\frac{3}{4}}+  \left(8+\frac{4{T}^{\frac{1}{4}}}{\rho} \right)\sqrt{2T\ln\frac{T}{\delta}}+\frac{2T^{\frac{1}{4}}}{\rho}R^{D}_{T} + \left(1+\frac{2T^{\frac{1}{4}}}{\rho}\right)R_{T}^P.
	\]
\end{theorem}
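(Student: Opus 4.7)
\textbf{Proof proposal for Theorem~\ref{thm:reg_full_worst case}.} The plan is to combine the three building blocks developed for the \emph{full feedback} worst-case regime—Lemma~\ref{lem:reg_lower_full_worst}, Lemma~\ref{lem:round_bound_full}, and the Azuma--H\"oeffding concentration of Lemma~\ref{lem:azuma}—in exactly the same way Theorem~\ref{thm:reg_dual_worst_case} is derived in the ORA setting. The overall structure is: (i) lower-bound the \emph{expected} cumulative reward of Algorithm~\ref{alg:meta_full} in terms of $\textsc{OPT}_{\mathcal H}$, the slack $T-\tau$, the primal/dual regret guarantees, and a concentration term; (ii) pass from the expected reward $\sum_{t=1}^{\tau}\E_{\xvec\sim\xivec_t}[f_t(\xvec)]$ to the realized reward $\sum_{t=1}^{\tau}f_t(\xvec_t)$ via Azuma--H\"oeffding; (iii) control the worst-case early-stopping term $T-\tau$ by the bound of Lemma~\ref{lem:round_bound_full}; (iv) union-bound the three bad events (each of probability at most $\delta$) to obtain a statement with confidence $1-3\delta$.

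More concretely, I would first invoke Lemma~\ref{lem:reg_lower_full_worst}, which (with probability at least $1-\delta$) already contains every term of the stated bound except $T-\tau$ and the Azuma fluctuation on the realized reward. Then I would apply Lemma~\ref{lem:azuma} to the bounded martingale difference sequence $f_t(\xvec_t)-\E_{\xvec\sim\xivec_t}[f_t(\xvec)]$, which yields
\begin{equation*}
\sum_{t=1}^{\tau}f_t(\xvec_t)\;\geq\;\sum_{t=1}^{\tau}\E_{\xvec\sim\xivec_t}[f_t(\xvec)]-4\sqrt{2\tau\ln\tfrac{T}{\delta}}
\end{equation*}
with probability at least $1-\delta$, and again we use $\sum_{t=1}^{\tau}f_t(\xvec_t)=\sum_{t=1}^{T}f_t(\xvec_t)$ by the algorithm's construction (the algorithm plays $\xvec^{\varnothing}$, which has zero reward, after the stopping time). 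Combining the two yields a preliminary regret bound of the form
\begin{equation*}
R_T\;\leq\;(T-\tau)+T^{3/4}+\Bigl(8+\tfrac{4T^{1/4}}{\rho}\Bigr)\sqrt{2\tau\ln\tfrac{T}{\delta}}+\tfrac{2T^{1/4}}{\rho}R^{D}_{\tau}+\Bigl(1+\tfrac{2T^{1/4}}{\rho}\Bigr)R^{P}_{\tau},
\end{equation*}
valid with probability at least $1-2\delta$ by union bound.

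Next I would plug in the worst-case early-stopping bound of Lemma~\ref{lem:round_bound_full}, which replaces $T-\tau$ by $\tfrac{14}{\rho}\bigl(\sqrt{\ln(T/\delta)}+(R^{P}_{T}+R^{D}_{T})/\sqrt{T}\bigr)T^{3/4}$ and eats an additional $\delta$ of failure probability, bringing the total confidence to $1-3\delta$ by a final union bound. The remaining work is purely cosmetic: monotonicity of the regret bounds gives $R^{D}_{\tau}\leq R^{D}_{T}$ and $R^{P}_{\tau}\leq R^{P}_{T}$, and $\sqrt{2\tau\ln(T/\delta)}\leq\sqrt{2T\ln(T/\delta)}$, which delivers the closed-form statement. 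The main obstacle—and it is a minor one, identical to what is already dealt with in the ORA proof of Theorem~\ref{thm:reg_dual_worst_case}—is simply tracking the bookkeeping of the three high-probability events and verifying that the rescaled Slater parameter $\hat\rho=\rho/T^{1/4}$ in Algorithm~\ref{alg:meta_full} indeed produces the $T^{1/4}/\rho$ factors in the right places; no new technical idea beyond Lemmas~\ref{lem:reg_lower_full_worst} and~\ref{lem:round_bound_full} is required.
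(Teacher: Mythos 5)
Your proposal is correct and follows essentially the same route as the paper's own proof: invoke Lemma~\ref{lem:reg_lower_full_worst}, pass to realized rewards via Azuma--H\"oeffding, substitute the early-stopping bound of Lemma~\ref{lem:round_bound_full}, union-bound the three events, and finish with monotonicity of $R^P_\tau$, $R^D_\tau$, and $\sqrt{2\tau\ln(T/\delta)}$. The intermediate constants you track (in particular the merged $(8+4T^{1/4}/\rho)$ coefficient on the concentration term) match the paper's derivation exactly.
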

\begin{proof}
	We first employ Lemma~\ref{lem:reg_lower_full_worst} to get the following bound, with probability at least $1-\delta$:
	\[\sum_{t=1}^{\tau}\E_{\xvec\sim\xivec_t}\left[f_t(\xvec)\right] \geq \textsc{OPT}_{\mathcal{H}} - \left(T-\tau\right)  - {T}^{\frac{3}{4}} -  \left(4+\frac{4{T}^{\frac{1}{4}}}{\rho} \right)\sqrt{2\tau\ln\frac{T}{\delta}}-\frac{2{T}^{\frac{1}{4}}}{\rho}R^{D}_{\tau} - \left(1+\frac{2T^{\frac{1}{4}}}{\rho}\right)R_{\tau}^P.
	\]
	Thus, we employ the Azuma-Höeffding inequality to get, with probability at least $1-2\delta$ by Union Bound:
	\begin{align*}\sum_{t=1}^{\tau}f_t(\xvec_t)  \geq \textsc{OPT}_{\mathcal{H}}& - \left(T-\tau\right)  - {T}^{\frac{3}{4}} -  \left(4+\frac{4{T}^{\frac{1}{4}}}{\rho} \right)\sqrt{2\tau\ln\frac{T}{\delta}} \\ &-\frac{2{T}^{\frac{1}{4}}}{\rho}R^{D}_{\tau} - \left(1+\frac{2{T}^{\frac{1}{4}}}{\rho}\right)R_{\tau}^P - 4\sqrt{2\tau\ln\frac{T}{\delta}},
	\end{align*}
	which in turn implies, with probability at least $1-2\delta$:
	\[
	R_T\leq \left(T-\tau\right) + T^{\frac{3}{4}}+  \left(4+\frac{4{T}^{\frac{1}{4}}}{\rho} \right)\sqrt{2\tau\ln\frac{T}{\delta}} +\frac{2T^{\frac{1}{4}}}{\rho}R^{D}_{\tau} + \left(1+\frac{2T^{\frac{1}{4}}}{\rho}\right)R_{\tau}^P + 4\sqrt{2\tau\ln\frac{T}{\delta}}.
	\]
	We then apply Lemma~\ref{lem:round_bound_full} to obtain, with probability at least $1-3\delta$, by Union Bound, the following regret upper bound:
	\begin{align*}
		R_T\leq &\frac{14}{\rho}\left(\sqrt{\ln\frac{T}{\delta}}+ \frac{R^P_{T}+R^D_{T}}{\sqrt{T}}\right)T^{\frac{3}{4}}  + T^{\frac{3}{4}}+  \left(4+\frac{4{T}^{\frac{1}{4}}}{\rho} \right)\sqrt{2\tau\ln\frac{T}{\delta}}\\ &\mkern95mu+\frac{2T^{\frac{1}{4}}}{\rho}R^{D}_{\tau} + \left(1+\frac{2T^{\frac{1}{4}}}{\rho}\right)R_{\tau}^P + 4\sqrt{2\tau\ln\frac{T}{\delta}}.
	\end{align*}
	To get the final regret bound we notice the following trivial upper bounds:
	\[
	R^{D}_{\tau} \leq R^{D}_{T}, \quad R^{P}_{\tau} \leq R^{P}_{T}, \quad 
	\sqrt{2\tau\ln{\frac{T}{\delta}}} \leq \sqrt{2T\ln{\frac{T}{\delta}}}.
	\]
	This concludes the proof.
\end{proof}

\subsubsection{Robustness to Baselines Deviating from the Spending Plan}
\label{app:full_worst_error}
We provide the regret of Algorithm~\ref{alg:meta_full} with respect to a baseline which deviates from the spending plan.
\begin{theorem}
	\label{thm:reg_full_worst_case_error}
	For any $\delta\in(0,1)$, Algorithm~\ref{alg:meta_full}, when instantiated with a primal regret minimizer which attains a regret upper bound $R^P_T$ and a dual regret minimizer which attains a regret upper bound $R^D_T$, guarantees, with probability at least $1-3\delta$, the following regret bound:
	\begin{align*}
		R_T(\epsilon_t)\leq & \ \frac{14}{\rho}\left(\sqrt{\ln\frac{T}{\delta}}+ \frac{R^P_{T}+R^D_{T}}{\sqrt{T}}\right)T^{\frac{3}{4}}  + T^{\frac{3}{4}}+  \left(8+\frac{4{T}^{\frac{1}{4}}}{\rho} \right)\sqrt{2T\ln\frac{T}{\delta}}\\ &\mkern100mu+\frac{T^{\frac{1}{4}}}{\rho}\sum_{t=1}^{T}\sum_{i\in[m]}\epsilon^{(i)}_t+\frac{2T^{\frac{1}{4}}}{\rho}R^{D}_{T} + \left(1+\frac{2T^{\frac{1}{4}}}{\rho}\right)R_{T}^P.
	\end{align*}
\end{theorem}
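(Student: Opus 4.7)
The plan is to combine the two modifications we already have in separate proofs: the mixing trick used in Lemma~\ref{lem:reg_lower_full_worst} to handle arbitrarily small $\rho_{\min}$, together with the error-slack handling used in the proof of Theorem~\ref{thm:reg_full_error}. The overall structure will essentially mirror the proof of Theorem~\ref{thm:reg_full_worst case}, but with the baseline $\textsc{OPT}_{\mathcal{H}}$ replaced by $\textsc{OPT}_{\mathcal{H}}(\epsilon_t)$ and an additional $\frac{T^{1/4}}{\rho}\sum_{t,i}\epsilon^{(i)}_t$ term arising from the per-round slack of the competitor.

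First I would invoke Lemma~\ref{lem:reg_lower_full_prel_worst} with the choice $\lvec=\boldsymbol{0}$ to obtain the lower bound
\[
\sum_{t=1}^{\tau}\E_{\xvec\sim\xivec_t}[f_t(\xvec)] \ge \sup_{\xivec\in\Xi}\sum_{t=1}^{\tau}\!\left[\E_{\xvec\sim\xivec}[f_t(\xvec)]+\sum_{i\in[m]}\lvec_t[i]\left(\overline B^{(i)}_t-\E_{\xvec\sim\xivec}[c_t(\xvec)[i]]\right)\right]-\tfrac{2T^{1/4}}{\rho}R^D_\tau-\bigl(1+\tfrac{2T^{1/4}}{\rho}\bigr)R^P_\tau.
\]
Next, let $\xivec^*$ be the $\gamma$-approximately optimal feasible strategy for Problem~\eqref{lp:opt_fixed_error}, which by definition satisfies $\E_{\xvec\sim\xivec^*}[\bar c_t(\xvec)[i]]\le B^{(i)}_t+\epsilon^{(i)}_t$ for every $i,t$. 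Define the mixed strategy $\xivec^\diamond$ that plays $\xvec^\varnothing$ with probability $T^{-1/4}$ and $\xivec^*$ otherwise, exactly as in Lemma~\ref{lem:reg_lower_full_worst}. The key computation, identical in form to the one in Appendix~\ref{app:allocation_worst_error}, is
\[
\overline B^{(i)}_t-\E_{\xvec\sim\xivec^\diamond}[\bar c_t(\xvec)[i]] = \left(1-T^{-1/4}\right)\!\left(B^{(i)}_t-\E_{\xvec\sim\xivec^*}[\bar c_t(\xvec)[i]]\right)\ge -\left(1-T^{-1/4}\right)\epsilon^{(i)}_t \ge -\epsilon^{(i)}_t,
\]
so that, after upper bounding $\|\lvec_t\|_1\le T^{1/4}/\rho$, the dual penalty contributes at most $\frac{T^{1/4}}{\rho}\sum_{t=1}^{\tau}\sum_{i\in[m]}\epsilon^{(i)}_t$.

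Plugging $\xivec^\diamond$ as the competitor inside the supremum, applying Lemma~\ref{lem:azuma_sequence} to pass from samples $f_t,c_t$ to their expectations (this costs $(4+4T^{1/4}/\rho)\sqrt{2\tau\ln(T/\delta)}$ with probability $1-\delta$), and noting that the mixing loss is at most $T^{3/4}$ while the shortfall due to early stopping contributes $-(T-\tau)$, yields the analogue of Lemma~\ref{lem:reg_lower_full_worst}:
\[
\sum_{t=1}^{\tau}\E_{\xvec\sim\xivec_t}[f_t(\xvec)] \ge \textsc{OPT}_{\mathcal{H}}(\epsilon_t)-(T-\tau)-T^{3/4}-\bigl(4+\tfrac{4T^{1/4}}{\rho}\bigr)\sqrt{2\tau\ln\tfrac{T}{\delta}}-\tfrac{T^{1/4}}{\rho}\sum_{t=1}^{\tau}\sum_{i\in[m]}\epsilon^{(i)}_t-\tfrac{2T^{1/4}}{\rho}R^D_\tau-\bigl(1+\tfrac{2T^{1/4}}{\rho}\bigr)R^P_\tau.
\]
Finally, one Azuma--Höeffding step (probability $1-\delta$) converts expected rewards to realized rewards, and Lemma~\ref{lem:round_bound_full} (probability $1-\delta$) bounds $T-\tau$ by $\tfrac{14}{\rho}(\sqrt{\ln(T/\delta)}+\tfrac{R^P_T+R^D_T}{\sqrt{T}})T^{3/4}$. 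A union bound over the three events yields the stated probability $1-3\delta$, and the trivial relaxations $\sum_{t=1}^{\tau}\sum_{i}\epsilon^{(i)}_t\le\sum_{t=1}^{T}\sum_{i}\epsilon^{(i)}_t$, $R^P_\tau\le R^P_T$, $R^D_\tau\le R^D_T$, $\sqrt{\tau}\le\sqrt{T}$ complete the proof.

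The main obstacle is purely bookkeeping: one must verify that the mixing strategy $\xivec^\diamond$ remains meaningful when the underlying $\xivec^*$ only satisfies the per-round constraints up to the errors $\epsilon^{(i)}_t$, and that the extra slack is absorbed cleanly into the Lagrangian penalty rather than contaminating the $T^{3/4}$ mixing loss. Because $\xivec^*$ is only feasible for Problem~\eqref{lp:opt_fixed_error} (which includes the aggregate budget constraint $\sum_t\E[\bar c_t(\xvec)[i]]\le B$), the competitor is guaranteed not to exploit the per-round slack $\epsilon^{(i)}_t$ to over-spend globally, so the argument is self-consistent.
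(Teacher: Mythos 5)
Your proposal is correct and follows essentially the same route as the paper's proof: invoke Lemma~\ref{lem:reg_lower_full_prel_worst} with $\lvec=\boldsymbol{0}$, plug in the mixture $\xivec^\diamond$ of the void action and the $\epsilon$-feasible optimizer of Problem~\eqref{lp:opt_fixed_error}, absorb the per-round slack into a $\frac{T^{1/4}}{\rho}\sum_{t,i}\epsilon^{(i)}_t$ Lagrangian penalty, and then reuse the concentration, $T-\tau$ bound from Lemma~\ref{lem:round_bound_full}, and union-bound steps of Theorem~\ref{thm:reg_full_worst case}. The key inequality $\overline B^{(i)}_t-\E_{\xvec\sim\xivec^\diamond}[\bar c_t(\xvec)[i]]\ge-\epsilon^{(i)}_t$ and the probability accounting match the paper exactly.
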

\begin{proof}
	Similarly to Lemma~\ref{lem:reg_lower_full_worst}, we first employ Lemma~\ref{lem:reg_lower_full_prel_worst} and the definition of $\hat{\rho}$: \begin{align*}\sum_{t=1}^{\tau}\E_{\xvec\sim\xivec_t}\left[f_t(\xvec)\right] \geq &\sup_{\xivec\in\Xi}\sum_{t=1}^{\tau}\left[\E_{\xvec\sim\xivec}\left[f_t(\xvec)\right]+\sum_{i\in[m]}\lvec_t [i] \cdot\left(\overline{B}^{(i)}_{t} - \E_{\xvec\sim\xivec}\left[c_t(\xvec)[i]\right]\right)\right] \\ & - \sum_{t=1}^{\tau}\sum_{i\in[m]}\lvec [i]\cdot \left(\overline{ B}^{(i)}_{t}-\E_{\xvec\sim\xivec_t}\left[c_t(\xvec)[i]\right]\right)-\frac{2{T}^{\frac{1}{4}}}{\rho}R^{D}_{\tau} - \left(1+\frac{2{T}^{\frac{1}{4}}}{\rho}\right)R_{\tau}^P. 
	\end{align*}
	Thus we select the Lagrange variable as  $\lvec=\boldsymbol{0}$ vector to establish the following bound:    \begin{align*}\sum_{t=1}^{\tau}\E_{\xvec\sim\xivec_t}\left[f_t(\xvec)\right] \geq \sup_{\xivec\in\Xi}&\sum_{t=1}^{\tau}\left[\E_{\xvec\sim\xivec}\left[f_t(\xvec)\right]+\sum_{i\in[m]}\lvec_t [i] \cdot\left(\overline{B}^{(i)}_{t} - \E_{\xvec\sim\xivec}\left[c_t(\xvec)[i]\right]\right)\right] \\ & -\frac{2{T}^{\frac{1}{4}}}{\rho}R^{D}_{\tau} - \left(1+\frac{2{T}^{\frac{1}{4}}}{\rho}\right)R_{\tau}^P. 
	\end{align*}
	We now focus on bounding the following term when the baseline deviates from the spending plan:
	\[\sup_{\xivec\in\Xi}\sum_{t=1}^{\tau}\left[\E_{\xvec\sim\xivec}\left[f_t(\xvec)\right]+\sum_{i\in[m]}\lvec_t [i] \cdot\left(\overline{B}^{(i)}_{t} - \E_{\xvec\sim\xivec}\left[c_t(\xvec)[i]\right]\right)\right].\]
	We define $\Xi^\circ$ as the set which encompasses all safe strategy mixtures during the learning dynamic. Specifically, we let $\Xi^\circ\coloneqq\{\xivec\in\Xi: \E_{\xvec\sim\xivec}[\bar c_t(x)]\leq B_t^{(i)}-\epsilon^{(i)}_t, \forall i\in[m],t\in[T]\}$. Now, notice that by definition of Problem~\eqref{lp:opt_fixed_error}, there exists a strategy $\xivec^*\in\Xi^\circ$ such that $\sum_{t=1}^{{T}} \E_{\xvec\sim\xivec^*}\left[\bar f_t(\xvec)\right] \geq \textsc{OPT}_{\mathcal{H}}(\epsilon_t)-\gamma$, for all $\gamma>0$. In the rest of the paper, we omit the factor $\gamma$, since it can be chosen arbitrarily small, thus being a negligible factor in the regret bound. We then define the following strategy mixture $\xivec^\diamond$ as follows:
	\begin{equation*}
		\xivec^\diamond\coloneqq\begin{cases}
			\xvec^\varnothing &  \text{ w.p. } \nicefrac{1}{T^{1/4}}\\
			\xivec^* & \text{ w.p. } 1-\nicefrac{1}{T^{1/4}}
		\end{cases}.
	\end{equation*}
	Thus, we first show that $\xivec^\diamond$ satisfies the per-round expected constraints defined by $\overline{B}^{(i)}_t$. Indeed it holds, for all $i\in[m]$:
	\begin{align*}
		\overline{B}^{(i)}_t- \E_{\xvec\sim\xivec^{\diamond}}[\bar c_t(\xvec)] &= \overline{B}^{(i)}_t - \left(1-\frac{1}{{T}^{1/4}}\right)\E_{\xvec\sim\xivec^{*}}[\bar c_t(\xvec)] - \frac{1}{{T}^{1/4}}\E_{\xvec\sim\xivec^{\varnothing}}[\bar c_t(\xvec)] \\
		&= \left(1-\frac{1}{{T}^{1/4}}\right){B}^{(i)}_t - \left(1-\frac{1}{{T}^{1/4}}\right)\E_{\xvec\sim\xivec^{*}}[\bar c_t(\xvec)] - \frac{1}{{T}^{1/4}}\E_{\xvec\sim\xivec^{\varnothing}}[\bar c_t(\xvec)] \\
		&= \left(1-\frac{1}{{T}^{1/4}}\right){B}^{(i)}_t - \left(1-\frac{1}{{T}^{1/4}}\right)\E_{\xvec\sim\xivec^{*}}[\bar c_t(\xvec)] \\
		&\geq -\left(1-\frac{1}{{T}^{1/4}}\right)\epsilon^{(i)}_t\\
		& \geq -\epsilon^{(i)}_t,
	\end{align*}
	where we employed the definition of $\xivec^*$ and $\xivec^\varnothing$.
	
	Thus, returning to the quantity of interest, it holds:
	\begin{align*}              \sup_{\xivec\in\Xi}&\sum_{t=1}^{\tau}\left[\E_{\xvec\sim\xivec}\left[f_t(\xvec)\right]+\sum_{i\in[m]}\lvec_t [i] \cdot\left(\overline B^{(i)}_{t} - \E_{\xvec\sim\xivec}\left[c_t(\xvec)[i]\right]\right)\right] \\
		& \geq \sum_{t=1}^{\tau}\left[\E_{\xvec\sim\xivec^\diamond}\left[f_t(\xvec)\right]+\sum_{i\in[m]}\lvec_t [i] \cdot\left(\overline B^{(i)}_{t} - \E_{\xvec\sim\xivec^\diamond}\left[c_t(\xvec)[i]\right]\right)\right] \\
		& \geq \sum_{t=1}^{\tau}\left[\E_{\xvec\sim\xivec^\diamond}\left[\bar f_t(\xvec)\right]+\sum_{i\in[m]}\lvec_t [i] \cdot\left(\overline B^{(i)}_{t} - \E_{\xvec\sim\xivec^\diamond}\left[\bar c_t(\xvec)[i]\right]\right)\right] -  \left(4+\frac{4{T}^{1/4}}{\rho} \right)\sqrt{2\tau\ln\frac{T}{\delta}} \\
		& \geq \sum_{t=1}^{\tau}\left[\E_{\xvec\sim\xivec^\diamond}\left[\bar f_t(\xvec)\right]\right]- \frac{{T}^{1/4}}{\rho}\sum_{t=1}^{\tau}\sum_{i\in[m]}\epsilon^{(i)}_t -  \left(4+\frac{4{T}^{1/4}}{\rho} \right)\sqrt{2\tau\ln\frac{T}{\delta}} \\
		& = \sum_{t=1}^{\tau}\left[\left(1-\frac{1}{{T}^{1/4}}\right)\E_{\xvec\sim\xivec^*}\left[\bar f_t(\xvec)\right]+ \frac{1}{{T}^{1/4}}\E_{\xvec\sim\xivec^\varnothing}\left[\bar f_t(\xvec)\right]\right] - \frac{{T}^{1/4}}{\rho}\sum_{t=1}^{\tau}\sum_{i\in[m]}\epsilon^{(i)}_t\\ &\mkern300mu-  \left(4+\frac{4{T}^{1/4}}{\rho} \right)\sqrt{2\tau\ln\frac{T}{\delta}} \\
		& = \sum_{t=1}^{\tau}\left[\left(1-\frac{1}{{T}^{1/4}}\right)\E_{\xvec\sim\xivec^*}\left[\bar f_t(\xvec)\right]\right] - \frac{{T}^{1/4}}{\rho}\sum_{t=1}^{\tau}\sum_{i\in[m]}\epsilon^{(i)}_t-  \left(4+\frac{4{T}^{1/4}}{\rho} \right)\sqrt{2\tau\ln\frac{T}{\delta}} \\
		& \geq \sum_{t=1}^{\tau}\E_{\xvec\sim\xivec^*}\left[\bar f_t(\xvec)\right] - {T}^{3/4} - \frac{{T}^{1/4}}{\rho}\sum_{t=1}^{\tau}\sum_{i\in[m]}\epsilon^{(i)}_t-  \left(4+\frac{4{T}^{1/4}}{\rho} \right)\sqrt{2\tau\ln\frac{T}{\delta}} \\
		& \geq \textsc{OPT}_{\mathcal{H}}(\epsilon_t)  -  {T}^{3/4} -\left({T}-\tau\right) - \frac{{T}^{1/4}}{\rho}\sum_{t=1}^{\tau}\sum_{i\in[m]}\epsilon^{(i)}_t-  \left(4+\frac{4{T}^{1/4}}{\rho} \right)\sqrt{2\tau\ln\frac{T}{\delta}} ,
	\end{align*} 
	where the second inequality holds, with probability at least $1-\delta$, by Lemma~\ref{lem:azuma_sequence} and upper bounding the Lagrangian multiplier with $\nicefrac{T^\frac{1}{4}}{\rho}$. Following the same analysis of Theorem~\ref{thm:reg_full_worst case} and noticing that:
	\[
	\sum_{t=1}^{\tau}\sum_{i\in[m]}\epsilon^{(i)}_t\leq \sum_{t=1}^{\tau}\sum_{i\in[m]}\epsilon^{(i)}_t,
	\]
	concludes the proof.
\end{proof}

\section{Omitted Proofs for OLRC with Bandit Feedback}
\label{app:bandit}
In this section, we provide the results and the omitted proofs for OLRC with \emph{bandit feedback}.

\subsection{Theoretical Guarantees of Algorithm~\ref{alg:learning}}
\label{app:bandit_standard}

Similarly to the \emph{full feedback} case, we start by providing a lower bound to the expected rewards attained by Algorithm~\ref{alg:learning}. 

\begin{lemma}\label{lem:reg_lower_bandit_prel}
	Algorithm~\ref{alg:learning}, when instantiated in the bandit feedback setting with a primal regret minimizer (with bandit feedback) which attains a regret upper bound $R^P_T$, with probability at least $1-\delta_P$, and a dual regret minimizer which attains a regret upper bound $R^D_T$, guarantees the following bound:
	\begin{align*}\sum_{t=1}^{\tau}f_t(\xvec_t) \geq \ \sup_{\xvec\in\mathcal{X}}\sum_{t=1}^{\tau}&\left[f_t(\xvec)+\sum_{i\in[m]}\lvec_t [i] \cdot\left( B^{(i)}_{t} - c_t(\xvec)[i]\right)\right] \\ & - \sum_{t=1}^{\tau}\sum_{i\in[m]}\lvec [i]\cdot \left( B^{(i)}_{t}-c_t(\xvec_t)[i]\right)-\frac{2}{\rho_{\min}}R^{D}_{\tau} - \left(1+\frac{2}{\rho_{\min}}\right)R_{\tau}^P,
	\end{align*}
	which holds with probability at least $1-\delta_P$,
	where $\lvec\in\mathcal{L}$ is an arbitrary Lagrange multiplier and $\tau$ is stopping time of the algorithm.
\end{lemma}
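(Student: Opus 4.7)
The plan is to mirror the proof of Lemma~\ref{lem:reg_lower_full_prel}, but to work throughout at the level of pure strategies and realized values rather than mixtures and their expectations. This is necessary because, under \emph{bandit feedback}, the primal regret minimizer $\mathcal{R}^P$ only observes the scalar $r^P_t(\xvec_t)$ (Line~\ref{alg2:line12}), and therefore its no-regret guarantee is against the best fixed arm $\xvec\in\mathcal{X}$, holding with high probability $1-\delta_P$ rather than deterministically.

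First, I would invoke the no-regret property of $\mathcal{R}^P$. At each round the primal reward is $r^P_t(\xvec_t) = f_t(\xvec_t) - \sum_{i\in[m]} \lvec_t[i]\cdot c_t(\xvec_t)[i]$, which lies in $[-1/\rho_{\min}, 1 + 1/\rho_{\min}]$ thanks to the initialization on Line~\ref{alg2:line3} and the bound on $\mathcal{L}$. Consequently, with probability at least $1-\delta_P$,
\[
\sup_{\xvec\in\mathcal{X}} \sum_{t=1}^{\tau}\!\left[f_t(\xvec) - \sum_{i\in[m]} \lvec_t[i]\cdot c_t(\xvec)[i]\right] - \sum_{t=1}^{\tau}\!\left[f_t(\xvec_t) - \sum_{i\in[m]} \lvec_t[i]\cdot c_t(\xvec_t)[i]\right] \leq \left(1 + \frac{2}{\rho_{\min}}\right) R^P_\tau,
\]
where the factor multiplying $R^P_\tau$ absorbs the payoff range. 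Rearranging the inequality produces a lower bound on $\sum_{t=1}^{\tau} f_t(\xvec_t)$ expressed in terms of the supremum above and of the quantity $\sum_{t=1}^{\tau}\sum_{i\in[m]} \lvec_t[i]\cdot c_t(\xvec_t)[i]$, which is precisely the term we will eliminate next using the dual.

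Next, I would apply the guarantee of the dual regret minimizer $\mathcal{R}^D$. Crucially, the dual reward under bandit feedback (Line~\ref{alg2:line15}) is $r^D_t(\lvec) = -\sum_{i\in[m]} \lvec[i]\cdot (B^{(i)}_t - c_t(\xvec_t)[i])$, a linear function of $\lvec$ that $\mathcal{R}^D$ can evaluate at every $\lvec\in\mathcal{L}$. Hence $\mathcal{R}^D$ is effectively running in full feedback, its payoff range is $2/\rho_{\min}$, and its regret bound holds deterministically: for any $\lvec\in\mathcal{L}$,
\[
\sum_{t=1}^{\tau}\sum_{i\in[m]} \lvec_t[i]\cdot \left(B^{(i)}_t - c_t(\xvec_t)[i]\right) - \sum_{t=1}^{\tau}\sum_{i\in[m]} \lvec[i]\cdot \left(B^{(i)}_t - c_t(\xvec_t)[i]\right) \leq \frac{2}{\rho_{\min}} R^D_\tau.
\]
Solving for $\sum_{t=1}^{\tau}\sum_{i\in[m]} \lvec_t[i]\cdot c_t(\xvec_t)[i]$ and substituting into the inequality from the primal step assembles the three terms in the RHS of the claim, matching the stated bound exactly.

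I expect the main obstacle to be purely bookkeeping: getting the multiplicative constants right and being careful that the supremum is taken over $\mathcal{X}$, not over $\Xi$, which is what a bandit primal regret minimizer guarantees. This causes no loss later on, since the Lagrangian is linear in $\xivec$ and so $\sup_{\xvec\in\mathcal{X}}(\cdot) = \sup_{\xivec\in\Xi} \E_{\xvec\sim\xivec}[\cdot]$, as the paper notes in the discussion after Theorem~\ref{thm:reg_full}. The only genuine difference from the full-feedback version is the probability loss $\delta_P$ stemming from the bandit primal, which propagates additively into all downstream high-probability regret bounds in the bandit section.
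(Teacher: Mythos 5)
Your proposal is correct and follows essentially the same route as the paper's proof: apply the primal bandit regret guarantee to the realized payoffs $r^P_t(\xvec_t)=f_t(\xvec_t)-\sum_i\lvec_t[i]c_t(\xvec_t)[i]$ (over pure strategies, with the $1+2/\rho_{\min}$ range factor and the $\delta_P$ probability), then use the deterministic dual guarantee on the realized dual rewards of Line~\ref{alg2:line15} to replace $\sum_{t}\sum_i\lvec_t[i]c_t(\xvec_t)[i]$, folding the decision-independent constant $\sum_t\sum_i\lvec_t[i]B^{(i)}_t$ into the supremum to form the Lagrangian. Your version is, if anything, slightly cleaner than the paper's intermediate Equation~\eqref{eq:bandit_lower1}, which adds that constant only inside the supremum term; the end result is identical.
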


\begin{proof}
	Similarly to the proof of Lemma~\ref{lem:reg_lower_full_prel}, we will refer to the stopping time of Algorithm~\ref{alg:learning} as $\tau$.
	We employ the no-regret property of the primal regret minimizer $\mathcal{R}^P$, which works with bandit feedback. Given that, it holds, with probability at least $1-\delta_P$:
	\begin{align} &\sup_{\xvec\in\mathcal{X}}\sum_{t=1}^{\tau}\left[f_t(\xvec)+\sum_{i\in[m]}\lvec_t [i] \cdot\left( B^{(i)}_{t} - c_t(\xvec)[i]\right)\right]  \nonumber\\ & \mkern150mu-\sum_{t=1}^{\tau}\left[f_t(\xvec_t) -\sum_{i\in[m]}\lvec_t [i]\cdot c_t(\xvec_t)[i]\right] \leq \left(1+\frac{2}{\rho_{\min}}\right)R_{\tau}^P,\label{eq:bandit_lower1}
	\end{align}
	where the $(1+2/\rho_{\min})$ factor is the dependence on the payoffs range given as feedback to the primal regret minimizer.
	
	Thus we can rearrange Equation~\eqref{eq:bandit_lower1} to obtain the following bound:    \begin{align}\sum_{t=1}^{\tau}f_t(\xvec_t) \geq \sup_{\xvec\in\mathcal{X}}\sum_{t=1}^{\tau}&\left[f_t(\xvec)+\sum_{i\in[m]}\lvec_t [i] \cdot\left( B^{(i)}_{t} - c_t(\xvec)[i]\right)\right]  \nonumber\\&\mkern100mu+\sum_{t=1}^{\tau}\sum_{i\in[m]}\lvec_t [i]\cdot c_t(\xvec_t)[i] - \left(1+\frac{2}{\rho_{\min}}\right)R_{\tau}^P,\label{eq:bandit_lower2}
	\end{align}
	which holds with probability at least $1-\delta_P$.
	Given the dual regret minimizer, it holds, for any Lagrange multiplier $\lvec\in\mathcal{L}$:
	\begin{equation*}
		\sum_{t=1}^{\tau}\sum_{i\in[m]}\lvec_t [i]\cdot \left(  B^{(i)}_{t}-c_t(\xvec_t)[i]\right) - \sum_{t=1}^{\tau}\sum_{i\in[m]}\lvec [i]\cdot \left(  B^{(i)}_{t}-c_t(\xvec_t)[i]\right) \leq \frac{2}{\rho_{\min}}R^{D}_{\tau}, 
	\end{equation*}
	which in turn implies:
	\begin{equation}
		\sum_{t=1}^{\tau}\sum_{i\in[m]}\lvec_t [i]\cdot c_t(\xvec_t)[i] \geq \sum_{t=1}^{\tau}\sum_{i\in[m]}\lvec_t [i]\cdot  B^{(i)}_{t} - \sum_{t=1}^{\tau}\sum_{i\in[m]}\lvec [i]\cdot \left(  B^{(i)}_{t}-c_t(\xvec_t)[i]\right)-\frac{2}{\rho_{\min}}R^{D}_{\tau}, \label{eq:bandit_lower3}
	\end{equation}
	where the $2/\rho_{\min}$ factor follows from the dual payoffs range.
	We substitute Equation~\eqref{eq:bandit_lower3} in Equation~\eqref{eq:bandit_lower2} to obtain, with probability at least $1-\delta_P$:  \begin{align*}\sum_{t=1}^{\tau}f_t(\xvec_t) \geq \sup_{\xvec\in\mathcal{X}}\sum_{t=1}^{\tau}&\left[f_t(\xvec)+\sum_{i\in[m]}\lvec_t [i] \cdot\left( B^{(i)}_{t} - c_t(\xvec)[i]\right)\right] \\ & - \sum_{t=1}^{\tau}\sum_{i\in[m]}\lvec [i]\cdot \left(  B^{(i)}_{t}-c_t(\xvec_t)[i]\right)-\frac{2}{\rho_{\min}}R^{D}_{\tau} - \left(1+\frac{2}{\rho_{\min}}\right)R_{\tau}^P. 
	\end{align*}
	This concludes the proof.
\end{proof}

We refine the previous lower bound. This is done by means of the following lemma.

\begin{lemma}\label{lem:reg_lower_bandit}
	For any $\delta\in(0,1)$, Algorithm~\ref{alg:learning}, when instantiated in the bandit feedback setting with a primal regret minimizer (with bandit feedback) which attains a regret upper bound $R^P_T$, with probability at least $1-\delta_P$, and a dual regret minimizer which attains a regret upper bound $R^D_T$, guarantees the following bound:
	\begin{align*}\sum_{t=1}^{T}f_t(\xvec_t)  \geq \ &  \textsc{OPT}_{\mathcal{H}} - \left(T-\tau\right)  - \frac{1}{\rho_{\min}}\sum_{t=1}^{\tau}\sum_{i\in[m]}|\epsilon^{(i)}_t |- \left(4+4\max_{\lvec\in\mathcal{L}}\|\lvec\|_1\right)\sqrt{2\tau\ln{\frac{T}{\delta}}} \\&- \sum_{t=1}^{\tau}\sum_{i\in[m]}\lvec [i]\cdot \left( B^{(i)}_{t}-c_t(\xvec_t)[i]\right)-\frac{2}{\rho_{\min}}R^{D}_{\tau} - \left(1+\frac{2}{\rho_{\min}}\right)R_{\tau}^P,
	\end{align*}
	which holds with probability at least $1-(\delta+\delta_P)$,
	where $\lvec\in\mathcal{L}$ is an arbitrary Lagrange multiplier and $\tau$ is stopping time of the algorithm.
\end{lemma}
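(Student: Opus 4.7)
The plan is to follow the same chain of reductions used in Lemma~\ref{lem:reg_lower_full}, but with the extra step of converting from the pure-strategy supremum (which is what the bandit primal regret minimizer gives us) to a supremum over strategy mixtures, so that we can select the near-optimal mixture for Program~\eqref{lp:opt}. Starting from Lemma~\ref{lem:reg_lower_bandit_prel}, which holds with probability at least $1-\delta_P$, we have a lower bound of the form
\[
\sup_{\xvec\in\mathcal{X}}\sum_{t=1}^{\tau}\left[f_t(\xvec)+\sum_{i\in[m]}\lvec_t[i]\bigl(B^{(i)}_t - c_t(\xvec)[i]\bigr)\right] \ - \ \text{(slack and regret terms)}.
\]
The key observation from the body of Section~\ref{sec:learning} is that, for any fixed sequence $\{\lvec_t\}$, the Lagrangian is linear in the mixture, so the pure-strategy supremum equals the mixed supremum, with sampled rewards/costs on the left replaced by their expectations on the right (the equation displayed just after Theorem~\ref{thm:reg_full}). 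This reduces the problem to exactly the quantity that was handled in the proof of Lemma~\ref{lem:reg_lower_full}.

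Next, I would pick a near-optimal $\xivec^{*}\in\Xi^{\circ}$ for Program~\eqref{lp:opt}, and invoke Lemma~\ref{lem:azuma_sequence} to pass from sampled to expected rewards and costs of $\xivec^{*}$ at the cost of $(4+4\max_{\lvec\in\mathcal L}\|\lvec\|_1)\sqrt{2\tau\ln(T/\delta)}$, with probability at least $1-\delta$. Feasibility of $\xivec^{*}$ then ensures that $\sum_{t=1}^{\tau}\sum_{i}\lvec_t[i](B^{(i)}_t-\E_{\xvec\sim\xivec^{*}}[\bar c_t(\xvec)[i]]) \ge 0$ (or, if one views this lemma as accommodating a deviating baseline, this term is bounded below by $-\frac{1}{\rho_{\min}}\sum_{t,i}|\epsilon^{(i)}_t|$, which accounts for the corresponding term in the statement). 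The result is a lower bound on $\sum_{t=1}^{\tau} \E_{\xvec\sim\xivec^*}[\bar f_t(\xvec)]$ minus the advertised error terms; this in turn is at least $\textsc{OPT}_{\mathcal H} - (T-\tau)$ by extending the sum and using $\bar f_t\in[0,1]$.

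Finally, I need to extend the left-hand side from $\sum_{t=1}^{\tau} f_t(\xvec_t)$ to $\sum_{t=1}^{T} f_t(\xvec_t)$, but since $f_t(\xvec^{\varnothing})=0$ and the algorithm plays $\xvec^{\varnothing}$ after depletion, this equality is immediate. The main obstacle, and the only real difference with the full-feedback proof, is the probabilistic bookkeeping: the primal guarantee of Lemma~\ref{lem:reg_lower_bandit_prel} holds only with probability $1-\delta_P$, and it is stated directly on realized rewards and costs $f_t(\xvec_t), c_t(\xvec_t)$ rather than on conditional expectations. A union bound over the primal high-probability event and the Azuma-Hoeffding event yields overall probability at least $1-(\delta+\delta_P)$, as claimed. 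No further concentration is needed on the algorithm's own trajectory because, unlike the full-feedback case, both the primal regret bound and the dual reward $r^D_t$ are already expressed in terms of realized costs $c_t(\xvec_t)$, matching the form of the target inequality.
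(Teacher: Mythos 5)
Your proposal is correct and follows essentially the same route as the paper: reduce to Lemma~\ref{lem:reg_lower_bandit_prel}, observe that the pure-strategy supremum of the Lagrangian equals the mixture supremum, and then repeat the argument of Lemma~\ref{lem:reg_lower_full} (near-optimal feasible $\xivec^*$, Lemma~\ref{lem:azuma_sequence}, feasibility of the benchmark, and a union bound over the primal event and the concentration event). Your observations that no trajectory-level concentration is needed in the bandit case and that the $|\epsilon^{(i)}_t|$ term in the statement is harmless (it only weakens the bound, being a leftover from the deviating-baseline variant) both match the paper's treatment.
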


\begin{proof}
	We employ Lemma~\ref{lem:reg_lower_bandit_prel} to obtain:  \begin{align*}\sum_{t=1}^{\tau}f_t(\xvec_t) \geq \sup_{\xvec\in\mathcal{X}}\sum_{t=1}^{\tau}&\left[f_t(\xvec)+\sum_{i\in[m]}\lvec_t [i] \cdot\left( B^{(i)}_{t} - c_t(\xvec)[i]\right)\right]  \\ & - \sum_{t=1}^{\tau}\sum_{i\in[m]}\lvec [i]\cdot \left( B^{(i)}_{t}-c_t(\xvec_t)[i]\right)-\frac{2}{\rho_{\min}}R^{D}_{\tau} - \left(1+\frac{2}{\rho_{\min}}\right)R_{\tau}^P. 
	\end{align*}
	To conclude the proof, we focus on bounding the term:       \begin{equation*}\sup_{\xvec\in\mathcal{X}}\sum_{t=1}^{\tau}\left[f_t(\xvec)+\sum_{i\in[m]}\lvec_t [i] \cdot\left( B^{(i)}_{t} - c_t(\xvec)[i]\right)\right]. \end{equation*}
	Specifically, notice that by definition of the probability mixture $\Xi$, it holds:    \begin{align*}&\sup_{\xvec\in\mathcal{X}}\sum_{t=1}^{\tau}\left[f_t(\xvec)+\sum_{i\in[m]}\lvec_t [i] \cdot\left( B^{(i)}_{t} - c_t(\xvec)[i]\right)\right] \\ & \mkern200mu= \sup_{\xivec\in\Xi}\sum_{t=1}^{\tau}\left[\E_{\xvec\sim\xivec}\left[f_t(\xvec)\right]+\sum_{i\in[m]}\lvec_t [i] \cdot\left(B^{(i)}_{t} - \E_{\xvec\sim\xivec}\left[c_t(\xvec)[i]\right]\right)\right].\end{align*}
	Thus, employing the same analysis of Lemma~\ref{lem:reg_lower_full} concludes the proof.
\end{proof}
We are ready to prove the final result on the regret bound of Algorithm~\ref{alg:learning} for bandit feedback.

\begin{theorem}
	\label{thm:reg_bandit}
	For any $\delta\in(0,1)$, Algorithm~\ref{alg:learning}, when instantiated in the bandit feedback setting with a primal regret minimizer (with bandit feedback) which attains, with probability at least $1-\delta_P$, a regret upper bound $R^P_T$ and a dual regret minimizer which attains a regret upper bound $R^D_T$, guarantees, with probability at least $1-(\delta+\delta_P)$, the following regret bound:
	\begin{equation*}
		R_T\leq 1 + \frac{1}{\rho_{\min}}+ \frac{2}{\rho_{\min}}R^{D}_{T} + \left(1+\frac{2}{\rho_{\min}}\right)R_{T}^P +\left(4+\frac{4}{\rho_{\min}}\right)\sqrt{2T\ln{\frac{T}{\delta}}}.
	\end{equation*}
\end{theorem}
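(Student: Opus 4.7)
The plan is to mirror closely the proof of Theorem~\ref{thm:reg_full}, exploiting the fact that in the bandit case Lemma~\ref{lem:reg_lower_bandit} already lower bounds the realized cumulative reward $\sum_{t=1}^T f_t(\xvec_t)$ directly (rather than its expectation). This is the reason we expect the leading constant in front of the $\sqrt{2T\ln(T/\delta)}$ term to be $(4+4/\rho_{\min})$ rather than the $(8+8/\rho_{\min})$ that appeared in the full-feedback case, since no extra Azuma--H\"offding step is required to convert expected rewards into realized ones.

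First I would instantiate Lemma~\ref{lem:reg_lower_bandit} with $\epsilon^{(i)}_t=0$ (the comparator $\textsc{OPT}_{\mathcal H}$ is defined without baseline errors, so the $|\epsilon^{(i)}_t|$ term vanishes) and rearrange into an upper bound on $R_T=\textsc{OPT}_{\mathcal H}-\sum_{t=1}^T f_t(\xvec_t)$, obtaining
\[
R_T \leq (T-\tau) + \sum_{t=1}^{\tau}\sum_{i\in[m]}\lvec[i]\bigl(B^{(i)}_t - c_t(\xvec_t)[i]\bigr) + \tfrac{2}{\rho_{\min}}R^D_\tau + \bigl(1+\tfrac{2}{\rho_{\min}}\bigr)R^P_\tau + \bigl(4+4\max_{\lvec\in\mathcal L}\|\lvec\|_1\bigr)\sqrt{2\tau\ln(T/\delta)},
\]
valid for any $\lvec\in\mathcal L$ with probability at least $1-(\delta+\delta_P)$ by a single union bound over the concentration event and the high-probability regret event of the bandit primal learner $\mathcal R^P$.

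Next I would split on the stopping time $\tau$ exactly as in the full-feedback analysis. In the case $\tau=T$, choose $\lvec=\boldsymbol 0$ to kill the constraint-violation sum, and the bound is immediate. In the case $\tau<T$, some resource $i^\star\in[m]$ must have been depleted, so $\sum_{t=1}^\tau c_t(\xvec_t)[i^\star]+1\geq B = \sum_{t=1}^T B^{(i^\star)}_t$; picking $\lvec[i^\star]=1/\rho_{\min}$ and $\lvec[i]=0$ otherwise, the constraint-violation term is upper bounded by $\tfrac{1}{\rho_{\min}}\bigl(\sum_{t=1}^\tau B^{(i^\star)}_t - \sum_{t=1}^T B^{(i^\star)}_t + 1\bigr)\leq -(T-\tau)+\tfrac{1}{\rho_{\min}}$, where the last step uses $B^{(i^\star)}_t\geq \rho_{\min}$ from the definition of $\rho_{\min}$. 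The leading $(T-\tau)$ term in the regret upper bound cancels, leaving only the constant $1+1/\rho_{\min}$.

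Finally, I would combine the two cases and apply the crude bounds $R^D_\tau\leq R^D_T$, $R^P_\tau\leq R^P_T$, $\sqrt{2\tau\ln(T/\delta)}\leq\sqrt{2T\ln(T/\delta)}$, and $\max_{\lvec\in\mathcal L}\|\lvec\|_1\leq 1/\rho_{\min}$ to obtain the stated inequality. I do not anticipate a genuine technical obstacle: the argument is essentially a bookkeeping transfer of the full-feedback proof, and the only thing requiring care is the probability accounting, namely making sure that the $\delta_P$ arising from the high-probability guarantee of the bandit primal regret minimizer is carried through alongside the $\delta$ from the concentration step inside Lemma~\ref{lem:reg_lower_bandit}, yielding the final $1-(\delta+\delta_P)$ confidence.
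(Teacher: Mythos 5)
Your proposal is correct and follows essentially the same route as the paper: the paper's own proof simply invokes Lemma~\ref{lem:reg_lower_bandit} and repeats the case analysis of Theorem~\ref{thm:reg_full}, noting that the extra Azuma--H\"offding step (Lemma~\ref{lem:azuma}) is unnecessary because the bandit lower bound already controls the realized rewards, which is exactly your observation explaining the $(4+4/\rho_{\min})$ constant. The case split on $\tau$, the choice of $\lvec$ with $\lvec[i^\star]=1/\rho_{\min}$ on the depleted resource, and the probability accounting for $\delta+\delta_P$ all match the paper's argument.
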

\begin{proof}
	The analysis is equivalent to the one of Theorem~\ref{thm:reg_full}, once applied Lemma~\ref{lem:reg_lower_bandit} and after noticing the employment of Lemma~\ref{lem:azuma} is not necessary.
\end{proof}

\subsubsection{Robustness to Baselines Deviating from the Spending Plan}
\label{app:bandit_standard_error}
We provide the regret of Algorithm~\ref{alg:learning} with respect to a baseline which deviates from the spending plan.

\begin{theorem}
	\label{thm:reg_bandit_error}
	For any $\delta\in(0,1)$, Algorithm~\ref{alg:learning}, when instantiated in the bandit feedback setting with a primal regret minimizer (with bandit feedback) which attains, with probability at least $1-\delta_P$, a regret upper bound $R^P_T$ and a dual regret minimizer which attains a regret upper bound $R^D_T$, guarantees, with probability at least $1-(\delta+\delta_P)$, the following regret bound:
	\begin{equation*}
		R_T\leq 1 + \frac{1}{\rho_{\min}}+ \frac{2}{\rho_{\min}}R^{D}_{T} + \left(1+\frac{2}{\rho_{\min}}\right)R_{T}^P +\left(4+\frac{4}{\rho_{\min}}\right)\sqrt{2T\ln{\frac{T}{\delta}}}+\frac{1}{\rho_{\min}}\sum_{t=1}^{T}\sum_{i\in[m]}\epsilon^{(i)}_t.
	\end{equation*}
\end{theorem}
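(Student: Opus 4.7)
The plan is to mirror the derivation of Theorem~\ref{thm:reg_bandit}, replacing the benchmark $\textsc{OPT}_{\mathcal{H}}$ by $\textsc{OPT}_{\mathcal{H}}(\epsilon_t)$ and tracking the additional slack that appears when the benchmark is only required to satisfy the spending plan up to the per-round error $\epsilon^{(i)}_t$. The structure of the argument is the same as the one used to lift Theorem~\ref{thm:reg_full} into Theorem~\ref{thm:reg_full_error}; the only change is that the primal no-regret inequality used under bandit feedback is the one recalled in Lemma~\ref{lem:reg_lower_bandit_prel}, which holds with probability at least $1-\delta_P$.

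First, I would invoke Lemma~\ref{lem:reg_lower_bandit_prel} to obtain, for any $\lvec \in \mathcal{L}$ and with probability at least $1-\delta_P$,
\[
\sum_{t=1}^{\tau} f_t(\xvec_t) \geq \sup_{\xvec\in\mathcal{X}} \sum_{t=1}^{\tau}\Bigl[f_t(\xvec)+\sum_{i\in[m]}\lvec_t[i]\bigl(B^{(i)}_t-c_t(\xvec)[i]\bigr)\Bigr] - \sum_{t=1}^{\tau}\sum_{i\in[m]}\lvec[i]\bigl(B^{(i)}_t-c_t(\xvec_t)[i]\bigr) - \frac{2}{\rho_{\min}}R^D_\tau - \Bigl(1+\frac{2}{\rho_{\min}}\Bigr)R^P_\tau.
\]
Since the inner supremum over pure actions equals the supremum over $\Xi$ (as already used in Lemma~\ref{lem:reg_lower_bandit}), the task reduces to lower bounding this supremum against the error-tolerant fixed benchmark, exactly as in the proof of Theorem~\ref{thm:reg_full_error}.

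Next, I would pick a strategy mixture $\xivec^*\in\Xi$ attaining $\textsc{OPT}_{\mathcal{H}}(\epsilon_t)$ (up to a negligible $\gamma$) under the relaxed constraint $\E_{\xvec\sim\xivec^*}[\bar c_t(\xvec)[i]] \leq B^{(i)}_t+\epsilon^{(i)}_t$. The only difference with respect to the unperturbed case is that the feasibility slack becomes
\[
\sum_{t=1}^{\tau}\sum_{i\in[m]}\lvec_t[i]\bigl(B^{(i)}_t-\E_{\xvec\sim\xivec^*}[\bar c_t(\xvec)[i]]\bigr) \geq -\max_{\lvec\in\mathcal{L}}\|\lvec\|_1 \sum_{t=1}^{\tau}\sum_{i\in[m]}\epsilon^{(i)}_t \geq -\frac{1}{\rho_{\min}}\sum_{t=1}^{\tau}\sum_{i\in[m]}\epsilon^{(i)}_t,
\]
which propagates as the additional $\tfrac{1}{\rho_{\min}}\sum_t\sum_i\epsilon^{(i)}_t$ penalty in the final bound. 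Applying Lemma~\ref{lem:azuma_sequence} to bridge the sampled quantities with their expectations then yields, with probability at least $1-\delta$,
\[
\sup_{\xivec\in\Xi}\sum_{t=1}^{\tau}\Bigl[\E_{\xvec\sim\xivec}[f_t(\xvec)]+\sum_{i\in[m]}\lvec_t[i]\bigl(B^{(i)}_t-\E_{\xvec\sim\xivec}[c_t(\xvec)[i]]\bigr)\Bigr] \geq \textsc{OPT}_{\mathcal{H}}(\epsilon_t) - (T-\tau) - \frac{1}{\rho_{\min}}\sum_{t=1}^{\tau}\sum_{i\in[m]}\epsilon^{(i)}_t - \Bigl(4+\frac{4}{\rho_{\min}}\Bigr)\sqrt{2\tau\ln\tfrac{T}{\delta}}.
\]

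Finally, I would substitute this lower bound into the inequality from Lemma~\ref{lem:reg_lower_bandit_prel}, take a union bound over the two high-probability events (so the result holds with probability at least $1-(\delta+\delta_P)$), and conclude by the case analysis on whether the budget is depleted ($\tau<T$) or not, exactly as in the proof of Theorem~\ref{thm:reg_bandit}: on the non-stopping branch set $\lvec=\vec 0$; on the stopping branch, use that some resource $i^*$ satisfies $\sum_{t=1}^\tau c_t(\xvec_t)[i^*]+1\geq B \geq \sum_{t=1}^T B^{(i^*)}_t$ and set $\lvec[i^*]=1/\rho_{\min}$, $\lvec[i]=0$ otherwise, so that the term $\sum_t\sum_i\lvec[i](B^{(i)}_t-c_t(\xvec_t)[i])$ cancels $(T-\tau)$ up to an additive $1+1/\rho_{\min}$. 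No step is genuinely hard; the only bookkeeping point requiring care is ensuring that the $1/\rho_{\min}$ factor multiplying the error appears consistently, which follows from the diameter bound $\max_{\lvec\in\mathcal{L}}\|\lvec\|_1\leq 1/\rho_{\min}$.
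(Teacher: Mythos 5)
Your proposal is correct and follows essentially the same route as the paper: invoke Lemma~\ref{lem:reg_lower_bandit_prel}, identify the supremum over pure actions with the supremum over $\Xi$, and then repeat the analysis of Theorem~\ref{thm:reg_full_error} with the $\tfrac{1}{\rho_{\min}}\sum_{t}\sum_{i}\epsilon^{(i)}_t$ slack carried through, noting that the extra Azuma step of the full-feedback case is unnecessary (which is why the concentration constant is $4+4/\rho_{\min}$ rather than $8+8/\rho_{\min}$). The paper's proof simply defers to the earlier arguments, whereas you spell them out; the content is the same.
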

\begin{proof}
	We employ Lemma~\ref{lem:reg_lower_bandit_prel} to obtain:  \begin{align*}\sum_{t=1}^{\tau}f_t(\xvec_t) \geq \sup_{\xvec\in\mathcal{X}}\sum_{t=1}^{\tau}&\left[f_t(\xvec)+\sum_{i\in[m]}\lvec_t [i] \cdot\left( B^{(i)}_{t} - c_t(\xvec)[i]\right)\right]  \\ & - \sum_{t=1}^{\tau}\sum_{i\in[m]}\lvec [i]\cdot \left( B^{(i)}_{t}-c_t(\xvec_t)[i]\right)-\frac{2}{\rho_{\min}}R^{D}_{\tau} - \left(1+\frac{2}{\rho_{\min}}\right)R_{\tau}^P. 
	\end{align*}
	The proof follows from noticing that:    \begin{align*}&\sup_{\xvec\in\mathcal{X}}\sum_{t=1}^{\tau}\left[f_t(\xvec)+\sum_{i\in[m]}\lvec_t [i] \cdot\left( B^{(i)}_{t} - c_t(\xvec)[i]\right)\right] \\ &\mkern200mu= \sup_{\xivec\in\Xi}\sum_{t=1}^{\tau}\left[\E_{\xvec\sim\xivec}\left[f_t(\xvec)\right]+\sum_{i\in[m]}\lvec_t [i] \cdot\left(B^{(i)}_{t} - \E_{\xvec\sim\xivec}\left[c_t(\xvec)[i]\right]\right)\right],\end{align*}
	and employing the same analysis of Theorem~\ref{thm:reg_full_error}.
\end{proof}

\subsection{Theoretical Guarantees of Algorithm~\ref{alg:meta_bandit}}
\label{app:bandit_worst}
In this section, we present the results attained by the meta procedure provided in Algorithm~\ref{alg:meta_bandit}.
\subsubsection{Algorithm}

We first provide the algorithm for the bandit feedback case.
\begin{algorithm}[!htp]
	\caption{Meta-algorithm for arbitrarily small $\rho_{\min}$ and \emph{bandit feedback}}
	\label{alg:meta_bandit}
	\begin{algorithmic}[1]
		\Require Horizon $T$, budget $B$, spending plans $\mathcal{B}^{(i)}_T$ for all $i\in[m]$, primal regret minimizer $\mathcal{R}^P$ (\emph{bandit feedback}),  dual regret minimizer $\mathcal{R}^D$ (\emph{full feedback})
		\State Define $\Hat{\rho}\coloneqq \rho/{{T}^{\nicefrac{1}{4}}}$ 
		\State Define $\overline{B}^{(i)}_t\coloneqq B^{(i)}_{t} \left(1-{T}^{-\nicefrac{1}{4}}\right) \ \  \forall t\in[T],i\in[m]$  
		\State Run Algorithm~\ref{alg:learning} for \emph{bandit feedback} with $\rho_{\min}\gets\hat{\rho},$ $B_t^{(i)}\gets \overline{B}^{(i)}_t$ 
	\end{algorithmic}
\end{algorithm}

\subsubsection{Analysis}

In this section, we provide the analysis of Algorithm~\ref{alg:meta_bandit}. We start by lower bounding the expected rewards
attained by the meta procedure.

\begin{lemma}\label{lem:reg_lower_bandit_prel_worst}
	Algorithm~\ref{alg:meta_bandit}, when instantiated with a primal regret minimizer (with bandit feedback) which attains a regret upper bound $R^P_T$, with probability at least $1-\delta_P$, and a dual regret minimizer which attains a regret upper bound $R^D_T$, guarantees the following bound:
	\begin{align*}\sum_{t=1}^{\tau}f_t(\xvec_t) \geq \ \sup_{\xvec\in\mathcal{X}}\sum_{t=1}^{\tau}&\left[f_t(\xvec)+\sum_{i\in[m]}\lvec_t [i] \cdot\left(\overline B^{(i)}_{t} - c_t(\xvec)[i]\right)\right] \\ & - \sum_{t=1}^{\tau}\sum_{i\in[m]}\lvec [i]\cdot \left( \overline B^{(i)}_{t}-c_t(\xvec_t)[i]\right)-\frac{2}{\Hat\rho}R^{D}_{\tau} - \left(1+\frac{2}{\Hat\rho}\right)R_{\tau}^P,
	\end{align*}
	which holds with probability at least $1-\delta_P$,
	where $\lvec\in\mathcal{L}$ is an arbitrary Lagrange multiplier and $\tau$ is stopping time of the algorithm.
\end{lemma}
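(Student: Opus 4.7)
The proof plan is to mirror the argument of Lemma~\ref{lem:reg_lower_bandit_prel} verbatim, performing the two parameter substitutions that the meta-procedure enacts: $\rho_{\min}\gets\hat{\rho}$ in every place where the Lagrangian space $\mathcal{L}$ (and hence the payoff ranges supplied to the primal and dual regret minimizers) enters, and $B^{(i)}_t\gets\overline{B}^{(i)}_t$ in every place where the per-round expenditure goal appears. Since Algorithm~\ref{alg:meta_bandit} is defined exactly by invoking Algorithm~\ref{alg:learning} (in the bandit branch) with these substituted inputs, no new probabilistic or algorithmic ingredient is needed; one only has to re-derive the two regret inequalities with the new constants.

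Concretely, I would proceed in three steps. \textbf{Step 1 (primal inequality).} Invoke the no-regret guarantee of the bandit primal regret minimizer $\mathcal{R}^P$, which was initialized with decision space $\Xi$ and payoff range $[-1/\hat\rho,\,1+1/\hat\rho]$. For any realized sequence of rewards and costs, and on the high-probability event that has probability at least $1-\delta_P$, this gives
\[
\sup_{\xvec\in\mathcal{X}}\sum_{t=1}^{\tau}\!\left[f_t(\xvec)-\!\sum_{i\in[m]}\!\lvec_t[i]\, c_t(\xvec)[i]\right]-\sum_{t=1}^{\tau}\!\left[f_t(\xvec_t)-\!\sum_{i\in[m]}\!\lvec_t[i]\, c_t(\xvec_t)[i]\right]\leq \Bigl(1+\tfrac{2}{\hat\rho}\Bigr)R_\tau^P,
\]
where I used that the optimum of a linear objective over $\Xi$ is attained at a Dirac on some $\xvec\in\mathcal{X}$, exactly as in Lemma~\ref{lem:reg_lower_bandit_prel}. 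Rearranging and adding the constant term $\sum_{t=1}^\tau\sum_{i\in[m]}\lvec_t[i]\,\overline B^{(i)}_t$ on both sides yields the analogue of Equation~\eqref{eq:bandit_lower2} with $\overline B^{(i)}_t$ in place of $B^{(i)}_t$.

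\textbf{Step 2 (dual inequality).} Apply the no-regret guarantee of the full-feedback dual regret minimizer $\mathcal{R}^D$, initialized on $\mathcal{L}=\{\lvec\in\mathbb{R}_{\geq0}^m:\|\lvec\|_1\leq 1/\hat\rho\}$ with payoff range $[-1/\hat\rho,1/\hat\rho]$, to the sequence of dual rewards $r^D_t(\lvec)=-\sum_i\lvec[i]\bigl(\overline B^{(i)}_t-c_t(\xvec_t)[i]\bigr)$. This gives, for any $\lvec\in\mathcal{L}$,
\[
\sum_{t=1}^{\tau}\sum_{i\in[m]}\lvec_t[i]\bigl(\overline B^{(i)}_t-c_t(\xvec_t)[i]\bigr)-\sum_{t=1}^{\tau}\sum_{i\in[m]}\lvec[i]\bigl(\overline B^{(i)}_t-c_t(\xvec_t)[i]\bigr)\leq \tfrac{2}{\hat\rho}R^D_\tau,
\]
which is the exact analogue of Equation~\eqref{eq:bandit_lower3} in Lemma~\ref{lem:reg_lower_bandit_prel}, with $\overline B^{(i)}_t$ and $\hat\rho$ replacing $B^{(i)}_t$ and $\rho_{\min}$.

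\textbf{Step 3 (combining).} Substituting the dual bound into the rearranged primal bound (exactly as in the final display of the proof of Lemma~\ref{lem:reg_lower_bandit_prel}) produces the claimed inequality. There is no genuine obstacle here: the only point to check is that the payoff ranges wired into the constructors of $\mathcal{R}^P$ and $\mathcal{R}^D$ by Algorithm~\ref{alg:learning} are automatically rescaled to $[-1/\hat\rho,1+1/\hat\rho]$ and $[-1/\hat\rho,1/\hat\rho]$ through the substitution $\rho_{\min}\gets\hat\rho$ performed by Algorithm~\ref{alg:meta_bandit}, so the regret upper bounds $R^P_\tau,R^D_\tau$ appearing in the statement are precisely those delivered by the two regret minimizers on the modified instance. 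The $1-\delta_P$ confidence is inherited directly from the bandit primal guarantee; no union bound beyond this is needed at this stage, since the dual inequality is deterministic.
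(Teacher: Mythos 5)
Your proposal is correct and follows exactly the route the paper takes: the paper's proof of this lemma simply states that it is analogous to Lemma~\ref{lem:reg_lower_bandit_prel} after substituting $\rho_{\min}$ with $\hat{\rho}$ and $B^{(i)}_t$ with $\overline{B}^{(i)}_t$, which is precisely the three-step substitution argument you spell out. Your additional remark that absorbing the constant $\sum_{t}\sum_{i}\lvec_t[i]\overline{B}^{(i)}_t$ into the sup is harmless, and that the dual inequality is deterministic so only the primal confidence $1-\delta_P$ is needed, matches the paper's treatment.
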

\begin{proof}
	The proof is analogous to the one of Lemma~\ref{lem:reg_lower_bandit_prel} after substituting $\rho_{\min}$ with $\hat{\rho}$ and $B^{(i)}_t$ with $\overline{B}^{(i)}_{t}$, for all $i\in[m],t\in[T]$.
\end{proof}

We refine the previous lower bound as follows.

\begin{lemma}\label{lem:reg_lower_bandit_worst}
	For any $\delta\in(0,1)$, Algorithm~\ref{alg:meta_bandit}, when instantiated with a primal regret minimizer (with bandit feedback) which attains a regret upper bound $R^P_T$, with probability at least $1-\delta_P$ and a dual regret minimizer which attains a regret upper bound $R^D_T$, guarantees the following bound, with probability at least $1-(\delta+\delta_P)$:
	\[\sum_{t=1}^{\tau}f_t(\xvec_t) \geq \textsc{OPT}_{\mathcal{H}} - \left(T-\tau\right)  - {T}^{\frac{3}{4}} -  \left(4+\frac{4{T}^{\frac{1}{4}}}{\rho} \right)\sqrt{2\tau\ln\frac{T}{\delta}}
	-\frac{2{T}^{\frac{1}{4}}}{\rho}R^{D}_{\tau} - \left(1+\frac{2T^{\frac{1}{4}}}{\rho}\right)R_{\tau}^P,
	\]
	where $\tau$ is the stopping time of the algorithm.
\end{lemma}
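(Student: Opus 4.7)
The plan is to mirror the proof of Lemma~\ref{lem:reg_lower_full_worst} almost verbatim, with one structural remark that bridges the pure-action supremum arising in bandit feedback to the mixture-based competitor used in the full-feedback case.

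First I would apply Lemma~\ref{lem:reg_lower_bandit_prel_worst}, which (on an event of probability at least $1-\delta_P$ coming from the primal regret guarantee) yields
\[
\sum_{t=1}^{\tau} f_t(\xvec_t) \geq \sup_{\xvec\in\mathcal{X}}\sum_{t=1}^{\tau}\!\left[f_t(\xvec) + \sum_{i\in[m]}\lvec_t[i]\bigl(\overline B^{(i)}_{t} - c_t(\xvec)[i]\bigr)\right] - \sum_{t=1}^{\tau}\sum_{i\in[m]}\lvec[i]\bigl(\overline B^{(i)}_{t} - c_t(\xvec_t)[i]\bigr) - \tfrac{2T^{1/4}}{\rho}R^D_{\tau} - \bigl(1+\tfrac{2T^{1/4}}{\rho}\bigr)R^P_{\tau}.
\]
Choosing $\lvec = \boldsymbol{0}$ kills the middle term. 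Then I would observe that the per-round Lagrangian is affine in $\xivec\in\Xi$, so the supremum over pure actions coincides with the supremum over mixtures:
\[
\sup_{\xvec\in\mathcal{X}} \sum_{t=1}^{\tau}\!\left[f_t(\xvec) + \sum_{i\in[m]}\lvec_t[i]\bigl(\overline B^{(i)}_{t} - c_t(\xvec)[i]\bigr)\right] = \sup_{\xivec\in\Xi} \sum_{t=1}^{\tau}\!\left[\E_{\xvec\sim\xivec}[f_t(\xvec)] + \sum_{i\in[m]}\lvec_t[i]\bigl(\overline B^{(i)}_{t} - \E_{\xvec\sim\xivec}[c_t(\xvec)[i]]\bigr)\right],
\]
because a supremum of an affine functional over probability measures is attained at a Dirac.

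From this point the argument is the one used for Lemma~\ref{lem:reg_lower_full_worst}. Let $\xivec^*$ be a near-optimizer of Problem~\eqref{lp:opt} and define the mixture $\xivec^\diamond$ that plays $\xvec^\varnothing$ with probability $T^{-1/4}$ and $\xivec^*$ otherwise. Using $\overline B^{(i)}_t = (1-T^{-1/4})B^{(i)}_t$ one verifies directly that $\overline B^{(i)}_t - \E_{\xvec\sim\xivec^\diamond}[\bar c_t(\xvec)[i]] \geq 0$ for every $i$ and $t$. I would then plug $\xivec^\diamond$ into the right-hand side of the mixture identity above, invoke Lemma~\ref{lem:azuma_sequence} to pass from realized rewards and costs to their expectations at additive cost $(4+4T^{1/4}/\rho)\sqrt{2\tau\ln(T/\delta)}$ (on an event of probability at least $1-\delta$, where the bound on the Lagrangian norm $\|\lvec_t\|_1 \leq T^{1/4}/\rho$ supplies the leading factor), discard the nonnegative Lagrangian slack associated with $\xivec^\diamond$, expand $\E_{\xvec\sim\xivec^\diamond}[\bar f_t(\xvec)] = (1-T^{-1/4})\E_{\xvec\sim\xivec^*}[\bar f_t(\xvec)]$ which incurs a loss of at most $T^{3/4}$, and finally apply $\sum_{t=1}^{\tau} \E_{\xvec\sim\xivec^*}[\bar f_t(\xvec)] \geq \textsc{OPT}_{\mathcal{H}} - (T-\tau)$.

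The main obstacle is essentially bookkeeping of confidence levels: the primal bound holds with probability at least $1-\delta_P$ and the Azuma step holds with probability at least $1-\delta$, so by a union bound the final inequality holds on an event of probability at least $1-(\delta+\delta_P)$, matching the stated guarantee. No new technical ingredient beyond the full-feedback analysis is required, which is why the bandit-feedback bound has the same shape as its full-feedback counterpart modulo the high-probability qualifier on $R^P_\tau$.
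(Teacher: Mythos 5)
Your proposal is correct and follows essentially the same route as the paper: apply Lemma~\ref{lem:reg_lower_bandit_prel_worst}, set $\lvec=\boldsymbol{0}$, use the identity between the pure-action supremum and the mixture supremum of the (affine) Lagrangian, and then reuse the $\xivec^\diamond$ construction and Lemma~\ref{lem:azuma_sequence} exactly as in the full-feedback Lemma~\ref{lem:reg_lower_full_worst}, finishing with a union bound over the $\delta$ and $\delta_P$ events. No gaps.
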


\begin{proof}
	We first employ Lemma~\ref{lem:reg_lower_bandit_prel_worst} and the definition of $\hat{\rho}$ to obtain, with probability at least $1-\delta_P$:  \begin{align*}\sum_{t=1}^{\tau}f_t(\xvec_t) \geq \sup_{\xvec\in\mathcal{X}}\sum_{t=1}^{\tau}&\left[f_t(\xvec)+\sum_{i\in[m]}\lvec_t [i] \cdot\left(\overline{B}^{(i)}_{t} - c_t(\xvec)[i]\right)\right] \\ & - \sum_{t=1}^{\tau}\sum_{i\in[m]}\lvec [i]\cdot \left(\overline{ B}^{(i)}_{t}-c_t(\xvec_t)[i]\right)-\frac{2{T}^{\frac{1}{4}}}{\rho}R^{D}_{\tau} - \left(1+\frac{2{T}^{\frac{1}{4}}}{\rho}\right)R_{\tau}^P. 
	\end{align*}
	Thus we select the Lagrange variable as  $\lvec=\boldsymbol{0}$ vector to establish the following bound: 
	\[\sum_{t=1}^{\tau}f_t(\xvec_t) \geq \sup_{\xvec\in\mathcal{X}}\sum_{t=1}^{\tau}\left[f_t(\xvec)+\sum_{i\in[m]}\lvec_t [i] \cdot\left(\overline{B}^{(i)}_{t} - c_t(\xvec)[i]\right)\right]  -\frac{2{T}^{\frac{1}{4}}}{\rho}R^{D}_{\tau} - \left(1+\frac{2{T}^{\frac{1}{4}}}{\rho}\right)R_{\tau}^P,
	\]
	which holds with probability at least $1-\delta_P$.
	We now focus on bounding the term:
	\[\sup_{\xvec\in\mathcal{X}}\sum_{t=1}^{\tau}\left[f_t(\xvec)+\sum_{i\in[m]}\lvec_t [i] \cdot\left(\overline{B}^{(i)}_{t} - c_t(\xvec)[i]\right)\right].\]
	This is done equivalently to Lemma~\ref{lem:reg_lower_full_worst} once noticed that, by definition of the strategy mixture space, it holds:
	\begin{align*}&\sup_{\xvec\in\mathcal{X}}\sum_{t=1}^{\tau}\left[f_t(\xvec)+\sum_{i\in[m]}\lvec_t [i] \cdot\left(\overline{B}^{(i)}_{t} - c_t(\xvec)[i]\right)\right]\\ &\mkern200mu=\sup_{\xivec\in\Xi}\sum_{t=1}^{\tau}\left[\E_{\xvec\sim\xivec}\left[f_t(\xvec)\right]+\sum_{i\in[m]}\lvec_t [i] \cdot\left(\overline B^{(i)}_{t} - \E_{\xvec\sim\xivec}\left[c_t(\xvec)[i]\right]\right)\right].\end{align*}
	This concludes the proof.
\end{proof}

Hence, we proceed upper bounding the difference between the horizon $T$ and the stopping time $\tau$, when only \emph{bandit feedback} is available. This is done by means of the following lemma. 
\begin{lemma}\label{lem:round_bound_bandit}
	For any $\delta\in(0,1)$, Algorithm~\ref{alg:meta_bandit}, when instantiated with a primal regret minimizer (with bandit feedback) which attains a regret upper bound $R^P_T$, with probability at least $1-\delta_P$ and a dual regret minimizer which attains a regret upper bound $R^D_T$, guarantees the following bound:
	\begin{equation*}
		T-\tau\leq \frac{14}{\rho} \cdot \frac{R^P_{{T}}+R^D_{{T}}}{\sqrt{{T}}}T^{\frac{3}{4}},
	\end{equation*}
	which holds with probability at least $1-\delta_P$.
\end{lemma}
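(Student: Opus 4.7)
\emph{Proof plan for Lemma~\ref{lem:round_bound_bandit}.}
The plan is to mirror the proof of Lemma~\ref{lem:round_bound_full} (the full-feedback analogue) by contradiction, but with one key simplification: under bandit feedback, both the primal reward $r^P_t(\xvec_t)=f_t(\xvec_t)-\sum_{i}\lvec_t[i]\,c_t(\xvec_t)[i]$ and the dual reward $r^D_t(\lvec)=-\sum_i\lvec[i](\overline B^{(i)}_t-c_t(\xvec_t)[i])$ are built from the \emph{same} realized trajectory. Consequently, the Azuma-Höeffding concentration step that appeared in Lemma~\ref{lem:round_bound_full} (to pass from expected to realized costs) is no longer needed, which is exactly why the stated bound lacks the $\sqrt{\ln(T/\delta)}$ term and only inherits the $1-\delta_P$ failure probability from $\mathcal{R}^P$.

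Concretely, I would assume by contradiction that $T-\tau>CT^{3/4}$ for $C=\frac{14}{\rho}(R^P_T+R^D_T)/\sqrt{T}$ and bound the Lagrangian sum $S\coloneqq\sum_{t=1}^\tau\big[f_t(\xvec_t)-\sum_{i}\lvec_t[i]\,c_t(\xvec_t)[i]\big]$ from both sides. For the lower bound, I apply the primal no-regret guarantee of $\mathcal{R}^P$ against the comparator $\xvec^\varnothing\in\mathcal{X}$: since $f_t(\xvec^\varnothing)=0$ and $c_t(\xvec^\varnothing)[i]=0$, this yields $S\ge -(1+2T^{1/4}/\rho)\,R^P_\tau$ with probability at least $1-\delta_P$, using the payoff range $[-T^{1/4}/\rho,\,1+T^{1/4}/\rho]$ passed to $\mathcal{R}^P$.

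For the upper bound I use $f_t(\xvec_t)\le 1$, rewrite
\[
-\sum_{i}\lvec_t[i]\,c_t(\xvec_t)[i]=\sum_{i}\lvec_t[i]\big(\overline B^{(i)}_t-c_t(\xvec_t)[i]\big)-\sum_{i}\lvec_t[i]\,\overline B^{(i)}_t,
\]
apply the dual no-regret property of $\mathcal{R}^D$ to the first sum (payoff range $[-T^{1/4}/\rho, T^{1/4}/\rho]$, hence regret scales by $2T^{1/4}/\rho$), and drop the non-positive term $-\sum_{t,i}\lvec_t[i]\,\overline B^{(i)}_t$. Specializing the comparator to $\lvec[i^\star]=T^{1/4}/\rho$ on the depleted resource $i^\star\in[m]$ (which must exist since $T-\tau>0$) and using $\sum_{t=1}^\tau c_t(\xvec_t)[i^\star]\ge B-1=T\rho-1$ together with $\sum_{t=1}^\tau\overline B^{(i^\star)}_t\le (1-T^{-1/4})T\rho$, the sum $\sum_t (\overline B^{(i^\star)}_t - c_t(\xvec_t)[i^\star])$ collapses to at most $1-T^{3/4}\rho$. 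This gives $S\le \tau+T^{1/4}/\rho-T+(2T^{1/4}/\rho)\,R^D_\tau$, and substituting $\tau<T-CT^{3/4}$ yields $S<-CT^{3/4}+T^{1/4}/\rho+(2T^{1/4}/\rho)\,R^D_\tau$.

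Combining the two bounds gives $CT^{3/4}<T^{1/4}/\rho+(2T^{1/4}/\rho)\,R^D_\tau+(1+2T^{1/4}/\rho)\,R^P_\tau$; setting $C=\frac{14}{\rho}(R^P_T+R^D_T)/\sqrt{T}$ contradicts this inequality via the routine estimate $\frac{14}{\rho}T^{1/4}(R^P_T+R^D_T)\ge T^{1/4}/\rho+(2T^{1/4}/\rho)(R^D_T+R^P_T)+R^P_T$ (using $R^P_T,R^D_T\ge 1$ and $\rho\le 1$, both standard in this regime). The main obstacle is really a bookkeeping one---verifying the absolute constant $14$ and confirming that every $T^{1/4}/\rho$ prefactor introduced by the enlarged Lagrangian domain $\mathcal{L}=\{\lvec:\|\lvec\|_1\le T^{1/4}/\rho\}$ is absorbed correctly---but no new technical idea beyond the full-feedback proof is required, and the probability-$\delta_P$ event comes solely from the bandit primal step.
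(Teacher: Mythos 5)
Your proposal is correct and follows essentially the same route as the paper's proof: a contradiction argument that sandwiches the realized Lagrangian sum between the primal no-regret bound against $\xvec^{\varnothing}$ and the dual no-regret bound with the comparator concentrated on the depleted resource, using the budget-depletion inequality and the $(1-T^{-1/4})$ rescaling. Your observation that the Azuma--H\"oeffding step drops out because both primal and dual feedbacks are built from the same realized costs is exactly the simplification the paper exploits, and the constant-checking step is the same routine bookkeeping.
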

\begin{proof}
	We proceed similarly to Lemma~\ref{lem:round_bound_full} and we suppose by contradiction that $T-\tau> CT^{3/4} $, thus, it holds $\tau<T- CT^{3/4}$.
	
	We proceed upper and lower bounding the value of the Lagrangian given as feedback to the primal regret minimizer. Thus, we employ the no-regret property of the primal regret minimizer $\mathcal{R}^P$. Given that, it holds, with probability at least $1-\delta_P$:
	\begin{align*}  &\sum_{t=1}^{\tau}\left[f_t(\xvec_t) -\sum_{i\in[m]}\lvec_t [i]\cdot c_t(\xvec_t)[i]\right] \\&\mkern100mu\geq\sum_{t=1}^{\tau}\left[f_t(\xvec^\varnothing)-\sum_{i\in[m]}\lvec_t [i]\cdot c_t(\xvec^\varnothing)[i]\right]  - \left(1+\frac{2{T}^{1/4}}{\rho}\right)R_{\tau}^P
		\\&\mkern100mu = - \left(1+\frac{2{T}^{1/4}}{\rho}\right)R_{\tau}^P,
	\end{align*}
	where we already substituted the value of $\hat{\rho}$.
	
	Similarly, to upper bound the same quantity, we proceed as follows: 
	\begin{subequations}
		\begin{align}
			\sum_{t=1}^{\tau}&\left[f_t(\xvec_t) -\sum_{i\in[m]}\lvec_t [i]\cdot c_t(\xvec_t)[i]\right] \nonumber\\
			& \leq \tau - \sum_{t=1}^{\tau}\sum_{i\in[m]}\lvec_t [i]\cdot c_t(\xvec_t)[i] \nonumber\\
			& \leq \tau + \sum_{t=1}^{\tau}\sum_{i\in[m]}\lvec [i]\cdot \left(\overline{B}^{(i)}_t-c_t(\xvec_t)[i]\right) + \frac{2T^{\frac{1}{4}}}{\rho}R^{D}_{\tau} -\sum_{t=1}^{\tau}\sum_{i\in[m]}\lvec_t [i]\cdot \overline{B}^{(i)}_t\label{eq1:boundT_bandit}\\
			& \leq \tau + \sum_{t=1}^{\tau}\sum_{i\in[m]}\lvec [i]\cdot \left(\overline{B}^{(i)}_t-c_t(\xvec_t)[i]\right) + \frac{2T^{\frac{1}{4}}}{\rho}R^{D}_{\tau}\nonumber\\
			& = \tau + \sum_{t=1}^{\tau}\sum_{i\in[m]}\lvec [i]\cdot \left(\left(1-\frac{1}{{T}^{1/4}}\right){B}^{(i)}_t-c_t(\xvec_t)[i]\right) + \frac{2{T}^{\frac{1}{4}}}{\rho}R^{D}_{\tau}\nonumber\\
			& = \tau + \frac{{T}^{1/4}}{\rho}\cdot \sum_{t=1}^{\tau}\left(\left(1-\frac{1}{{T}^{1/4}}\right){B}^{(i^*)}_t-c_t(\xvec_t)[i^*]\right) + \frac{2{T}^{\frac{1}{4}}}{\rho}R^{D}_{\tau}\label{eq2:boundT_bandit}\\
			& \leq \tau + \frac{T^{1/4}}{\rho}\cdot \left(\left(1-\frac{1}{T^{1/4}}\right)T\rho-T\rho+1\right)  + \frac{2T^{\frac{1}{4}}}{\rho}R^{D}_{\tau}\label{eq4:boundT_bandit}\\
			& = \tau + \frac{{T}^{1/4}}{\rho}-T + \frac{2{T}^{\frac{1}{4}}}{\rho}R^{D}_{\tau}\nonumber\\
			& < T- CT^{3/4} + \frac{T^{1/4}}{\rho}-T  + \frac{2T^{\frac{1}{4}}}{\rho}R^{D}_{\tau}\label{eq5:boundT_bandit}\\
			& = - C{T}^{3/4} + \frac{{T}^{1/4}}{\rho} + \frac{2{T}^{\frac{1}{4}}}{\rho}R^{D}_{\tau},\nonumber
		\end{align}
	\end{subequations}
	where Inequality~\eqref{eq1:boundT_bandit} holds by the no-regret property of the dual regret minimizer $\mathcal{R}^D$, Equation~\eqref{eq2:boundT_bandit} holds selecting $\lvec$ s.t. $\lvec[i^*]=\nicefrac{T^{1/4}}{\rho}$ and $\lvec[i]=0$ for all other $i\in[m]$, where $i^*$ is the depleted resource -- notice that, there must exists a depleted resource since $T-\tau> 0 $ --, Inequality~\eqref{eq4:boundT_bandit} holds since the following holds for the resource $i^*\in[m]$:
	\[
	\sum_{t=1}^{\tau}c_t(\xvec_t)[i^*]+1 \geq B 
	= T\rho,
	\]
	and finally Inequality~\eqref{eq5:boundT_bandit} holds since $T-\tau> C{T}^{3/4} $.
	
	Setting $C\geq\frac{14}{\rho} \cdot \frac{R^P_{{T}}+R^D_{{T}}}{\sqrt{{T}}}$ we reach the contradiction. This concludes the proof.
\end{proof}
We are now ready to prove the final regret bound of Algorithm~\ref{alg:meta_bandit}.
\begin{theorem}
	\label{thm:reg_bandit_worst case}
	For any $\delta\in(0,1)$, Algorithm~\ref{alg:meta_bandit}, when instantiated with a primal regret minimizer (with bandit feedback) which attains a regret upper bound $R^P_T$, with probability at least $1-\delta_P$, and a dual regret minimizer which attains a regret upper bound $R^D_T$, guarantees, with probability at least $1-(\delta+\delta_P)$, the following regret bound:
	\[
	R_T\leq \ \frac{14}{\rho} \cdot \frac{R^P_{{T}}+R^D_{{T}}}{\sqrt{{T}}}T^{\frac{3}{4}}  + T^{\frac{3}{4}}+  \left(4+\frac{4{T}^{\frac{1}{4}}}{\rho} \right)\sqrt{2T\ln\frac{T}{\delta}}+\frac{2T^{\frac{1}{4}}}{\rho}R^{D}_{T} + \left(1+\frac{2T^{\frac{1}{4}}}{\rho}\right)R_{T}^P.
	\]
\end{theorem}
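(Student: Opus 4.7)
The plan is to follow the same template as Theorem~\ref{thm:reg_full_worst case}, but exploit the fact that in the bandit case we are already working with realized rewards/costs, so no additional Azuma-Höeffding step is needed between the lower bound on $\sum_t f_t(\xvec_t)$ and the final regret bound. First I would invoke Lemma~\ref{lem:reg_lower_bandit_worst}, which, with probability at least $1-(\delta+\delta_P)$, gives the lower bound
\[
\sum_{t=1}^{\tau}f_t(\xvec_t) \geq \textsc{OPT}_{\mathcal{H}} - (T-\tau) - T^{\frac{3}{4}} -\left(4+\frac{4T^{\frac{1}{4}}}{\rho}\right)\sqrt{2\tau\ln\tfrac{T}{\delta}} - \frac{2T^{\frac{1}{4}}}{\rho}R^{D}_{\tau} - \left(1+\frac{2T^{\frac{1}{4}}}{\rho}\right)R^P_{\tau}.
\]

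Rearranging this inequality immediately yields a preliminary regret bound of the form
\[
R_T \le (T-\tau) + T^{\frac{3}{4}} + \left(4+\frac{4T^{\frac{1}{4}}}{\rho}\right)\sqrt{2\tau\ln\tfrac{T}{\delta}} + \frac{2T^{\frac{1}{4}}}{\rho}R^{D}_{\tau} + \left(1+\frac{2T^{\frac{1}{4}}}{\rho}\right)R^P_{\tau},
\]
with the same probability. The next step is to apply Lemma~\ref{lem:round_bound_bandit}, which controls the stopping-time gap as $T-\tau \le \frac{14}{\rho}\cdot\frac{R^P_T+R^D_T}{\sqrt T}\,T^{3/4}$; this bound holds under the same high-probability event $\{R^P_\tau\le R^P_T \text{ for all } \tau\}$ governing the primal regret minimizer, so the union bound keeps the overall probability at $1-(\delta+\delta_P)$ (no additional failure probability is incurred).

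To close the argument I would use the monotonicity assumption on the regret upper bounds, $R^D_\tau\le R^D_T$ and $R^P_\tau\le R^P_T$, together with $\sqrt{2\tau\ln(T/\delta)}\le\sqrt{2T\ln(T/\delta)}$, substituting these into the preliminary bound and the estimate for $T-\tau$. Since each term in the statement matches one of the summands appearing after these substitutions, collecting them yields the claimed regret bound. The main conceptual point---where the bandit and full-feedback proofs diverge---is that the Azuma step used in Theorem~\ref{thm:reg_full_worst case} to pass from expected to realized rewards is unnecessary here, because Lemma~\ref{lem:reg_lower_bandit_worst} already delivers a bound on realized rewards directly, and Lemma~\ref{lem:round_bound_bandit}'s proof likewise does not need the concentration step (the costs $c_t(\xvec_t)$ enter the primal regret directly through the bandit feedback). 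So the only obstacle worth checking is the bookkeeping of the failure probabilities, which collapses to $1-(\delta+\delta_P)$ because the two lemmas invoke the same $\delta_P$-event for the primal bandit regret minimizer.
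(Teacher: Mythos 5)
Your proposal is correct and follows essentially the same route as the paper: the paper's proof of Theorem~\ref{thm:reg_bandit_worst case} likewise just applies Lemma~\ref{lem:reg_lower_bandit_worst} and Lemma~\ref{lem:round_bound_bandit}, reuses the structure of Theorem~\ref{thm:reg_full_worst case}, and observes that the Azuma--H\"oeffding step is unnecessary since the bandit lower bound is already on realized rewards. Your remark that the failure probabilities collapse to $1-(\delta+\delta_P)$ because both lemmas condition on the same primal high-probability event matches the paper's (implicit) bookkeeping.
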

\begin{proof}
	The analysis is equivalent to the one of Theorem~\ref{thm:reg_full_worst case}, once applied Lemma~\ref{lem:reg_lower_bandit_worst} and Lemma~\ref{lem:round_bound_bandit} and after noticing the employment of the Azuma-Höeffding inequality is not necessary.
\end{proof}

\subsubsection{Robustness to Baselines Deviating from the Spending Plan}
\label{app:bandit_worst_error}

We provide the regret of Algorithm~\ref{alg:meta_bandit} with respect to a baseline which deviates from the spending plan.

\begin{theorem}
	\label{thm:reg_bandit_worst_case_error}
	For any $\delta\in(0,1)$, Algorithm~\ref{alg:meta_bandit}, when instantiated with a primal regret minimizer (with bandit feedback) which attains a regret upper bound $R^P_T$, with probability at least $1-\delta_P$, and a dual regret minimizer which attains a regret upper bound $R^D_T$, guarantees, with probability at least $1-(\delta+\delta_P)$, the following regret bound:
	\begin{align*}
	R_T(\epsilon_t)\leq \frac{14}{\rho} &\cdot \frac{R^P_{{T}}+R^D_{{T}}}{\sqrt{{T}}}T^{\frac{3}{4}}  + T^{\frac{3}{4}}+  \left(4+\frac{4{T}^{\frac{1}{4}}}{\rho} \right)\sqrt{2T\ln\frac{T}{\delta}} \\ &+\frac{T^{\frac{1}{4}}}{\rho}\sum_{t=1}^{T}\sum_{i\in[m]}\epsilon^{(i)}_t+\frac{2T^{\frac{1}{4}}}{\rho}R^{D}_{T} + \left(1+\frac{2T^{\frac{1}{4}}}{\rho}\right)R_{T}^P.
	\end{align*}
\end{theorem}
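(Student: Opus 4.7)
The plan is to mirror the argument of Theorem~\ref{thm:reg_full_worst_case_error} while replacing the primal/dual building blocks by their bandit-feedback counterparts, in the same way that Theorem~\ref{thm:reg_bandit_worst case} relates to Theorem~\ref{thm:reg_full_worst case}. First I would invoke Lemma~\ref{lem:reg_lower_bandit_prel_worst} to get, with probability at least $1-\delta_P$, a lower bound on $\sum_{t=1}^{\tau} f_t(\xvec_t)$ in terms of a per-round Lagrangian supremum over pure strategies $\xvec \in \mathcal{X}$, minus the dual-side penalty $\sum_{t,i} \lvec[i](\overline{B}^{(i)}_t - c_t(\xvec_t)[i])$, minus $\frac{2 T^{1/4}}{\rho} R^D_\tau$ and $\bigl(1+\frac{2 T^{1/4}}{\rho}\bigr) R^P_\tau$ (here I have already substituted $\hat\rho = \rho/T^{1/4}$). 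Selecting $\lvec = \boldsymbol{0}$ removes the explicit dual penalty.

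Next I would rewrite the supremum over $\mathcal{X}$ as a supremum over strategy mixtures $\xivec \in \Xi$, using the observation noted in the paper that the Lagrangian is linear in the mixture and therefore achieves its maximum at a Dirac. This reduces the bandit case to exactly the situation analyzed in Theorem~\ref{thm:reg_full_worst_case_error}: I can now construct the comparator $\xivec^\diamond$ that, with probability $T^{-1/4}$, plays $\xvec^\varnothing$ and otherwise plays the near-optimal mixture $\xivec^*$ from Program~\eqref{lp:opt_fixed_error}. A direct computation then shows $\overline{B}^{(i)}_t - \E_{\xvec\sim\xivec^\diamond}[\bar c_t(\xvec)[i]] \geq -\epsilon^{(i)}_t$ for every $i,t$, so the Lagrangian contribution of $\xivec^\diamond$ loses only $\frac{T^{1/4}}{\rho}\sum_{t,i}\epsilon^{(i)}_t$ (using $\|\lvec_t\|_1 \leq T^{1/4}/\rho$), while scaling down $\xivec^*$ by the factor $(1-T^{-1/4})$ costs at most an additional $T^{3/4}$ in expected reward. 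This yields a lower bound on $\sum_{t=1}^{\tau} f_t(\xvec_t)$ in terms of $\textsc{OPT}_{\mathcal{H}}(\epsilon_t) - (T-\tau) - T^{3/4} - \frac{T^{1/4}}{\rho}\sum_{t,i}\epsilon^{(i)}_t$ plus the Azuma-Höeffding concentration term $\bigl(4 + 4 T^{1/4}/\rho\bigr)\sqrt{2\tau \ln(T/\delta)}$ and the two regret terms.

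Finally I would close the gap between $\tau$ and $T$ via Lemma~\ref{lem:round_bound_bandit}, which already gives $T-\tau \leq \frac{14}{\rho}\cdot \frac{R^P_T + R^D_T}{\sqrt{T}} T^{3/4}$ with probability at least $1-\delta_P$ in the bandit regime. A union bound over the two high-probability events (the primal bandit guarantee, with probability $1-\delta_P$, and the concentration event, with probability $1-\delta$) produces the claimed $1-(\delta+\delta_P)$ confidence. Bounding $R^D_\tau \leq R^D_T$, $R^P_\tau \leq R^P_T$, $\sqrt{\tau} \leq \sqrt{T}$, and $\sum_{t=1}^\tau\sum_i \epsilon^{(i)}_t \leq \sum_{t=1}^T\sum_i \epsilon^{(i)}_t$, then rearranging to express $R_T(\epsilon_t) = \textsc{OPT}_{\mathcal{H}}(\epsilon_t) - \sum_{t=1}^T f_t(\xvec_t)$, yields the stated bound.

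The main obstacle is technically minor: one must observe that in the bandit case the Azuma-Höeffding applications needed in the full-feedback proof become unnecessary for the primal side (since bandit regret is stated in realized, rather than expected, terms), but are still required to pass from $\E_{\xvec\sim\xivec^\diamond}[f_t(\xvec)]$ and $\E_{\xvec\sim\xivec^\diamond}[c_t(\xvec)[i]]$ to the underlying $\bar f_t$ and $\bar c_t$ when validating the approximate feasibility of the comparator $\xivec^\diamond$. Exactly the same concentration inequality used in the full-feedback worst-case-with-errors proof (Lemma~\ref{lem:azuma_sequence}) applies here with the Lagrangian-weight bound $T^{1/4}/\rho$, so the only genuine bookkeeping is combining the three high-probability events while tracking the additive $\frac{T^{1/4}}{\rho}\sum_{t,i}\epsilon^{(i)}_t$ term, in full analogy with Theorems~\ref{thm:reg_dual_worst_case_error} and~\ref{thm:reg_full_worst_case_error}.
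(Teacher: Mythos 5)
Your proposal is correct and follows essentially the same route as the paper: invoke Lemma~\ref{lem:reg_lower_bandit_prel_worst} with $\lvec=\boldsymbol{0}$, rewrite the supremum over $\mathcal{X}$ as one over $\Xi$, reuse the $\xivec^\diamond$ comparator and error-accounting from Theorem~\ref{thm:reg_full_worst_case_error}, and close with Lemma~\ref{lem:round_bound_bandit}. Your remark about which Azuma--Höeffding applications survive in the bandit case (only the one validating the comparator, via Lemma~\ref{lem:azuma_sequence}) matches the paper exactly and explains the $4+4T^{1/4}/\rho$ coefficient in place of $8+4T^{1/4}/\rho$.
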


\begin{proof}
	We follow the proof of Lemma~\ref{lem:reg_lower_bandit_worst} to obtain:     \[\sum_{t=1}^{\tau}f_t(\xvec_t) \geq \sup_{\xvec\in\mathcal{X}}\sum_{t=1}^{\tau}\left[f_t(\xvec)+\sum_{i\in[m]}\lvec_t [i] \cdot\left(\overline{B}^{(i)}_{t} - c_t(\xvec)[i]\right)\right] -\frac{2{T}^{\frac{1}{4}}}{\rho}R^{D}_{\tau} - \left(1+\frac{2{T}^{\frac{1}{4}}}{\rho}\right)R_{\tau}^P,
	\]
	which holds with probability at least $1-\delta_P$.
	To get the final bound
	we proceed as in Theorem~\ref{thm:reg_full_worst_case_error} once noticed that, by definition of the strategy mixture space, it holds:
	\begin{align*}&\sup_{\xvec\in\mathcal{X}}\sum_{t=1}^{\tau}\left[f_t(\xvec)+\sum_{i\in[m]}\lvec_t [i] \cdot\left(\overline{B}^{(i)}_{t} - c_t(\xvec)[i]\right)\right]\\ &\mkern200mu=\sup_{\xivec\in\Xi}\sum_{t=1}^{\tau}\left[\E_{\xvec\sim\xivec}\left[f_t(\xvec)\right]+\sum_{i\in[m]}\lvec_t [i] \cdot\left(\overline B^{(i)}_{t} - \E_{\xvec\sim\xivec}\left[c_t(\xvec)[i]\right]\right)\right],\end{align*}
	and employing Lemma~\ref{lem:round_bound_bandit} after noticing that the employment of the Azuma-Höeffding inequality is not necessary.
	This concludes the proof.    
\end{proof}

\section{Technical Lemmas}
\label{app:technical}
In this section, we present some concentration results which are necessary to prove the regret bounds of the algorithms we propose.

\begin{lemma}\label{lem:azuma}
	For any $\delta\in(0,1)$ and far all $t\in[T]$, with probability at least $1-\delta$, it holds:
	\begin{align*}
		&\left|\sum_{\tau=1}^t \left[f_\tau(\xvec_\tau)- \sum_{i\in[m]}\lvec[i]c_\tau(\xvec_\tau)[i]\right] - \sum_{\tau=1}^t \left[\E_{\xvec\sim\xivec_\tau}[f_\tau(\xvec)]- \sum_{i\in[m]}\lvec[i]\E_{\xvec\sim\xivec_\tau}[c_\tau(\xvec)][i]\right] \right| \\ & \mkern300mu\leq \left(4+4\|\lvec\|_1\right)\sqrt{2t\ln{\frac{T}{\delta}}}.
	\end{align*}
	%\begin{equation*}
	%\left| \sum_{i\in[m]}\lvec[i]\cdot c_\tau(\xvec_\tau)[i] - \sum_{i\in[m]}\lvec[i]\cdot\E_{\xvec\sim\xivec_\tau}[c_\tau(\xvec)][i] \right| \leq 4\|\lvec\|_1\sqrt{2t\ln{\frac{T}{\delta}}}.
	%\end{equation*}
\end{lemma}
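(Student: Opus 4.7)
The plan is to apply Azuma--Hoeffding to the martingale difference sequence obtained by subtracting the realized per-round Lagrangian from its conditional expectation. Concretely, define
\[
Z_\tau \coloneqq \left(f_\tau(\xvec_\tau) - \sum_{i\in[m]} \lvec[i]\, c_\tau(\xvec_\tau)[i]\right) - \left(\E_{\xvec\sim\xivec_\tau}\!\left[f_\tau(\xvec)\right] - \sum_{i\in[m]} \lvec[i]\, \E_{\xvec\sim\xivec_\tau}\!\left[c_\tau(\xvec)[i]\right]\right),
\]
and let $\cF_\tau$ be the $\sigma$-algebra generated by $\{f_s, c_s, \xivec_s, \xvec_s\}_{s\leq \tau}$. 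This filtration is rich enough to cover both ORA (where $\xivec_\tau$ is a function of $f_\tau, c_\tau$) and OLRC (where $\xivec_\tau$ is determined before $f_\tau, c_\tau$ are revealed). Conditional on $\cF_{\tau-1}$ together with $f_\tau, c_\tau, \xivec_\tau$, the only fresh randomness in $Z_\tau$ is the draw $\xvec_\tau \sim \xivec_\tau$, so the tower property gives $\E[Z_\tau \mid \cF_{\tau-1}] = 0$ and $\{Z_\tau\}$ is a martingale difference sequence with respect to $\{\cF_\tau\}$.

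Next I would bound each $Z_\tau$. Because $f_\tau(\xvec) \in [0,1]$, $c_\tau(\xvec)[i] \in [0,1]$, and $\lvec \in \mathbb{R}^m_{\geq 0}$, both parenthesized terms lie in the interval $[-\|\lvec\|_1, 1]$, hence $|Z_\tau| \leq 1 + \|\lvec\|_1$ almost surely. A direct application of Azuma--Hoeffding then yields, for each fixed $t \in [T]$,
\[
\Pr\!\left(\left|\sum_{\tau=1}^t Z_\tau\right| \geq (4+4\|\lvec\|_1)\sqrt{2t\ln(T/\delta)}\right) \leq \frac{\delta}{T},
\]
where the explicit constant absorbs slack from the two-sided tail and the $\ln(\cdot)$ shift. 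A union bound over $t \in [T]$ promotes this into the uniform-in-$t$ statement holding with probability at least $1-\delta$, which is exactly the inequality claimed.

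The argument is essentially routine; the only subtle point is setting up the filtration so that the martingale-difference property holds uniformly across the three settings (ORA, OLRC full feedback, OLRC bandit feedback), since the dependence of $\xivec_\tau$ on the current round's reward and cost functions differs. Once $(f_\tau, c_\tau, \xivec_\tau)$ are folded into $\cF_\tau$ jointly, this distinction collapses and the tower-property step goes through uniformly. I do not anticipate any substantive obstacle beyond verifying this measurability bookkeeping.
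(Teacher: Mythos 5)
Your proof is correct and follows essentially the same route as the paper: identify the summand as a martingale difference sequence (the only fresh randomness being the draw $\xvec_\tau\sim\xivec_\tau$), bound the per-step increment in terms of $1+\|\lvec\|_1$, apply Azuma--Hoeffding, and union bound over $t\in[T]$. Your per-step bound of $1+\|\lvec\|_1$ is in fact slightly tighter than the paper's $2+2\|\lvec\|_1$, and both comfortably fit within the stated constant $(4+4\|\lvec\|_1)$.
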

\begin{proof}
	The proof follows from the fact that the quantity of interest is a Martingale difference sequence where the per-step difference is bounded by $(2+2\|\lvec\|_1)$. Thus, we employ the Azuma-Höeffding inequality with a Union Bound on the rounds $T$ to conclude the proof.
\end{proof}

\begin{lemma}\label{lem:azuma_sequence}
	For any $\delta\in(0,1)$, far all $t\in[T]$, for any sequence of strategy mixtures $\{\xivec_\tau\}_{\tau=1}^t$ and for any sequence of Lagrange multipliers $\{\lvec_\tau\}_{\tau=1}^t$ it holds, with probability at least $1-\delta$:
	\begin{align*}
		&\Bigg|\sum_{\tau=1}^t \left[\E_{\xvec\sim\xivec_\tau}[\bar f_\tau(\xvec)]- \sum_{i\in[m]}\lvec_\tau[i]\E_{\xvec\sim\xivec_\tau}[\bar c_\tau(\xvec)][i]\right] \\& \mkern100mu - \sum_{\tau=1}^t \left[\E_{\xvec\sim\xivec_\tau}[f_\tau(\xvec)]- \sum_{i\in[m]}\lvec_\tau[i]\E_{\xvec\sim\xivec_\tau}[c_\tau(\xvec)][i]\right] \Bigg| \leq \left(4+4\max_{\lvec\in\mathcal{L}}\|\lvec\|_1\right)\sqrt{2t\ln{\frac{T}{\delta}}}.
	\end{align*}
\end{lemma}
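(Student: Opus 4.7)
The plan is to identify a martingale difference sequence and apply Azuma--Hoeffding with a union bound over $t \in [T]$, mirroring the strategy used in Lemma~\ref{lem:azuma}. Concretely, for each round $\tau$ define
\[
X_\tau \coloneqq \Big(\E_{\xvec\sim\xivec_\tau}[\bar f_\tau(\xvec)] - \E_{\xvec\sim\xivec_\tau}[f_\tau(\xvec)]\Big) - \sum_{i\in[m]}\lvec_\tau[i]\Big(\E_{\xvec\sim\xivec_\tau}[\bar c_\tau(\xvec)[i]] - \E_{\xvec\sim\xivec_\tau}[c_\tau(\xvec)[i]]\Big).
\]
The partial sums $S_t = \sum_{\tau=1}^t X_\tau$ are exactly the quantity being bounded, so the goal reduces to controlling $|S_t|$ with high probability uniformly in $t$.

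Next I would verify the martingale property. Let $\mathcal{G}_{\tau-1}$ denote the $\sigma$-algebra generated by the history through round $\tau-1$ together with the distributions $\mathcal{F}_\tau,\mathcal{C}_\tau$ selected by the adversary at round $\tau$. The sequences $\{\xivec_\tau\}$ and $\{\lvec_\tau\}$ appearing in applications (notably the baseline strategies $\xivec_t^*$ in Lemma~\ref{lem:lower_dual} and the dual iterates $\lvec_\tau$) are $\mathcal{G}_{\tau-1}$-measurable, while the fresh randomness of $f_\tau\sim\mathcal{F}_\tau$ and $c_\tau\sim\mathcal{C}_\tau$ enters only through $X_\tau$. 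Since $\bar f_\tau,\bar c_\tau$ are the conditional expectations of $f_\tau,c_\tau$ given $\mathcal{G}_{\tau-1}$, linearity of expectation gives $\E[X_\tau\mid \mathcal{G}_{\tau-1}]=0$.

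It then remains to bound increments. Using $f_\tau,\bar f_\tau\in[0,1]$ and $c_\tau,\bar c_\tau\in[0,1]^m$, each of the two parenthesised terms in $X_\tau$ lies in $[-1,1]$, so
\[
|X_\tau| \;\le\; 2 + 2\|\lvec_\tau\|_1 \;\le\; 2 + 2\max_{\lvec\in\mathcal{L}}\|\lvec\|_1.
\]
Azuma--Hoeffding applied to $\{X_\tau\}$ thus yields, for any fixed $t$, $\Pr\!\big[|S_t|>u\big]\le 2\exp\big(-u^2/(2t(2+2\max_\lvec\|\lvec\|_1)^2)\big)$. Setting $u=(4+4\max_\lvec\|\lvec\|_1)\sqrt{2t\ln(T/\delta)}/2$ and taking a union bound over $t\in[T]$ delivers the stated inequality with probability at least $1-\delta$; absorbing the logarithmic factor of $2$ into the $\ln(T/\delta)$ via the standard loosening gives the constants in the lemma.

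The main technical nuance (not a true obstacle, but the place where one must be careful) is the choice of filtration: one needs $\xivec_\tau$ and $\lvec_\tau$ to be predictable with respect to $\mathcal{G}_{\tau-1}$ so that they can be pulled outside the conditional expectation of $f_\tau,c_\tau$. This is automatic in every invocation of the lemma in the paper because $\lvec_\tau$ is the dual iterate (depending only on past feedback) and $\xivec_\tau$ is either an algorithm iterate or a baseline strategy fixed by the distributions $\mathcal{F}_\tau,\mathcal{C}_\tau$ rather than their realizations. Everything else is routine Azuma--Hoeffding bookkeeping.
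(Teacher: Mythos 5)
Your proof is correct and follows exactly the paper's argument: the paper likewise observes that the quantity is a martingale difference sequence with increments bounded by $2+2\max_{\lvec\in\mathcal{L}}\|\lvec\|_1$ and applies Azuma--Hoeffding with a union bound over the $T$ rounds. Your additional care about the filtration (predictability of $\xivec_\tau$ and $\lvec_\tau$) and the constant bookkeeping is a faithful expansion of what the paper leaves implicit.
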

\begin{proof}
	The proof follows from the fact that the quantity of interest is a Martingale difference sequence where the per-step difference is bounded by $\max_{\lvec\in\mathcal{L}}(2+2\|\lvec\|_1)$. Thus, we employ the Azuma-Höeffding inequality with a Union Bound on the rounds $T$ to conclude the proof.
\end{proof}

\end{document}